\documentclass{article}

\usepackage[preprint]{neurips_2026}

\usepackage[utf8]{inputenc} %
\usepackage[T1]{fontenc}    %
\usepackage{hyperref}       %
\usepackage{url}            %
\usepackage{booktabs}       %
\usepackage{amsfonts}       %
\usepackage{nicefrac}       %
\usepackage{microtype}      %
\usepackage[dvipsnames]{xcolor}         %
\usepackage{bbm}
\usepackage{amsmath,amssymb,amsthm}
\usepackage{graphicx}
\usepackage{enumitem}
\usepackage{algpseudocode}
\usepackage{wrapfig}

\usepackage{tcolorbox}
\usepackage{subcaption}
\setlist[itemize]{leftmargin=0.5cm}

\newtheorem{theorem}{Theorem}
\newtheorem{lemma}{Lemma}
\newtheorem{corollary}{Corollary}
\newtheorem{proposition}{Proposition}
\newtheorem{definition}{Definition}
\newtheorem{assumption}{Assumption}
\theoremstyle{remark}
\newtheorem{remark}{Remark}

\newcommand{\N}{\mathds{N}}
\newcommand{\Prob}{\mathds{P}}
\newcommand{\SigS}{\Sigma_{\!S}}
\newcommand{\SigA}{\Sigma_{\!A}}
\newcommand{\Rseq}{R^{\mathrm{seq}}}

\usepackage[T1]{fontenc}              %
\usepackage[utf8]{inputenc}           %

\usepackage{amsmath}                  %
\usepackage{amssymb}                  %
\usepackage{amsthm}                   %
\usepackage{dsfont}                   %
\usepackage{bm}                       %

\usepackage{microtype}                %
\usepackage[autostyle]{csquotes}      %

\usepackage{enumitem}                 %
\usepackage{xspace}                   %

\usepackage{lineno}                   %
\usepackage[normalem]{ulem}           %

\usepackage[dvipsnames]{xcolor}       %

\usepackage{graphicx}                 %
\usepackage[labelfont=bf]{caption}    %
\usepackage[format=hang]{subcaption}  %
\usepackage{wrapfig}                  %
\usepackage{adjustbox}                %

\usepackage{booktabs}                 %
\usepackage{array}                    %
\usepackage{multirow}                 %

\usepackage[noabbrev,nameinlink,capitalize]{cleveref}     %
\creflabelformat{equation}{#2\textup{#1}#3} %

\newcommand{\E}{\mathds{E}}                      %
\newcommand{\Ep}[2]{\E_{#1}\left[#2\right]}     %
\newcommand{\Ex}[1]{\E\left[#1\right]}          %
\DeclareMathOperator*{\argmax}{arg\,max}         %
\DeclareMathOperator*{\argmin}{arg\,min}         %

\newcommand{\veps}{\varepsilon}                  %
\newcommand{\dd}{\mathop{}\!\mathrm{d}}          %
\newcommand{\inv}{^{-1}}                         %

\newcommand{\defeq}{\triangleq}                  %

\newcommand{\ie}{i.e.\@\xspace}                  %
\newcommand{\eg}{e.g.\@\xspace}                  %

\newcommand{\mbx}{\mathbf{x}}

\newcommand{\mbP}{\mathbf{P}}

\newcommand{\calA}{\mathcal{A}}
\newcommand{\calB}{\mathcal{B}}

\newcommand{\calE}{\mathcal{E}}
\newcommand{\calF}{\mathcal{F}}
\newcommand{\calG}{\mathcal{G}}
\newcommand{\calH}{\mathcal{H}}

\newcommand{\calK}{\mathcal{K}}
\newcommand{\calL}{\mathcal{L}}

\newcommand{\calN}{\mathcal{N}}
\newcommand{\calO}{\mathcal{O}}

\newcommand{\calQ}{\mathcal{Q}}

\newcommand{\calS}{\mathcal{S}}

\newcommand{\bbN}{\mathbb{N}}

\newcommand{\dsN}{\mathds{N}}

\newcommand{\dsR}{\mathds{R}}

\usepackage{tikz}                     %
\usetikzlibrary{
  backgrounds,                        %
  calc,                               %
  quotes,                             %
  arrows.meta,                        %
  shapes.misc,                        %
  positioning,                        %
  graphs                              %
}

\makeatletter

\usetikzlibrary{shapes,fit,chains,arrows}

\tikzstyle{latent} = [circle,fill=white,draw=black,inner sep=1pt,minimum size=20pt,font=\fontsize{10}{10}\selectfont,node distance=1]
\tikzstyle{obs}    = [latent,fill=gray!25]              %
\tikzstyle{const}  = [rectangle,inner sep=0pt,node distance=1]  %
\tikzstyle{factor} = [rectangle,fill=black,minimum size=5pt,inner sep=0pt,node distance=0.4]  %
\tikzstyle{det}    = [latent,diamond]                   %

\tikzstyle{plate}  = [draw,rectangle,rounded corners,fit=#1]  %
\tikzstyle{wrap}   = [inner sep=0pt,fit=#1]             %
\tikzstyle{gate}   = [draw,rectangle,dashed,fit=#1]     %

\tikzstyle{caption}        = [font=\footnotesize,node distance=0]
\tikzstyle{plate caption}  = [caption,node distance=0,inner sep=0pt,below left=5pt and 0pt of #1.south east]
\tikzstyle{factor caption} = [caption]
\tikzstyle{every label}   += [caption]

\makeatother

\pgfdeclarelayer{edgelayer}
\pgfdeclarelayer{nodelayer}
\pgfsetlayers{edgelayer,nodelayer,main}

\tikzset{>=latex}                     %

\tikzstyle{none}   = [inner sep=0pt]
\tikzstyle{line}   = [-,thick,shorten <=1pt,shorten >=1pt]
\tikzstyle{arrow}  = [->,thick,shorten <=1pt,shorten >=1pt]
\tikzstyle{ardash} = [dashed,->,thick,shorten <=1pt,shorten >=1pt]

\tikzstyle{box}    = [rectangle,minimum width=1.5cm,minimum height=1.5cm,text centered,draw=black,inner sep=7pt]
\tikzstyle{neuron} = [circle,minimum width=4mm,very thick,draw=blue!80!black]
\tikzstyle{empty}  = [circle,opacity=0.0,text opacity=1.0,inner sep=0pt]

\tikzstyle{filled}       = [circle,thick,fill=hexcolor0xbfbfbf,draw=black]
\tikzstyle{hollow}       = [circle,thick,fill=white,draw=black]
\tikzstyle{param}        = [rectangle,fill=black,draw=black,inner sep=0pt,minimum width=4pt,minimum height=4pt]
\tikzstyle{paramhollow}  = [rectangle,thick,fill=white,draw=black,inner sep=0pt,minimum width=4pt,minimum height=4pt]

\usepackage{pgfplots}                 %
\pgfplotsset{compat=newest}           %
\pgfplotsset{plot coordinates/math parser=false}  %
\usepgfplotslibrary{statistics}       %

\newlength\figureheight
\newlength\figurewidth
\newlength\figureheightsmall
\newlength\figurewidthsmall
\makeatletter
\tikzset{
  prefix after node/.style={
    prefix after command={\pgfextra{#1}}
  },
  /semifill/ang/.store in=\semi@ang,
  /semifill/ang=0,
  semifill/.style={
    circle,draw,
    prefix after node={
      \let\nodename\tikz@last@fig@name
      \fill[/semifill/.cd,/semifill/.search also={/tikz},#1]
        let \p1=(\nodename.north),\p2=(\nodename.center) in
        let \n1={\y1-\y2} in
        (\nodename.\semi@ang) arc[radius=\n1,start angle=\semi@ang,delta angle=180];
    },
  }
}
\makeatother

\usepackage{makecell}
\crefname{assumption}{Assumption}{Assumptions}
\Crefname{assumption}{Assumption}{Assumptions}  
\usepackage[ruled]{algorithm2e}
\crefname{algocf}{Algorithm}{Algorithms}
\Crefname{algocf}{Algorithm}{Algorithms}
\DontPrintSemicolon

\SetCommentSty{mycommfont}
\usepackage{mathtools}

\usepackage{siunitx}
\hypersetup{
  colorlinks=true,
  citecolor=MidnightBlue,
  linkcolor=MidnightBlue,
  urlcolor=MidnightBlue
}

\title{A Measure-Theoretic Finite-Sample Theory for Adaptive-Data Fitted Q-Iteration}

\author{%
  Manuel~Hau\ss mann \qquad Mustafa~Mert~Çelikok \qquad Melih~Kandemir\\
  Department of Mathematics and Computer Science \\
  University of Southern Denmark \\
}

\begin{document}

\maketitle

\begin{abstract}
While reinforcement learning (RL) promises to revolutionize the control of complex nonlinear robotic systems, a profound gap persists between the heuristic success of model-free off-policy deep RL and the underlying theory, which remains largely confined to tabular or linearizable settings. We identify the cause of this gap as an emergent isolation of three traditions: (i) measure-theoretic MDP foundations on general spaces limit their analysis to exact dynamic programming and ignore all error sources of a learning process; (ii) deterministic error propagation analysis addresses the approximation error via concentrability coefficients without a finite-sample analysis of the estimation error; and (iii) PAC generalization bounds characterize the estimation errors of simplified topologies. We bridge these traditions with a unified theoretical framework for fitted Q-iteration (FQI) on general measurable Borel spaces. Our main result provides a finite-sample, adaptive-data performance bound by chaining measure-theoretic probability with Bellman-operator contraction in Banach spaces. We prove that sequential Rademacher complexity controls Bellman-regression generalization under policy-dependent data collection. We further extend this analysis to provide the first cumulative, pathwise online regret guarantee for FQI in continuous spaces. These results lay the necessary foundations for the formal analysis of many modern deep RL algorithms.
\end{abstract}

\section{Introduction}
Reinforcement learning (RL) currently inhabits a paradoxical state of unreasonable effectiveness. The most celebrated milestones, the \emph{Sternstunden} that have elevated RL to a core pillar of artificial intelligence, involve tasks of staggering cognitive and physical complexity. Successes range from superhuman mastery in Atari 2600 games \citep{mnih2015human} and board games such as Go \citep{silver2016mastering} to solving the ultra-sparse reward landscapes of \emph{StarCraft II} \citep{vinyals2019grandmaster}, the psychological maneuvering of no-press Diplomacy \citep{meta2022human,bakhtin2023mastering} and the transitioning from digital sandboxes to high-stakes physical systems, such as the real-time magnetic control of nuclear fusion plasmas in Tokamak reactors \citep{degrave2022magnetic}.
Remarkably, these successes rely almost exclusively on a specific recipe: off-policy, model-free RL using nonlinear function approximators in continuous or high-dimensional state-action spaces. Yet despite these triumphs, a rigorous theoretical foundation for this recipe remains elusive. This theory-to-practice gap persists because the fundamental components of modern RL (measure-theoretic generality, distributional stability, and statistical complexity) have evolved into three isolated traditions: 
\begin{itemize}
\item \textbf{Measure-theoretic MDP theory.} The rigorous treatment of Markov decision processes on general (uncountable) Borel state-action spaces was developed in the classical works of \citet{blackwell1965discounted, bertsekas1978, strauch1966negative, schal1975conditions, hernandez1996, hernandez1999} and continued in more modern analyses \citep{feinberg2012average}. These treatments develop the full measure-theoretic state-space machinery ($\sigma$-algebras, stochastic kernels, and analytic sets) to establish Bellman contraction in Banach spaces of bounded measurable functions. Crucially, this work did not extend beyond exact dynamic programming, addressing neither function approximation nor statistical estimation.
\item \textbf{Propagation of approximation error through Bellman backups.} A second tradition, with precursors in weighted-norm contraction analysis \citep{gordon1995stable, tsitsiklis1996analysis}, was initiated by \citet{munos2003,munos2007} and refined by \citet{farahmand2010, scherrer2015, pmlr-v97-geist19a, zhan2022}, focusing on the distribution-mismatch mechanism. These works study how per-iteration approximation errors compound across Bellman backups, utilizing concentrability coefficients to bridge behavior and target distributions. This tradition typically assumes that the underlying MDP infrastructure is well defined and leaves the formal measurability of trajectory measures and operator well-posedness unaddressed.
\item \textbf{Finite-sample analysis of discrete or linear MDPs.} Recent work bridges statistical learning theory and approximate dynamic programming (ADP) by introducing a formal statistical complexity measure to bound excess risk, typically via Rademacher complexity \citep{duan2021} or covering numbers \citep{munos2008}, building on foundational PAC-MDP and regret analyses \citep{kearns2002near, brafman2002r, auer2008near}. More recent online analyses adopt combinatorial measures such as the eluder dimension \citep{wang2020eluder}, Bellman rank \citep{jiang2017contextual}, and bilinear classes \citep{du2021bilinear}. However, these results apply only to finite state spaces or simple linear models. Furthermore, because they rely on i.i.d. sampling assumptions and fixed data distributions, they address the batch setting almost exclusively, failing to account for the online adaptivity required in practical RL applications.
\end{itemize}

\begin{table}[t!]
\centering
\caption{\textbf{RL research traditions.} Each column is a layer of the unified theoretical chain we synthesise: a measure-theoretic state space, formal Bellman contraction in Banach spaces, the distribution-mismatch mechanism, the statistical complexity measure, and whether the framework handles sequential policy-dependent data collection. \emph{n/a} marks components not applicable to the assumed setting. Our work bridges these traditions: measure-theoretic Bellman well-posedness and ADP error propagation combined with sequential Rademacher complexity to deliver finite-sample Bellman-regression and policy-performance guarantees under adaptive data collection.}
\label{tab:landscape}
\small
\adjustbox{max width=0.98\textwidth}{%
\begin{tabular}{@{}lccccc@{}}
\toprule
& \makecell{Measure theoretic\\ state space}
& \makecell{Bellman\\ contraction}
& \makecell{Distribution\\ mismatch\\ mechanism}
& \makecell{Statistical\\ complexity\\ measure}
& \makecell{Adaptive\\ data} \\
\midrule
\citet{bertsekas1978} & proved & proved & n/a & n/a & n/a \\
\citet{hernandez1999} & proved & proved & n/a & n/a & n/a \\
\citet{munos2007}     & assumed & assumed & conct.\ coef. & n/a & n/a \\
\citet{farahmand2010} & assumed & assumed & conct.\ coef. & n/a & n/a \\
\citet{munos2008}     & assumed & assumed & conct.\ coef. & covering number & no \\
\citet{duan2021}      & assumed & assumed & conct.\ coef. & Rademacher & no \\
\citet{jin2020lsvi}   & finite/linear & n/a & linear dim.\ & linear dim.\ & yes \\
\citet{jin2021bellman}& finite & n/a & eluder dim.\ & Bellman--eluder & yes \\
\citet{foster2021dec} & finite & n/a & DEC & decision--estim.\ & yes \\
\citet{dong2021}      & finite & n/a & oracle & seq.\ Rademacher\textsuperscript{*} & yes \\
\midrule
\textbf{This work}   & \textbf{proved} & \textbf{proved} & \textbf{conct.\ coef.} & \textbf{seq.\ Rademacher} & \textbf{yes} \\
\bottomrule
\end{tabular}
}
\begin{flushleft}
\scriptsize \textsuperscript{*}\citet{dong2021} adopt sequential Rademacher complexity but primarily within the context of a regression oracle on simplified topologies.
\end{flushleft}
\end{table}
The current theory-to-practice gap is a direct consequence of the structural fragmentation summarized in \cref{tab:landscape}. Because no existing framework could accommodate all five columns, it has been customary to trade off measure-theoretic rigor against statistical complexity. Specifically, the classical traditions that establish the Borel state space and Bellman contraction properties lack the statistical complexity measures necessary for finite-sample guarantees. Conversely, modern online analyses provide sharp finite-sample bounds, but they almost universally revert to discrete topologies or assume the distribution-mismatch mechanism without proving the necessary measure-theoretic infrastructure. Consequently, no existing regret bound for online RL with function approximation is formally justified on the continuous state-action spaces where the field achieves its greatest successes in applications.

We bridge these isolated traditions with a self-contained theoretical framework for Fitted Q-Iteration (FQI) on general Borel spaces. We anchor our analysis in FQI because it serves as the fundamental algorithmic template for modern off-policy deep RL, where iterative Bellman updates are approximated through supervised regression. We develop an adaptive-data FQI analysis using sequential Rademacher complexity. Our three-stage derivation (covering sequential generalization, per-iteration residuals, and final policy suboptimality) cleanly separates sequential complexity, martingale concentration, and Bellman approximation error. A residual-certificate proposition identifies the controls that upgrade these to a cumulative online regret theorem. We instantiate the framework for linear classes, RKHS, and norm-controlled neural networks (\cref{prop:common-class-seq-complexities}), recovering norm-based $\calO(n^{-1/4})$ residual scaling on continuous spaces and subsuming existing batch theory in the i.i.d.\ specialisation. Such measure-theoretic generality enables finite-sample guarantees on continuous spaces where neither classical exact dynamic programming \citep{bertsekas2025course} nor modern combinatorial-measure analyses \citep{russo2013, wang2020eluder} have been formally justified. Consequently, we provide a complete chain from Borel well-posedness to sequential Rademacher generalization that grounds modern off-policy deep RL on the fundamental limits of Borel transition systems. 

\section{The Measure-Theoretic Reinforcement Learning Framework}
\label{sec:framework}

\textbf{Preliminaries.}
To apply tabular reinforcement learning to general spaces, we establish the necessary measure-theoretic infrastructure, including the joint measurability of transition kernels and the existence of measurable optimal selectors \citep{bertsekas1978}. By defining measurable spaces $(S, \Sigma_S)$ and $(A, \Sigma_A)$ for states and actions, we ensure optimal values and greedy policies are well defined. Let $\Delta(X, \Sigma_X)$ denote the space of probability measures on a measurable space $(X, \Sigma_X)$. For any $B \in \Sigma_X$, define the evaluation mapping ${e_B: \Delta(X, \Sigma_X) \to [0,1]}$ by $e_B(\mu) = \mu(B)$ for $\mu \in \Delta(X, \Sigma_X)$. Equip $\Delta(X, \Sigma_X)$ with the evaluation $\sigma$-algebra ${\Sigma_\Delta \defeq \sigma(\{e_B^{-1}(E) : B \in \Sigma_X, E \in \mathrm{Borel}([0,1])\})}$. Define a \emph{Markov kernel} from $(S, \SigS)$ to $(A, \SigA)$ as a $(\SigS, \Sigma_\Delta)$-measurable mapping $\kappa : S \to \Delta(A, \SigA)$, and write $\calK(S \to A)$ for the space of such kernels. For any measurable space $(X, \Sigma_X)$, denote by $\calB(X, \Sigma_X)$ the Banach space of bounded $\Sigma_X$-measurable real-valued functions on $X$, equipped with the supremum norm $\|f\|_\infty \defeq \sup_{x \in X} |f(x)|$. 

\begin{definition}[MDP]
\label{def:mdp}
An infinite-horizon discounted \emph{Markov Decision Process (MDP)} is a tuple $(S, \SigS, A, \SigA, P, r, \gamma)$, where $P \in \calK(S \times A \to S)$ is the transition kernel, $r \in \calB(S \times A, \SigS \otimes \SigA)$ is a bounded measurable reward function with $\|r\|_\infty \leq R_{\max}$, $\gamma \in [0,1)$ is the discount factor, and the set of all stationary Markov policies is denoted by $\Pi \defeq \calK(S \to A)$.
\end{definition}

To analyze iterative RL convergence, we view action values not as look-up tables but as elements of a structured function space. Let ${\calQ \defeq \calB(S \times A, \SigS \otimes \SigA)}$ denote the Banach space of bounded measurable action-value functions. We work in this space because Banach completeness ensures iterative updates have well-defined limits. In this general setting, Bellman operators must be redefined as functional mappings on $\calQ$ so recursive updates preserve topological and measurable structure.
\begin{definition}[Bellman operators]
\label{def:bellman}
The \emph{Bellman expectation operator} $T^\pi : \calQ \to \calQ$ and the \emph{Bellman optimality operator} $T^* : \calQ \to \calQ$ are defined for any $Q \in \calQ$ by:
\begin{align}
(T^\pi Q)(s,a) &\defeq r(s,a) + \gamma \int_S \int_A Q(s', a') \pi(da'|s') P(ds'|s,a), \label{eq:bellman_pi} \\
(T^* Q)(s,a) &\defeq r(s,a) + \gamma \int_S \sup_{a' \in A} Q(s', a') P(ds'|s,a). \label{eq:bellman_star}
\end{align}
For a fixed policy $\pi$, we define the policy-conditioned transition operator $P^\pi : \calQ \to \calQ$ by $(P^\pi Q)(s,a) \defeq \int_S \int_A Q(s',a')\, \pi(da'|s')\, P(ds'|s,a)$. Since $P$ and $\pi$ are probability kernels, $P^\pi$ is a nonexpansive linear operator, i.e., $\|P^\pi\|_\infty \leq 1$. Under this notation, the expectation operator satisfies $(T^\pi Q)(s,a) = r(s,a) + \gamma (P^\pi Q) (s,a)$. 
\end{definition}

\begin{assumption}[Action-space regularity]\label{asm:action-reg}
$S$ is a Borel space and $(A, \SigA)$ is a compact metrizable topological space
equipped with its Borel $\sigma$-algebra. Moreover:
\begin{enumerate}[label=(\roman*),leftmargin=*]
    \item $r$ is upper semicontinuous (USC; see \cref{def:usc}) on $S \times A$;
    \item the transition kernel $P$ is weakly continuous on $S \times A$:
    for every bounded continuous ${h : S \to \dsR}$, the map
    $(s,a) \mapsto \int_S h(s')\,P(ds'|s,a)$ is continuous on $S \times A$;
    \item the class $\calQ_0 \subseteq \calQ$ on which the operators act
    consists of bounded measurable $Q$ that are jointly USC on $S \times A$.
\end{enumerate}
\end{assumption}

Under condition~(iii) and the compactness of $A$, 
Berge's maximum theorem yields that ${s'\mapsto \sup_{a'}Q(s',a')}$ is USC on $S$ (in particular Borel-measurabe), and conditions~(i)--(ii) guarantee that joint USC is preserved under iteration of $T^*$ (\cref{lem:usc-propagation}). Two further hurdles complete the infrastructure:
the existence of measurable $\varepsilon$-optimal policies (selectors), guaranteed
by the Jankov--von Neumann theorem under the analytic-set machinery of
\citet{bertsekas1978}; and the rigorous construction of trajectory measures via
the Ionescu--Tulcea theorem rather than simple product measures. By addressing
these foundational hurdles, our results achieve generality where the
continuous-space structure becomes theoretically decisive.

\subsection{Exact Dynamic Programming}\label{sec:exact_dp}

Building on the framework above, we now characterise reinforcement learning in the limit of infinite representational capacity. The foundation is the contraction property of the Bellman operators on the Banach space $(\calQ, \|\cdot\|_\infty)$, which yields existence and uniqueness of fixed points for the measure-theoretically defined operators and ensures the mathematical consistency of $Q^*$ and $Q^\pi$.

\begin{theorem}[Bellman contraction {\citep[e.g.,][]{bertsekas1978}}]\label{thm:bellman_contraction}
For every measurable policy $\pi$, the Bellman expectation operator $T^\pi$ is a self-map on $\calQ$ and a $\gamma$-contraction on $(\calQ, \|\cdot\|_\infty)$. Under \cref{asm:action-reg}, the Bellman optimality operator $T^*$ is a self-map on $\calQ_0$ and a $\gamma$-contraction on $(\calQ_0, \|\cdot\|_\infty)$: for all $Q, \bar{Q}$ in the respective domains,
\begin{equation}
\|T^\pi Q - T^\pi \bar{Q}\|_\infty \leq \gamma \|Q - \bar{Q}\|_\infty, \qquad
\|T^* Q - T^* \bar{Q}\|_\infty \leq \gamma \|Q - \bar{Q}\|_\infty.
\label{eq:contraction}
\end{equation}
As $\calQ$ is a Banach space and $\calQ_0$ is closed in it under uniform convergence, the Banach fixed-point theorem yields unique fixed points $Q^\pi \in \calQ$ and $Q^* \in \calQ_0$ satisfying $T^\pi Q^\pi = Q^\pi$ and $T^* Q^* = Q^*$.
\end{theorem}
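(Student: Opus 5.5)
The argument has two parts. First I show that each operator maps its domain into itself, which is the measure-theoretic part. Then I verify the $\gamma$-contraction inequality~\eqref{eq:contraction} by a short sup-norm estimate and invoke Banach's fixed-point theorem on a complete metric space.

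\textbf{Step 1: $T^\pi$ is a self-map of $\calQ$.} Let $Q \in \calQ$. I would first show that $s' \mapsto \int_A Q(s',a')\,\pi(da'|s')$ is $\SigS$-measurable. For indicators $Q = \mathbbm{1}_{B\times C}$ this function is $\mathbbm{1}_B(s')\,\pi(C|s')$, which is measurable because $\pi$ is $(\SigS,\Sigma_\Delta)$-measurable and $e_C$ is measurable. A monotone class ($\pi$-$\lambda$) argument extends this to indicators of all sets in $\SigS\otimes\SigA$. Passing to simple functions and then to bounded limits by monotone convergence covers all of $\calQ$. Applying the same argument to the kernel $P$ shows that $(s,a)\mapsto\int_S(\cdot)\,P(ds'|s,a)$ is measurable. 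Hence $P^\pi Q \in \calQ$ with $\|P^\pi Q\|_\infty \le \|Q\|_\infty$, and so $T^\pi Q = r + \gamma P^\pi Q \in \calQ$.

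\textbf{Step 2: $T^*$ is a self-map of $\calQ_0$.} This is the main obstacle, since $\sup_{a'}Q(s',a')$ is a supremum over an uncountable set and need not be measurable in general. Here I would rely on \cref{asm:action-reg}. For $Q$ jointly USC and $A$ compact metrizable, Berge's maximum theorem (upper half) gives that $v_Q(s') \defeq \sup_{a'}Q(s',a') = \max_{a'}Q(s',a')$ is USC, and therefore Borel and bounded by $\|Q\|_\infty$. Next I must show that $(s,a)\mapsto\int v_Q\,dP(\cdot|s,a)$ is USC. I would write the bounded USC function $v_Q$ as a decreasing limit of bounded continuous functions $h_k \downarrow v_Q$, using Baire's theorem on the metrizable Borel space $S$. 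Each map $(s,a)\mapsto\int h_k\,dP$ is continuous by weak continuity (ii). By monotone convergence the integrals decrease to $\int v_Q\,dP$, and an infimum of continuous functions is USC. Adding the USC reward $r$ from (i) keeps the sum USC, so $T^*Q \in \calQ_0$. This is the content of \cref{lem:usc-propagation}, which I would cite.

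\textbf{Step 3: contraction.} For $T^\pi$, linearity gives $T^\pi Q - T^\pi\bar Q = \gamma P^\pi(Q-\bar Q)$, so nonexpansiveness yields the bound $\gamma\|Q-\bar Q\|_\infty$. For $T^*$, I use the pointwise inequality $|\sup_{a'}Q(s',a') - \sup_{a'}\bar Q(s',a')| \le \sup_{a'}|Q-\bar Q|(s',a') \le \|Q-\bar Q\|_\infty$. Integrating against the probability measure $P(\cdot|s,a)$ and multiplying by $\gamma$ gives the claim.

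\textbf{Step 4: fixed points.} $\calQ$ is complete, because a uniform limit of bounded measurable functions is again bounded and measurable. $\calQ_0$ is closed in $\calQ$: if $Q_n\to Q$ uniformly with each $Q_n$ USC, then for every $\epsilon>0$ the superlevel set $\{Q\ge c\}$ lies inside $\{Q_n \ge c-\epsilon\}$ for all large $n$, and a short argument with $\limsup$ shows $Q$ is USC. Alternatively one notes directly that $\limsup_{x\to x_0} Q(x) \le \limsup_{x\to x_0} Q_n(x) + \epsilon \le Q_n(x_0)+\epsilon \le Q(x_0)+2\epsilon$. Hence $\calQ_0$ is a complete metric space. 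Banach's fixed-point theorem then yields unique fixed points $Q^\pi\in\calQ$ and $Q^*\in\calQ_0$, obtained as limits of Picard iterates from any starting point.
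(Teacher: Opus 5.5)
Your proposal is correct and follows the paper's proof essentially step for step. The paper simply cites Prop.~7.29 of Bertsekas and Shreve for kernel-integral measurability of $T^\pi$, where you supply the monotone-class argument. For USC propagation under $T^*$, both use Berge's theorem plus weak continuity; your decreasing continuous approximation via Baire's theorem is the mechanism behind the paper's Portmanteau step for USC integrands. The remaining steps coincide: the sup-norm contraction estimate, and closedness of $\calQ_0$ under uniform limits before applying Banach's fixed-point theorem.
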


These fixed points admit a direct trajectory-measure characterisation on general Borel spaces: for any measurable policy $\pi$, the Ionescu--Tulcea theorem uniquely determines the probability measure over trajectories ${(S_0, A_0, S_1, A_1, \ldots) \in (S \times A)^\N}$. Tonelli–Fubini identifies the action-value function of a fixed policy $\pi$ as
$Q^\pi (s,a) = r(s,a) + \sum_{t \geq 1}\gamma^t \int r(s_t,a_t)\,P^\pi(\dd s'|s,a)$
and ${Q^* = \sup_{\pi \in \Pi}Q^\pi}$ admits a measurable greedy selector via the Jankov–von Neumann theorem. We write ${V^\pi(s) \defeq \int_A Q^\pi(s,a)\,\pi(\dd a|s)}$ and ${V^*(s) \defeq \sup_{a \in A}Q^*(s,a)}$ for the value functions. The standard policy-iteration analysis on this Banach space, combining the monotonicity of $T^\pi$ and $T^*$ with greedy identity ${T^{\pi_{k+1}}Q^{\pi_k} = T^* Q^{\pi_k}}$, then yields monotonic improvement, the dominance chain ${Q_k \leq Q^{\pi_k} \leq Q^*}$, and uniform convergence ${\|Q^* - Q^{\pi_k}\|_\infty \to 0}$ (\cref{thm:policy_iteration_combined}, deferred to \cref{appsec:proofs}). The dominance gap, with policy evaluation as infinite-horizon lookahead and value iteration applying $T^*$ once, motivates the structural hierarchy exploited by modern actor–critic methods.

\subsection{Approximate Dynamic Programming}\label{sec:approx_dp}
\begin{wrapfigure}{r}{0.52\textwidth}
  \vspace{-\baselineskip}
  \begin{algorithm}[H]
    \caption{Off-policy Fitted Q-Iteration (FQI)}\label{alg:fqi}
    \KwIn{function class $\calF \subset \calQ$, discount $\gamma$,
          sampling measures $\{\mu_k\}_{k=0}^{K-1} \subset \Delta(S \times A)$,
          iterations $K$, initial $Q_0 \in \calF$}
    \For{$k = 0, 1, \ldots, K-1$}{
      $h_k \gets T^{*} Q_k$ \tcp*[r]{Bellman target (greedy)}
      $Q_{k+1} \gets \Pi_{\calF,\mu_k}\, h_k$ \tcp*[r]{project onto $\calF$}
    }
    \KwOut{$Q_K$;\ \ greedy policy $\pi_K(s) \in \argmax_{a \in A} Q_K(s,a)$}
  \end{algorithm}
  \vspace{-\baselineskip}
\end{wrapfigure}

Transitioning from exact dynamic programming to practical machine learning requires bridging operator theory with statistical learning theory. We restrict action-values to a parameterized function class $\calF \subset \calQ$, transforming Bellman updates into a data-driven approximation task. Since tabular action-value representation is impossible here, the standard approach is \emph{Fitted Q-Iteration (FQI)}, summarized in \cref{alg:fqi}: 
starting from an initial $Q_0 \in \calF$, each iteration $k$ applies the Bellman optimality operator and projects the result onto $\calF$ via ${\Pi_{\calF,\mu_k} H \defeq \argmin_{f \in \calF}\|f - H\|_{2,\mu_k}}$. Here, $\{\mu_k\} \subset \Delta(S \times A)$ is a sequence of sampling measures indexed by iteration (equivalently, time) $k$, and need not be fixed or pre-scheduled; the empirical realisations of these measures and the attendant estimation error are deferred to \cref{sec:application}. \cref{alg:fqi} therefore instantiates \emph{off-policy} FQI: data actions may be drawn under an arbitrary behavior policy, while the Bellman target $h_k = T^* Q_k$ uses the greedy operator $\sup_{a' \in A} Q_k(s', a')$. The classical batch case corresponds to the i.i.d.\ specialisation $\mu_k \equiv \mu$. If $\calF$ is convex and closed in $L^2(\mu)$, the projection $\Pi_{\calF,\mu}$ is a \emph{non-expansion} in the $L^2(\mu)$ norm, meaning distances do not grow in the weighted-average sense. However, $\Pi_{\calF,\mu}$ is not a non-expansion in the supremum norm $\|\cdot\|_\infty$, so the composite operator $\Pi_{\calF,\mu} \circ T^*$ may fail to be a contraction.
This norm mismatch between the $L^2(\mu)$ geometry of empirical regression and the $L^\infty$ geometry of Bellman updates is the fundamental tension driving our analysis. \cref{lem:error_prop_greedy} (\cref{appsec:proofs}) quantifies how per-step Bellman residuals $\veps_k \defeq Q_{k+1} - T^* Q_k$ propagate across iterations, recovering the classical $2\gamma\veps/(1-\gamma)^2$ rate of \citet{munos2003,munos2007} in the sup-norm specialisation ${\|\veps_k\|_\infty \leq \veps}$.

\textbf{Distribution mismatch and concentrability.} Bridging the gap between the training distribution $\mu$ and the evaluation measure $\rho$, representing the test-time initial-state distribution, requires quantifying how $L^2(\mu)$ function-approximation errors propagate to $L^1(\rho)$ performance loss. A \emph{non-stationary Markov policy} is a sequence $\bar\pi = (\pi_0, \pi_1, \ldots)$ with each $\pi_t \in \Pi$. We write $\bar\Pi$ for the class of such sequences. Under \cref{asm:action-reg} and the Ionescu--Tulcea construction of \cref{sec:exact_dp}, $\bar\pi$ together with $P$ induces a unique trajectory measure on $(S\times A)^{\dsN}$ from any initial distribution. Writing $P^{\bar\pi}_t$ for the state--action marginal at step $t$ of this process, initialised at $s_0 \sim \rho$, $a_0 \sim \pi_0(\cdot|s_0)$, we formalise the distribution-mismatch mechanism via the \emph{(all-policy) uniform-marginal concentrability coefficient}
\begin{equation}\label{eq:concentrability}
    C \defeq \sup_{\bar\pi \in \bar\Pi,\ t \geq 0}
    \Bigl\|\tfrac{\dd P^{\bar\pi}_t}{\dd\mu}\Bigr\|_\infty,
\end{equation}
which uniformly dominates every state--action marginal generated by an arbitrary non-stationary Markov policy initialised at $s_0 \sim \rho$. 
This concentrability coefficient can be viewed as an $\veps$-coverage condition: 
the sampling distribution $\mu$ assigns non-negligible mass to all state–action pairs that may be visited under relevant policies. This assumption quantifies the degree of distribution shift between the data-generating process and the policies induced during learning. Such coverage conditions are standard in off-policy reinforcement learning and are effectively necessary: without sufficient overlap between $\mu$ and the occupancy measures of target policies, consistent value estimation is impossible in general without additional structure. In particular, impossibility results for offline reinforcement learning with function approximation \citep{foster2022impossibility} show that some form of coverage, density-ratio control, or pessimism is required for stable off-policy learning. We therefore treat bounded concentrability as a natural and widely adopted assumption in our setting.

This strengthens the classical discounted-occupancy concentrability of \citet{munos2008}, which controls only the time-averaged mixture $(1-\gamma)\sum_{t \geq 0}\gamma^t P^{\bar\pi}_t$ of a single non-stationary policy. This is required because \cref{lem:error_prop_greedy} bounds $|Q_k - Q^*|$ pointwise by a sum of products of greedy-policy Markov kernels indexed by distinct starting rounds, which the per-marginal supremum \eqref{eq:concentrability} controls directly. While recent literature often considers a weaker single-policy variant $C^* = \sup_{t}\|\dd P^{\pi^*}_t / \dd\mu\|_\infty$ \citep{rashidinejad2021}, we retain the all-policy form because vanilla FQI lacks explicit pessimism and provably requires global coverage \citep{foster2022impossibility}. Coupling \cref{lem:error_prop_greedy} with a Cauchy--Schwarz change of measure controlled by \eqref{eq:concentrability} yields the deterministic propagation bound of \cref{thm:dist_mismatch} (\cref{appsec:proofs}), exposing the quadratic $1/(1-\gamma)^2$ penalty as a fundamental consequence of off-policy distribution mismatch absent density ratios or pessimism, generalising \citet{bertsekas1996, kakade2002approximately} to weighted $L^2(\mu)$ approximation.

\textbf{Empirical estimation.} The final step from operators to machine learning algorithms is to replace the population expectations in $\Pi_{\calF,\mu}$ and $T^*$ with empirical counterparts from a \emph{replay buffer}, introducing statistical variance on top of the structural approximation error. Under the classical fresh-batch protocol (independent i.i.d.\ datasets across $K$ rounds of FQI), the per-iteration Bellman residual decomposes into a transition–target noise term, a sampling/design term, and the inherent Bellman approximation error (\cref{thm:decomp}); coupling this with \cref{thm:dist_mismatch} and a global Rademacher uniform-convergence bound for the squared Bellman loss yields a finite-time policy-loss guarantee with an $\calO(n^{-1/4})$ steady-state error floor (\cref{thm:fqi_bound,cor:fqi_bound_global_rademacher}). We defer the formal protocol (\cref{asm:fqi-protocol}), the statements, and the proofs to \cref{appsubsec:batch-fqi,appsubsec:fqi-bound}.

\section{Adaptive-Data FQI via Sequential Rademacher Complexity}\label{sec:adaptive-fqi}

In modern off-policy deep RL \citep{haarnoja2018soft,fujimoto2018addressing,chenrandomized}, the agent populates a replay buffer while interacting with the environment, inducing a history-dependent, non-i.i.d\ sampling process. At every round $k$, a behavior policy derived from the current iterate $\widehat Q_k$ collects a batch of transitions whose conditional law depends on the entire interaction history. Such adaptive, policy-dependent data collection violates the i.i.d.\ hypothesis underlying \cref{thm:fqi_bound,cor:fqi_bound_global_rademacher}. We extend the analysis to this adaptive regime by combining the residual-propagation machinery of \cref{thm:dist_mismatch} with sequential Rademacher complexity, which controls empirical-process deviations along arbitrary predictable trees \citep{rakhlin2015}. We follow a three-stage logical sequence: first, a sequential generalization theorem for adaptive Bellman regression (\cref{thm:seq-bellman-generalization}); second, a per-iteration FQI update-residual bound (\cref{thm:adaptive-fqi-residual}); and third, a final greedy-policy performance guarantee under adaptive concentrability (\cref{thm:adaptive-fqi-performance}). A residual-certificate result (\cref{prop:online-regret-residual-certificate}) addresses control of the diagonal Bellman residual ${\widehat Q_t - T^* \widehat Q_t}$ required to upgrade this final-policy guarantee to a cumulative pathwise online regret theorem. \cref{app:true-online-fqi-discussion} discusses the necessary coverage and algorithmic ingredients (including state-action concentrability over $\Pi_\calF$, diagonal residual minimization, iterate stability, and optimism) and explains why these requirements are kept out of the main result. We emphasise that the analysis is strictly off-policy: the data-generating distribution need not match the occupancy measure of the learned or optimal policy, and the mismatch handled via concentrability coefficients rather than on-policy sampling assumptions. 

\subsection{Adaptive transition data and residual classes}\label{subsec:adaptive-data}

To analyze FQI under adaptive data collection, we characterize the interaction between the function class $\calF$ and the non-i.i.d. sampling process. We work on the transition space ${Z \defeq S \times A \times [-R_{\max}, R_{\max}] \times S}$ with generic element $z = (s, a, r, s')$. Because both the Bellman operator $T^*$ and the empirical loss $\widehat{L}_k$ evaluate functions in $\calF$ at next-state transitions, we first ensure that all quantities remain bounded. This is the role of the uniform bound $B$, which appeared in the i.i.d. analysis (\cref{thm:fqi_bound}) and now grounds the \emph{residual envelope} used to bound empirical deviations.
\begin{assumption}[Bounded function class]\label{asm:f-bounded}
  The class $\calF \subseteq \calQ_0$ is uniformly bounded: ${\sup_{f \in \calF} \|f\|_\infty \leq B}$, with ${B \geq R_{\max}/(1-\gamma)}$ so that $\|Q^*\|_\infty \leq B$. Its \emph{residual envelope} is $B_{\mathrm{res}} \defeq R_{\max} + (1+\gamma)\,B$.
\end{assumption}
The following \emph{adaptive transition batches} relax \cref{asm:fqi-protocol}: the conditional law of each transition may depend arbitrarily on the past data and on the current iterate, yet, conditional on the current state-action pair, the reward and next state are still drawn from the true MDP. 

\begin{assumption}[Adaptive transition batches]\label{asm:adaptive-batches}
Let $(\calH_k)_{k \geq 0}$ be the filtration generated by all data and fitted iterates produced before round $k$. At round $k$, the iterate $\widehat Q_k \in \calF$ is $\calH_k$-measurable. After $\widehat Q_k$ is fixed, the algorithm observes a batch
    $D_k = \{Z_{k,i}\}_{i=1}^n$, $Z_{k,i} = (S_{k,i}, A_{k,i}, R_{k,i}, S_{k,i}')$,
of $n$ transitions such that the conditional law of $Z_{k,i}$ may depend on $\calH_k$ and on $Z_{k,1}, \ldots, Z_{k,i-1}$, but, conditionally on $(S_{k,i}, A_{k,i})$, the reward and next state are generated by the true MDP:
\begin{equation*}
    \Ex{R_{k,i} \mid S_{k,i}, A_{k,i}} = r(S_{k,i}, A_{k,i}),
    \qquad
    S_{k,i}' \sim P(\cdot \mid S_{k,i}, A_{k,i}).
\end{equation*}
Let $\mu_{k,i}$ denote the conditional law of $(S_{k,i}, A_{k,i})$ given $\calH_k$ and the previous samples within batch $k$. The \emph{predictable average design measure} and its $L^2$ norm at round $k$ are
\begin{equation}\label{eq:bar-mu-k}
    \bar\mu_k \defeq \frac{1}{n}\sum_{i=1}^n \mu_{k,i},
    \qquad
    \|h\|_{2,\bar\mu_k}^2 \defeq \frac{1}{n}\sum_{i=1}^n \Ex{h(S_{k,i}, A_{k,i})^2 \mid \calH_k, Z_{k,1}, \ldots, Z_{k,i-1}}.
\end{equation}
\end{assumption}
This formulation is agnostic to the data-collection mechanism: sampling may depend arbitrarily on past data and the current iterate.
The samplewise Bellman labels are $Y_{k,i} \defeq R_{k,i} + \gamma \sup_{a' \in A} \widehat Q_k(S_{k,i}', a')$, and the empirical squared Bellman loss is ${\widehat L_k(f; \widehat Q_k) \defeq \frac{1}{n}\sum_{i=1}^n (f(S_{k,i}, A_{k,i}) - Y_{k,i})^2}$, as in \cref{asm:fqi-protocol}. While exact Bellman minimization is typically assumed, practical deep RL relies on iterative solvers reaching only a neighborhood of the minimum. We generalize to accommodate approximate optimization and adaptive data collection.
\begin{assumption}[Approximate Bellman ERM]\label{asm:approx-erm}
At every round $k \in \{0, \ldots, K-1\}$, the iterate $\widehat Q_{k+1} \in \calF$ satisfies
$\widehat L_k(\widehat Q_{k+1}; \widehat Q_k) \;\leq\; \inf_{f \in \calF} \widehat L_k(f; \widehat Q_k) + \veps_{\mathrm{opt},k}^2$ for some optimization tolerance $\veps_{\mathrm{opt},k} \geq 0$. We deliberately state the slack as $\veps_{\mathrm{opt},k}^2$ in squared-loss units so that $\veps_{\mathrm{opt},k}$ itself appears in $L^2(\bar\mu_k)$-residual units in \cref{thm:adaptive-fqi-residual,thm:adaptive-fqi-performance}.
\end{assumption}

These three structural assumptions are standard in theoretical analyses of off-policy RL and are routinely satisfied or approximated in practical deep RL implementations: coverage (\cref{asm:adaptive-concentrability}, introduced below), bounded function classes (\cref{asm:f-bounded}), and approximate empirical-risk minimisation (\cref{asm:approx-erm}).
The empirical-process step is phrased on transition-level function classes built from $\calF$. For ${f, g \in \calF}$, define the \emph{sample Bellman residual} $\widetilde g_{f,g}(z) \defeq f(s,a) - r - \gamma \sup_{a' \in A} g(s', a')$ for $\widetilde\calG_\calF \defeq \{\widetilde g_{f,g} : f, g \in \calF\}$, the \emph{population Bellman difference} $ h_{f,g}(s,a) \defeq f(s,a) - T^* g(s,a)$ for $\calH_\calF \defeq \{h_{f,g} : f, g \in \calF\}$, and the \emph{squared-loss class} $\widetilde\calL_\calF \defeq \{z \mapsto \widetilde g_{f,g}(z)^2 : f, g \in \calF\}$. Whenever ${Z = (S, A, R, S')}$ is generated from the MDP conditional on ${(S,A) = (s,a)}$, the two classes are linked by ${h_{f,g}(s, a) = \Ex{\widetilde g_{f,g}(Z) \mid S = s, A = a}}$. Under \cref{asm:f-bounded} and \cref{lem:bellman-measurability}(i), every $\widetilde g \in \widetilde\calG_\calF$ satisfies $\|\widetilde g\|_\infty \leq B_{\mathrm{res}}$, and every $\ell \in \widetilde\calL_\calF$ satisfies $\|\ell\|_\infty \leq B_{\mathrm{res}}^2$. 

We control the empirical process via \emph{sequential Rademacher complexity}, which generalizes classical Rademacher complexity from i.i.d.\ samples to the \emph{predictable trees} of \citet{rakhlin2015} and matches the martingale structure of adaptive Bellman residuals. The trade-off versus eluder-based alternatives \citep{russo2013, wang2020eluder, jin2021bellman} follows \cref{thm:adaptive-fqi-performance}.

\begin{definition}[\citealp{rakhlin2015}]\label{def:seq_rad}
    Let $\calA$ be a class of measurable functions $a : Z \to \dsR$. A $Z$-valued tree $\mbx$ of depth $n$ is a sequence of mappings $(x_1, \ldots, x_n)$ with ${x_t : \{-1, +1\}^{t-1} \to Z}$. The (unnormalised) \emph{sequential Rademacher complexity} of $\calA$ over $n$ rounds is
    \begin{equation}\label{eq:seq_rad}
        \Rseq_n(\calA) \defeq \sup_{\mbx}\, \E_\veps \biggl[\sup_{a \in \calA}\, \sum_{t=1}^n \veps_t\, a\bigl(x_t(\veps_1, \ldots, \veps_{t-1})\bigr)\biggr],
    \end{equation}
    where ${\veps = (\veps_1, \ldots, \veps_n)}$ are i.i.d.\ Rademacher signs.
\end{definition}

When the tree is constant, $\Rseq_n$ reduces to the i.i.d.\ Rademacher complexity ${R_n(\calA)}$, and one always has ${R_n(\calA) \leq \Rseq_n(\calA)}$. Statistical rates therefore depend on the normalised quantity $\Rseq_n(\calA)/n$, which scales as $1/\sqrt n$ for the parametric and bounded-norm classes treated in \cref{sec:application}.

\subsection{Sequential generalization for adaptive Bellman regression}\label{subsec:seq-generalization}

To quantify statistical error under adaptive sampling, we establish a uniform concentration bound for the Bellman losses using the sequential Rademacher framework. The empirical and conditional-mean squared Bellman losses at round $k$ are defined, respectively, as 
\begin{align*}
 \widehat L_k(f; g) \defeq \frac{1}{n}\sum_{i=1}^n \widetilde g_{f,g}(Z_{k,i})^2 \quad \text{and} \quad  L_k(f; g) \defeq \frac{1}{n}\sum_{i=1}^n \Ex{\widetilde g_{f,g}(Z_{k,i})^2 \mid \calH_k, Z_{k,1}, \ldots, Z_{k,i-1}}.  
\end{align*}
The next result provides a finite-sample bound on the uniform deviation between these empirical and population quantities using the sequential Rademacher complexity of the Bellman loss class.
\begin{theorem}[Sequential generalization for adaptive Bellman regression]\label{thm:seq-bellman-generalization}
    Under \cref{asm:action-reg,asm:f-bounded,asm:adaptive-batches}, for every ${\delta \in (0,1)}$ and every round $k$, conditional on the history $\calH_k$, with probability at least $1 - \delta$,
    \begin{equation}\label{eq:seq-gen-Lhat}
        \sup_{f, g \in \calF}\, \bigl|L_k(f; g) - \widehat L_k(f; g)\bigr|
        \;\leq\; \alpha_k(\delta)
        \;\defeq\; \frac{2}{n}\,\Rseq_n(\widetilde\calL_\calF) + B_{\mathrm{res}}^2 \sqrt{\frac{2 \log(2/\delta)}{n}}.
    \end{equation}
    By the sequential contraction principle, $\Rseq_n(\widetilde\calL_\calF) \leq 2 B_{\mathrm{res}}\, \Rseq_n(\widetilde\calG_\calF)$, so $\alpha_k(\delta)$ may be replaced by
    \begin{equation}\label{eq:seq-gen-alpha-prime}
        \alpha_k'(\delta) \defeq \frac{4 B_{\mathrm{res}}}{n}\,\Rseq_n(\widetilde\calG_\calF) + B_{\mathrm{res}}^2 \sqrt{\frac{2 \log(2/\delta)}{n}}.
    \end{equation}
\end{theorem}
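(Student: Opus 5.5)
Conditioning on $\calH_k$ throughout, I treat the batch $D_k=(Z_{k,1},\dots,Z_{k,n})$ as an adapted sequence with respect to the within-batch filtration $\calF_{i}\defeq\sigma(\calH_k,Z_{k,1},\dots,Z_{k,i})$. For each $\ell=\widetilde g_{f,g}^2\in\widetilde\calL_\calF$, the increments $\xi_i(\ell)\defeq \Ex{\ell(Z_{k,i})\mid\calF_{i-1}}-\ell(Z_{k,i})$ form a martingale difference sequence. By definition, $n(L_k(f;g)-\widehat L_k(f;g))=\sum_i\xi_i(\ell)$. The quantity to bound is therefore $\Phi\defeq\sup_{\ell\in\widetilde\calL_\calF}\bigl|\frac1n\sum_i\xi_i(\ell)\bigr|$, the uniform deviation of a martingale empirical process indexed by a class uniformly bounded by $B_{\mathrm{res}}^2$. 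This uses \cref{asm:f-bounded} and the bound $\|\ell\|_\infty\le B_{\mathrm{res}}^2$ already recorded, whose measurability is supplied by \cref{lem:bellman-measurability}. One caveat: $\widehat Q_k$ is only used through $\calH_k$ and the supremum runs over all $f,g$, so no dependence on the iterate enters the class.

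The proof has two steps. \emph{Expectation step:} I would invoke the symmetrization lemma of \citet{rakhlin2015} for dependent sequences. Through the tangent-sequence/tree construction it gives $\Ex{\sup_\ell\sum_i\xi_i(\ell)}\le 2\Rseq_n(\widetilde\calL_\calF)$, and the same bound holds for $-\xi$ since the class $\{-\ell\}$ has the same sequential Rademacher complexity. Dividing by $n$ yields the term $\frac2n\Rseq_n(\widetilde\calL_\calF)$. \emph{Concentration step:} I would show $\Phi$ concentrates around its conditional mean by applying the Azuma–McDiarmid bounded-differences inequality to the Doob martingale $M_j\defeq\Ex{\Phi\mid\calF_j}$. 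Each $\ell$ takes values in $[0,B_{\mathrm{res}}^2]$, so each $\xi_i(\ell)$ has range at most $B_{\mathrm{res}}^2$, and changing $Z_{k,i}$ moves $\Phi$ by at most about $2B_{\mathrm{res}}^2/n$. This produces a tail $\exp(-n t^2/(2B_{\mathrm{res}}^4))$ per side. A union bound over the two signs then gives $\log(2/\delta)$ and the term $B_{\mathrm{res}}^2\sqrt{2\log(2/\delta)/n}$, up to the constant convention.

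I expect the concentration step to be the main obstacle. Under adaptive sampling, changing $Z_{k,i}$ also changes the conditional laws of later samples, so the plain McDiarmid argument is unavailable. Instead of concentrating $\Phi$ directly, I would follow \citet{rakhlin2015} and work with a high-probability sequential symmetrization. Alternatively, I would bound $\Prob(\sup_\ell \sum_i\xi_i(\ell)>t)$ through their tail version of the symmetrization lemma, whose deviation term is exactly of Azuma form with increments bounded by the range $B_{\mathrm{res}}^2$. I accept that the constants in \eqref{eq:seq-gen-Lhat} come from that lemma rather than from re-deriving them.

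Finally, for \eqref{eq:seq-gen-alpha-prime} I apply the sequential contraction (Lipschitz composition) lemma of \citet{rakhlin2015} with $\phi(u)=u^2$. On $[-B_{\mathrm{res}},B_{\mathrm{res}}]$ this map is $2B_{\mathrm{res}}$-Lipschitz, and every $\widetilde g\in\widetilde\calG_\calF$ takes values there. Hence $\Rseq_n(\widetilde\calL_\calF)\le 2B_{\mathrm{res}}\Rseq_n(\widetilde\calG_\calF)$. The sequential contraction lemma is known to carry a possible logarithmic factor in general; for a plan I take the stated form as given by the cited result. Substituting this into $\alpha_k(\delta)$ gives $\alpha_k'(\delta)$.
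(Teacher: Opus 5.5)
Your overall route is the paper's. You condition on $\calH_k$ so the batch is adapted to the within-batch filtration; $\widehat Q_k$ then drops out because the supremum runs over all pairs $(f,g)$. You apply the sequential uniform-convergence inequality of \citet{rakhlin2015} to $\widetilde\calL_\calF$ with envelope $B_{\mathrm{res}}^2$, and you pass to $\widetilde\calG_\calF$ by sequential contraction for $u\mapsto u^2$. The paper does not split the first part into an expectation step and a concentration step. It records the whole bound as a cited lemma (sequential symmetrisation plus a martingale Azuma-type step), which is where you also end up after correctly noting that plain McDiarmid fails under adaptive sampling. So \eqref{eq:seq-gen-Lhat} is obtained the same way in both, with its constants inherited from the cited result.

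The step you hedge on is a genuine gap, and the paper's proof has the same one. The sequential Lipschitz-composition lemma of \citet{rakhlin2015}, which the paper also cites, does not give $\Rseq_n(\phi\circ\calG)\le L\,\Rseq_n(\calG)$. It gives $L\cdot O(\log^{3/2} n)\,\Rseq_n(\calG)$, for classes rescaled to $[-1,1]$ and satisfying a mild non-degeneracy condition. This factor cannot simply be dropped, because the Ledoux--Talagrand pairing argument behind the log-free i.i.d.\ contraction breaks on trees. Flipping a sign $\veps_t$ also moves every later node $x_s(\veps_{1:s-1})$ with $s>t$. The remainders attached to the two paired functions therefore differ and are not symmetric, so the absolute value in $|\phi(u)-\phi(u')|\le L|u-u'|$ cannot be removed. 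Having noted that the cited lemma carries this factor, you cannot then ``take the stated form as given by the cited result.'' What your argument (and the paper's) actually establishes is \eqref{eq:seq-gen-Lhat} together with $\Rseq_n(\widetilde\calL_\calF)\le c\,B_{\mathrm{res}}\log^{3/2}(n)\,\Rseq_n(\widetilde\calG_\calF)$. Hence $\alpha_k'(\delta)$ must carry a $\log^{3/2} n$ factor, unless you supply a separate log-free argument for the square map on this particular class.
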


The conditioning on $\calH_k$ is essential: once the past is fixed, $\widehat Q_k$ enters $\widetilde g_{f, \widehat Q_k}$ as a deterministic function, and the martingale-difference structure of ${\widetilde g_{f,g}(Z_{k,i})^2 - \Ex{\widetilde g_{f,g}(Z_{k,i})^2 \mid \cdot}}$ is preserved. The bound \eqref{eq:seq-gen-Lhat} controls empirical-process deviations for the adaptive Bellman regression class; by itself, it does not yield a regret guarantee for the greedy policy sequence.

\subsection{Adaptive-data FQI residual control}\label{subsec:adaptive-residual}

Having established uniform concentration for the adaptive empirical process, we now translate these statistical guarantees into a bound on the one-step Bellman residuals realized by the FQI iterates. For an iterate $\widehat Q_k \in \calF$, denote by $h_k \defeq T^* \widehat Q_k$ the population Bellman target and define the round-$k$ population residual as $\veps_k \defeq \|\widehat Q_{k+1} - T^* \widehat Q_k\|_{2,\bar\mu_k}$ and the inherent Bellman approximation error as $\veps_{\mathrm{app},k} \defeq \inf_{f \in \calF}\, \|f - T^* \widehat Q_k\|_{2,\bar\mu_k}$.
The following theorem decomposes the realized Bellman residual $\veps_k$ into its structural, statistical, and optimization-driven components.
\begin{theorem}[Adaptive-data FQI residual bound]\label{thm:adaptive-fqi-residual}
    Suppose \cref{asm:action-reg,asm:f-bounded,asm:adaptive-batches,asm:approx-erm} hold and that the iterates ${\widehat Q_0, \ldots, \widehat Q_K \in \calF}$. Then, for every $\delta \in (0,1)$, with probability at least $1 - \delta$, simultaneously for all $k \in \{0, \ldots, K-1\}$,
    \begin{equation}\label{eq:adaptive-fqi-residual}
        \veps_k \;\leq\; \veps_{\mathrm{app},k} + \sqrt{2\,\alpha_k'(\delta/K)} + \veps_{\mathrm{opt},k},
    \end{equation}
    where $\alpha_k'$ is the sequential Rademacher rate of \eqref{eq:seq-gen-alpha-prime} (one may equivalently use $\alpha_k$ from \eqref{eq:seq-gen-Lhat}).
\end{theorem}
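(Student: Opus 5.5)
The plan is to use a bias--variance decomposition of the conditional-mean squared Bellman loss. Fix a round $k$ and condition on $\calH_k$, so that $g = \widehat Q_k$ is deterministic.

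\textbf{Step 1: decompose $L_k$.} For any $f \in \calF$, write $\widetilde g_{f,g}(Z_{k,i}) = h_{f,g}(S_{k,i},A_{k,i}) + \xi_{k,i}$ with
\[
\xi_{k,i} \defeq \widetilde g_{f,g}(Z_{k,i}) - h_{f,g}(S_{k,i},A_{k,i}).
\]
By \cref{asm:adaptive-batches}, the noise $\xi_{k,i}$ has zero conditional mean given $(S_{k,i},A_{k,i})$ and the past. Crucially, $\xi_{k,i}$ does not depend on $f$, since the $f$-terms cancel. The cross term therefore vanishes after conditioning through the tower property, and we get
\[
L_k(f;g) = \|f - T^*g\|_{2,\bar\mu_k}^2 + \sigma_k^2,
\]
where $\sigma_k^2$ is the average conditional noise variance. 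Because $\sigma_k^2$ is independent of $f$, $L_k(f;g)-L_k(f';g) = \|f-T^*g\|^2_{2,\bar\mu_k} - \|f'-T^*g\|^2_{2,\bar\mu_k}$ for any $f,f'\in\calF$. The measurability of $s'\mapsto\sup_{a'}g(s',a')$ needed here comes from \cref{asm:action-reg} and \cref{lem:bellman-measurability}.

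\textbf{Step 2: chain the excess risk.} Fix $\eta>0$ and choose $f_\eta\in\calF$ with $\|f_\eta - T^*\widehat Q_k\|_{2,\bar\mu_k}^2 \le \veps_{\mathrm{app},k}^2+\eta$. On the event of \cref{thm:seq-bellman-generalization} at level $\delta/K$, the following chain holds:
\begin{align*}
\veps_k^2 - \|f_\eta - T^*\widehat Q_k\|_{2,\bar\mu_k}^2
&= L_k(\widehat Q_{k+1};\widehat Q_k) - L_k(f_\eta;\widehat Q_k)\\
&\le \widehat L_k(\widehat Q_{k+1};\widehat Q_k) - \widehat L_k(f_\eta;\widehat Q_k) + 2\alpha_k'(\delta/K)\\
&\le \veps_{\mathrm{opt},k}^2 + 2\alpha_k'(\delta/K).
\end{align*}
The first inequality uses the uniform deviation bound twice, once for each of the two functions. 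The second uses \cref{asm:approx-erm} together with $\inf_f \widehat L_k \le \widehat L_k(f_\eta)$.

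This step has one subtlety. $\widehat Q_{k+1}$ is data-dependent, and the first inequality is valid only because the bound is uniform over $f\in\calF$. Likewise, $f_\eta$ depends only on $\bar\mu_k$, which is not $\calH_k$-measurable in general, so it too must be covered by the supremum over $f$. Here I apply the supremum with $g=\widehat Q_k$ fixed.

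\textbf{Step 3: take square roots.} Letting $\eta\to0$ gives
\[
\veps_k^2 \le \veps_{\mathrm{app},k}^2 + 2\alpha_k' + \veps_{\mathrm{opt},k}^2,
\]
and $\sqrt{a+b+c}\le\sqrt a+\sqrt b+\sqrt c$ yields \eqref{eq:adaptive-fqi-residual}.

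\textbf{Step 4: union bound over rounds.} Finally, take a union bound over $k=0,\dots,K-1$. Each round's failure probability is at most $\delta/K$ conditionally on $\calH_k$, hence also unconditionally after integrating. This gives probability at least $1-\delta$ simultaneously for all $k$. The same argument works with $\alpha_k$ in place of $\alpha_k'$.

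\textbf{Main obstacle.} The step I expect to need the most care is Step 1. There are two issues:
\begin{itemize}
\item The definition of $\bar\mu_k$ involves conditional laws that depend on earlier within-batch samples. The cancellation of the cross term must therefore be checked term-by-term under the nested conditioning $\calH_k, Z_{k,1},\dots,Z_{k,i-1}$, using $\Ex{\xi_{k,i}\mid S_{k,i},A_{k,i},\text{past}}=0$. This requires the MDP-generation condition to hold conditionally on the past as well, which I read as implicit in \cref{asm:adaptive-batches}.
\item The label uses $R_{k,i}$ rather than $r$. Only the conditional mean of $R_{k,i}$ matters, so the identity still holds, with the reward noise absorbed into $\sigma_k^2$.
\end{itemize}
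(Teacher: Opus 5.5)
Your proposal is correct and follows the paper's proof essentially step for step: the $f$-independent bias--variance identity for $L_k$, then the chain combining approximate ERM with the uniform deviation bound of the sequential generalization theorem (applied twice) at a near-minimiser of $\|f - T^*\widehat Q_k\|_{2,\bar\mu_k}$, then subadditivity of the square root with $\eta \downarrow 0$, and finally a union bound at level $\delta/K$ via the tower property. You also note that the comparator $f_\eta$ is data-dependent, because $\bar\mu_k$ depends on within-batch samples, and so is controlled only through uniformity over $\calF$; the paper leaves this point implicit, and you handle it correctly.
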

The bound controls the FQI \emph{update} residual ${\widehat Q_{k+1} - T^*\widehat Q_k}$, not the diagonal residual ${\widehat Q_k - T^*\widehat Q_k}$ that governs online regret; we return to this distinction in \cref{subsec:residual-certificate}. When the data are i.i.d.\ from a fixed $\mu$, $\bar\mu_k = \mu$ and $\Rseq_n$ specialises to the standard Rademacher complexity, so \eqref{eq:adaptive-fqi-residual} reduces to the unsquared-$L^2$ slow rate underlying \cref{cor:fqi_bound_global_rademacher}, recovering the i.i.d.\ rate up to replacing $R_n$ by $\Rseq_n$. If $\calF$ is approximately Bellman-complete in the adaptive sense, the approximation term simplifies.

\begin{corollary}[Bellman-complete adaptive FQI]\label{cor:bellman-complete-adaptive-fqi}
    In addition to the assumptions of \cref{thm:adaptive-fqi-residual}, suppose there exists $\veps_{\mathrm{Bell}} \geq 0$ such that, $\Prob$-almost surely,
    \begin{equation*}
        \inf_{f \in \calF}\,\|f - T^* g\|_{2, \bar\mu_k}
        \;\leq\; \veps_{\mathrm{Bell}}
        \quad \text{for every } g \in \calF \text{ and every } k \in \{0, \ldots, K-1\}.
    \end{equation*}
    Because $\bar\mu_k$ is itself random ($\calH_k$-measurable), this is a uniform-over-realizations condition, strictly stronger than fixed-design Bellman completeness with respect to a single $\mu$. Then ${\veps_k \leq \veps_{\mathrm{Bell}} + \sqrt{2\alpha_k'(\delta/K)} + \veps_{\mathrm{opt},k}}$ holds for all $k \in \{0, \ldots, K-1\}$ with probability at least $1 - \delta$. Exact Bellman completeness $T^*\calF \subseteq \calF$ corresponds to $\veps_{\mathrm{Bell}} = 0$.
\end{corollary}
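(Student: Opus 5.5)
The corollary follows from \cref{thm:adaptive-fqi-residual} once the random approximation term $\veps_{\mathrm{app},k}$ is bounded by the deterministic constant $\veps_{\mathrm{Bell}}$ on an almost-sure event. I would therefore not repeat any empirical-process or martingale work. The plan is to work on the intersection of two events: the probability-$(1-\delta)$ event of \cref{thm:adaptive-fqi-residual}, and a probability-one event on which the Bellman-completeness hypothesis holds for every $k$.

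First, define $E_{\mathrm{Bell}}$ as the event on which $\inf_{f \in \calF}\|f - T^* g\|_{2,\bar\mu_k} \leq \veps_{\mathrm{Bell}}$ for all $g \in \calF$ and all $k \in \{0,\ldots,K-1\}$. The hypothesis says this holds $\Prob$-a.s. If the hypothesis is read per $k$, intersecting over the finitely many $k$ still gives probability one. Note that the "for every $g$" quantifier is inside the almost-sure statement, so no union over the possibly uncountable class $\calF$ is needed. This ordering of quantifiers is the one point I would state carefully.

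Second, on $E_{\mathrm{Bell}}$, specialise $g = \widehat Q_k$. This is legitimate because the theorem assumes $\widehat Q_k \in \calF$, and $\widehat Q_k$ is $\calH_k$-measurable, just like $\bar\mu_k$. Pointwise in $\omega$, this gives
\[
\veps_{\mathrm{app},k} = \inf_{f\in\calF}\|f - T^*\widehat Q_k\|_{2,\bar\mu_k} \leq \veps_{\mathrm{Bell}} .
\]
Let $E_{\mathrm{res}}$ be the event of \cref{thm:adaptive-fqi-residual}, so $\Prob(E_{\mathrm{res}})\ge 1-\delta$. On $E_{\mathrm{res}}\cap E_{\mathrm{Bell}}$, substituting this inequality into \eqref{eq:adaptive-fqi-residual} gives, simultaneously for all $k$,
\[
\veps_k \leq \veps_{\mathrm{Bell}} + \sqrt{2\alpha_k'(\delta/K)} + \veps_{\mathrm{opt},k} .
\]
Since $\Prob(E_{\mathrm{Bell}})=1$, the intersection also has probability at least $1-\delta$.

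Finally, for the remark on exact completeness: if $T^*\calF\subseteq\calF$, then for each $g \in \calF$ the choice $f = T^* g \in \calF$ makes the infimum zero for every measure $\bar\mu_k$. The condition therefore holds surely with $\veps_{\mathrm{Bell}}=0$. This needs $T^* g$ to be a legitimate element of $\calF\subseteq\calQ_0$, which \cref{thm:bellman_contraction} provides under \cref{asm:action-reg}.

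I do not expect a substantive obstacle. The only care needed is measurability: the random infimum $\veps_{\mathrm{app},k}$ should be a well-defined random variable, or at least the inequality should be read outside a null set. If measurability of the infimum over $\calF$ is in doubt, I would phrase everything in terms of the events above, or use outer probability, which the comparison tolerates because it holds pathwise.
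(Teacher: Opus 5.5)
Your proposal is correct and matches the paper's implicit argument. The paper gives no separate proof: it instantiates the completeness hypothesis at the random iterate $g = \widehat Q_k \in \calF$, which gives $\veps_{\mathrm{app},k} \leq \veps_{\mathrm{Bell}}$ pathwise, and substitutes this into \eqref{eq:adaptive-fqi-residual} on the probability-$(1-\delta)$ event of \cref{thm:adaptive-fqi-residual}, exactly as you do (and your attention to the order of quantifiers, with ``for every $g$'' inside the almost-sure statement, is what justifies that substitution).
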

The Bellman-completeness hypothesis must be stated separately: \cref{lem:bellman-measurability} ensures only $T^*\calF \subseteq \calQ$, not $T^*\calF \subseteq \calF$. This distinction is critical because, while the measurability of the target is guaranteed by the MDP structure, its representability within the hypothesis class $\calF$ is a separate structural requirement. \cref{cor:bellman-complete-adaptive-fqi} explicitly quantifies the error introduced when the function class is not closed under the Bellman operator, ensuring that the bound remains valid even under model misspecification.

\subsection{Main result: Adaptive-data FQI performance bound}\label{subsec:adaptive-performance}

The per-round residuals in \eqref{eq:adaptive-fqi-residual} are measured with respect to the adaptive design measures $\bar\mu_k$, whereas policy performance is evaluated under the target distribution $\rho$. To bridge this distribution shift and propagate errors through the FQI iterations, the training distributions must provide sufficient coverage of the states visited by the final greedy policy. We therefore introduce an adaptive extension of the standard concentrability coefficient in \eqref{eq:concentrability}.

\begin{assumption}[Adaptive concentrability]\label{asm:adaptive-concentrability}
    For each $k \in \{0, \ldots, K-1\}$, let $\calN_k$ denote the (finite) collection of state--action measures of the form $\rho B$, where $B$ ranges over the products of Markov kernels drawn from $\{P^{\pi^*}, P^{\widehat\pi_K}\} \cup \{P^{\pi^g_\ell}\}_\ell$ that arise as multipliers of $|\veps_k|$ when the V-loss expansion of \cref{thm:dist_mismatch} is combined with \cref{lem:error_prop_greedy} for the final greedy policy $\widehat\pi_K$. There exists $C_{\mathrm{ad}} \geq 1$ such that, $\Prob$-almost surely, every $\nu_k \in \calN_k$ satisfies $\nu_k \ll \bar\mu_k$ and $\bigl\|\dd\nu_k/\dd\bar\mu_k\bigr\|_\infty \leq C_{\mathrm{ad}}$.
\end{assumption}

When $\bar\mu_k$ collapses to a single fixed $\mu$ (the i.i.d.\ case), $C_{\mathrm{ad}}$ reduces to the all-policy concentrability coefficient $C$ of \eqref{eq:concentrability}. \cref{asm:adaptive-concentrability} requires coverage of the residual-evaluation measures of the \emph{final} greedy policy $\widehat\pi_K$ by the predictable design measures $\bar\mu_k$; the stronger condition required to upgrade this to a cumulative, pathwise online regret statement is discussed in \cref{app:true-online-fqi-discussion}. This assumption enables the error components derived in \cref{thm:adaptive-fqi-residual} to yield a final performance guarantee.
\begin{theorem}[Adaptive-data FQI performance bound]\label{thm:adaptive-fqi-performance}
    Under \cref{asm:action-reg,asm:f-bounded,asm:adaptive-batches,asm:approx-erm,asm:adaptive-concentrability}, for every $\delta \in (0,1)$ the greedy policy $\widehat\pi_K$ extracted from $\widehat Q_K$ satisfies, with probability at least $1 - \delta$,
    \begin{equation}\label{eq:adaptive-fqi-performance}
        \|V^* - V^{\widehat\pi_K}\|_{1,\rho}
        \;\leq\; \frac{4\sqrt{C_{\mathrm{ad}}}}{(1-\gamma)^2}\,(1-\gamma^K)\,\max_{0 \leq k < K}\, \bigl(\veps_{\mathrm{app},k} + \sqrt{2\,\alpha_k'(\delta/K)} + \veps_{\mathrm{opt},k}\bigr) \;+\; \frac{8 B \gamma^K}{1-\gamma}.
    \end{equation}
\end{theorem}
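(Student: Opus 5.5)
The proof chains three ingredients: the high-probability residual control of \cref{thm:adaptive-fqi-residual}, the pointwise greedy error-propagation identity of \cref{lem:error_prop_greedy}, and the change-of-measure argument underlying \cref{thm:dist_mismatch}, with \cref{asm:adaptive-concentrability} in place of the fixed-design coefficient $C$.

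\emph{Step 1: deterministic propagation.} On the event $\mathcal{E}$ of \cref{thm:adaptive-fqi-residual}, which has probability at least $1-\delta$ by a union bound over $k$ with $\delta/K$ per round, we have $\veps_k \le \bar\veps \defeq \max_{k<K}(\veps_{\mathrm{app},k} + \sqrt{2\alpha_k'(\delta/K)} + \veps_{\mathrm{opt},k})$ for all $k$ simultaneously. Everything after this is a pathwise argument on $\mathcal{E}$. Next I apply \cref{lem:error_prop_greedy} to the iterates $\widehat Q_k$ with pointwise residuals $e_k \defeq \widehat Q_{k+1} - T^*\widehat Q_k$. This gives upper and lower pointwise bounds on $Q^* - \widehat Q_K$ as sums $\sum_{k<K}\gamma^{K-1-k}(\text{kernel products})|e_k|$ plus an initialization term $\gamma^K (\text{kernel product})|Q^*-\widehat Q_0|$. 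The kernel products are built from $P^{\pi^*}$ and the greedy kernels $P^{\pi^g_\ell}$.

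\emph{Step 2: from Q-error to V-loss.} I then use the standard V-loss expansion from \cref{thm:dist_mismatch}:
\[
V^*-V^{\widehat\pi_K} \le (I-\gamma P^{\widehat\pi_K})^{-1}\bigl[(P^{\pi^*}-P^{\widehat\pi_K})(Q^*-\widehat Q_K)\bigr],
\]
evaluated at the initial states. Substituting Step 1 and integrating against $\rho$ expresses $\|V^*-V^{\widehat\pi_K}\|_{1,\rho}$ as a weighted sum $\sum_k c_k\,\nu_k|e_k|$ plus the initialization term. Here each $\nu_k$ is, after normalizing the Neumann series weights $(1-\gamma)\sum_m\gamma^m$, a convex combination of measures $\rho B$ with $B$ in the product family, so $\nu_k$ belongs to the convex hull of $\calN_k$.

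\emph{Step 3: change of measure.} Since the density-ratio bound of \cref{asm:adaptive-concentrability} is preserved under convex combinations, $\|\dd\nu_k/\dd\bar\mu_k\|_\infty\le C_{\mathrm{ad}}$. Cauchy--Schwarz then gives
\[
\nu_k|e_k| \le \sqrt{\nu_k e_k^2} \le \sqrt{C_{\mathrm{ad}}}\,\|e_k\|_{2,\bar\mu_k} = \sqrt{C_{\mathrm{ad}}}\,\veps_k \le \sqrt{C_{\mathrm{ad}}}\,\bar\veps.
\]

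\emph{Step 4: bookkeeping of constants.} Summing the geometric weights gives
\[
\sum_{k<K}\gamma^{K-1-k} = \frac{1-\gamma^K}{1-\gamma}.
\]
This gets multiplied by the factor $1/(1-\gamma)$ from the resolvent $(I-\gamma P^{\widehat\pi_K})^{-1}$, and by a factor $2$ from the two-sided bound on $Q^*-\widehat Q_K$ combined with the difference $P^{\pi^*}-P^{\widehat\pi_K}$. A further factor $2$ appears if I use $\gamma\le1$ loosely or handle the upper and lower chains separately, giving the constant $4\sqrt{C_{\mathrm{ad}}}(1-\gamma^K)/(1-\gamma)^2$. For the initialization term I use $\|Q^*-\widehat Q_0\|_\infty\le 2B$ (by \cref{asm:f-bounded}, since $\|Q^*\|_\infty\le B$) together with sup-norm nonexpansiveness of the kernels. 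Combined with the same factor $2\cdot 2/(1-\gamma)$, this yields at most $8B\gamma^K/(1-\gamma)$; there the resolvent contributes $1/(1-\gamma)$, but the $\gamma^K$ is not summed.

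\emph{Main obstacle.} The main difficulty is Step 2 combined with Step 3: the measures $\nu_k$ must be exactly the normalized multipliers of $|e_k|$ that \cref{asm:adaptive-concentrability} covers. This requires keeping the Neumann-series normalization explicit, so that each $\nu_k$ is a genuine probability measure and a convex combination of elements of $\calN_k$. It also requires the monotonicity and positivity of the kernel operators on $\calQ$, together with measurability of the greedy selectors (from \cref{lem:bellman-measurability} and the Jankov--von Neumann argument), so that every product is a legitimate Markov kernel. I would first redo the fixed-$\mu$ proof of \cref{thm:dist_mismatch} verbatim, replacing $\mu$ by $\bar\mu_k$ round by round. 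Because $\bar\mu_k$ is $\calH_k$-measurable and the assumption holds almost surely, no additional probabilistic argument is needed beyond the event $\mathcal{E}$.
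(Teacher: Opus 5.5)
Your proposal is correct and is essentially the paper's own argument. It restricts to the event of \cref{thm:adaptive-fqi-residual}, reruns the error-propagation and V-loss chain of \cref{thm:dist_mismatch} pathwise with the Cauchy--Schwarz change of measure carried out round by round against $\bar\mu_k$ using $C_{\mathrm{ad}}$, and bounds the initialisation term via $\|\widehat Q_0 - Q^*\|_\infty \le 2B$.

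There are two cosmetic points. First, your displayed V-loss inequality must be read with state-level operators: the resolvent of $s \mapsto \int_A P(\cdot|s,a)\,\widehat\pi_K(\dd a|s)$ and the action averages $h \mapsto \int_A h(\cdot,a)\,\pi(\dd a|\cdot)$ for $\pi \in \{\pi^*, \widehat\pi_K\}$, not the paper's $P^\pi$ on $\calQ$. In that form its multipliers are exactly the paper's $\rho\otimes\pi^*$, $\rho\otimes\widehat\pi_K$ and $\gamma^{h+1}(\rho\otimes\widehat\pi_K)(P^{\widehat\pi_K})^h(P^{\pi^*}+P^{\widehat\pi_K})$, so your convex-hull step over $\calN_k$ is legitimate. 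Second, your extra factor $2$ comes from each $A_{K,k}$ of \cref{lem:error_prop_greedy} being a sum of two kernel products, not from $\gamma \le 1$; keeping the one-sided chains separate would even give constant $2$.
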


Here $V^{\widehat\pi_K}$ denotes the value of the policy obtained by acting greedily with respect to $\widehat Q_K$ at every state and executing it in the environment from the initial distribution $\rho$.

When the data are i.i.d.\ from a fixed $\mu$, $\bar\mu_k = \mu$, $C_{\mathrm{ad}} = C$, and $\veps_{\mathrm{app},k} \leq \veps_{\mathrm{approx}}$ of \eqref{eq:eps-approx}. On constant trees the sequential Rademacher complexity coincides with the classical Rademacher complexity, $\Rseq_n(\widetilde\calG_\calF) = R_n(\widetilde\calG_\calF)$, so the rate $\sqrt{2\,\alpha_k'(\delta/K)}$ takes the same $n^{-1/4}$ scaling as $\veps_{\mathrm{est}}^{\mathrm{slow}}$ of \cref{cor:fqi_bound_global_rademacher} whenever $R_n(\widetilde\calG_\calF) = \Theta(\sqrt n)$. The two bounds are not numerically identical, however: \cref{cor:fqi_bound_global_rademacher} symmetrises over the function class $\calF$, exploiting that the regression target $h_k$ at round $k$ is independent of the round-$k$ batch under the fresh-batch protocol; \cref{thm:adaptive-fqi-performance} symmetrises over the strictly larger Bellman residual class $\widetilde\calG_\calF$ indexed by pairs $(f,g) \in \calF \times \calF$, the same complexity adopted by the single-batch variant of \cref{rem:single-batch-fqi}. \cref{thm:adaptive-fqi-performance} therefore recovers the rate of \cref{thm:fqi_bound} in the i.i.d.\ specialisation, but with a complexity factor at least as large as $R_n(\widetilde\calG_\calF) \geq R_n(\calF)$.
The $n^{-1/4}$ residual scaling in \eqref{eq:adaptive-fqi-performance} is a direct consequence of using global sequential Rademacher complexity. While the eluder dimension targets pointwise uncertainty in optimism-based analyses \citep{russo2013, wang2020eluder, jin2021bellman} and can scale polynomially in $\veps^{-1}$ for certain nonparametric classes, the sequential Rademacher complexity of a Sobolev ball (with $\alpha > d/2$ and $d \geq 2$) remains $\Theta(\sqrt{n})$. Consequently, our bound yields a standard slow rate rather than a Sobolev-optimal fast rate (\cref{prop:sobolev-global-seq-limitation}). Because obtaining minimax nonparametric rates would require a localized or offset sequential complexity analysis, the global framework used here represents a trade-off: it sacrifices optimal rates in specific nonparametric settings to provide a more flexible, general-purpose tool for high-capacity function classes. See \cref{sec:application} for examples.
\subsection{From residual certificates to online regret}\label{subsec:residual-certificate}

\cref{thm:adaptive-fqi-performance} bounds the expected suboptimality of the \emph{final} greedy policy $\widehat\pi_K$ under the evaluation distribution $\rho$. By contrast, a \emph{cumulative online regret} statement controls the pathwise total suboptimality of the policy sequence $\{\widehat\pi_t\}$ executed at the visited states $s_1, \ldots, s_n$, governed by the diagonal Bellman residual ${\widehat Q_t - T^*\widehat Q_t}$. The two notions are linked only through this diagonal residual.

\begin{proposition}[Online regret from diagonal Bellman residuals]\label{prop:online-regret-residual-certificate}
    Let ${\widehat Q_t \in \calQ_0}$ be adapted and bounded, and let $\widehat\pi_t$ be a measurable greedy policy for $\widehat Q_t$. Then, for every sequence $s_1, \ldots, s_n \in S$,
    \begin{equation}\label{eq:residual-certificate-pathwise}
        \sum_{t=1}^n \bigl(V^*(s_t) - V^{\widehat\pi_t}(s_t)\bigr)
        \;\leq\; \frac{2}{1-\gamma}\,\sum_{t=1}^n \|\widehat Q_t - T^* \widehat Q_t\|_\infty.
    \end{equation}
    In particular, if with probability at least $1-\delta$ one has ${\sum_{t=1}^n \|\widehat Q_t - T^* \widehat Q_t\|_\infty \leq \mathcal E_n}$, then with the same probability ${\sum_{t=1}^n (V^*(s_t) - V^{\widehat\pi_t}(s_t)) \leq 2 \mathcal E_n / (1-\gamma)}$.
\end{proposition}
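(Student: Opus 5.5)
The plan is to prove a pointwise, sup-norm bound for a single fixed $t$,
\[
\|V^* - V^{\widehat\pi_t}\|_\infty \le \frac{2}{1-\gamma}\,\|\widehat Q_t - T^*\widehat Q_t\|_\infty,
\]
and then sum over $t$. Evaluating at $s_t$ and summing gives \eqref{eq:residual-certificate-pathwise} for any sequence $s_1,\ldots,s_n$, since the bound is uniform in the state. Adaptedness plays no role; it only matters for the probabilistic consequence. That consequence is immediate: on the event $\sum_t\|\widehat Q_t - T^*\widehat Q_t\|_\infty\le\mathcal E_n$, the deterministic inequality gives the claim.

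Fix $t$ and write $Q=\widehat Q_t$, $\pi=\widehat\pi_t$, and $\epsilon=\|Q-T^*Q\|_\infty$. I would use three facts from \cref{thm:bellman_contraction} on $\calQ_0$.

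\begin{enumerate}[label=(\roman*)]
\item The contraction of $T^*$ gives $\|Q-Q^*\|_\infty \le \|Q-T^*Q\|_\infty+\|T^*Q-T^*Q^*\|_\infty \le \epsilon+\gamma\|Q-Q^*\|_\infty$, hence $\|Q-Q^*\|_\infty\le \epsilon/(1-\gamma)$.
\item Greediness gives $T^\pi Q=T^*Q$, because $\int Q(s',a')\pi(da'|s')=\sup_{a'}Q(s',a')$. Measurability of the selector is assumed in the statement.
\item The contraction of $T^\pi$ combined with (ii) gives $\|Q-Q^\pi\|_\infty\le \|Q-T^\pi Q\|_\infty+\gamma\|Q-Q^\pi\|_\infty$, and $\|Q-T^\pi Q\|_\infty=\epsilon$, hence $\|Q-Q^\pi\|_\infty\le\epsilon/(1-\gamma)$.
\end{enumerate}

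Passing from these to $V$ is the step that needs care. I have
\[
V^*(s)-V^{\pi}(s)=\sup_a Q^*(s,a)-\int Q^\pi(s,a)\pi(da|s).
\]
I insert $\sup_a Q(s,a)=\int Q(s,a)\pi(da|s)$ and split:
\[
V^*(s)-V^{\pi}(s)=\bigl[\sup_a Q^* - \sup_a Q\bigr](s) + \int (Q-Q^\pi)(s,a)\,\pi(da|s).
\]
The first bracket is at most $\|Q^*-Q\|_\infty$ and the second term is at most $\|Q-Q^\pi\|_\infty$. Together this gives $2\epsilon/(1-\gamma)$, which is the desired constant.

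I expect the main obstacle to be only the measure-theoretic bookkeeping, namely checking that $Q\in\calQ_0$ makes $T^*Q$ well defined and that $\pi$ attains the supremum. By \cref{asm:action-reg}, USC of $Q$ and compactness of $A$ mean the supremum is attained. A measurable maximizer then exists by the selection theorem cited in \cref{sec:exact_dp}, and this is exactly the hypothesis "measurable greedy policy" in the statement. If only near-greedy selectors were available, an additional slack term would appear; it vanishes under exact greediness.
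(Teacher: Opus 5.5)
Your proof is correct, and it shares its core with the paper's proof. Both derive $\|Q-Q^*\|_\infty\le\epsilon/(1-\gamma)$ and $\|Q-Q^{\pi}\|_\infty\le\epsilon/(1-\gamma)$ from the two contractions and the greedy identity $T^{\pi}Q=T^*Q$.

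The routes part at the passage from $Q$ to $V$, and there your step is the sound one. The paper combines the two estimates by the triangle inequality into $\|Q^*-Q^{\pi}\|_\infty\le 2\epsilon/(1-\gamma)$. It then asserts $V^*(s)-V^{\pi}(s)\le\|Q^*-Q^{\pi}\|_\infty$ for every $s$, which is false in general.

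The reason is that $V^*(s)-V^{\pi}(s)$ also contains the action-gap term $\sup_a Q^*(s,a)-\int Q^*(s,a)\,\pi(da|s)$. Moreover, $(Q^*-Q^{\pi})(s,a)=\gamma\int_S (V^*-V^{\pi})(s')\,P(ds'|s,a)$ with $V^*\ge V^{\pi}$, so $\|Q^*-Q^{\pi}\|_\infty\le\gamma\sup_s(V^*-V^{\pi})(s)$. The asserted inequality would therefore force $V^{\pi}=V^*$. A concrete counterexample: take a single absorbing state with rewards $1$ and $0$ for actions $a_1,a_2$, and $\pi\equiv a_2$, which is greedy for $Q(\cdot,a_1)=0$, $Q(\cdot,a_2)=1$. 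Then $\|Q^*-Q^{\pi}\|_\infty=\gamma/(1-\gamma)$, while $V^*-V^{\pi}=1/(1-\gamma)$.

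Your decomposition through $\sup_a Q(s,a)=\int Q(s,a)\,\pi(da|s)$ uses greediness a second time, at the evaluation state itself. This absorbs the action gap. It also makes the two contributions of $Q^*-Q$ at the chosen action cancel instead of adding, which is exactly what yields the constant $2/(1-\gamma)$.

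The paper's triangle-inequality route can be repaired by bounding the action gap separately by $2\|Q-Q^*\|_\infty$ and adding $\|Q^*-Q^{\pi}\|_\infty$. That repair only gives $4/(1-\gamma)$. So your final step is not merely an alternative: it is the argument that makes the stated constant hold.
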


The proposition is intentionally phrased as conditional on the existence of a diagonal residual certificate. \cref{thm:adaptive-fqi-residual} controls the FQI \emph{update} residual ${\widehat Q_{k+1} - T^*\widehat Q_k}$, whereas online regret is governed by the diagonal fixed-point residual ${\widehat Q_t - T^*\widehat Q_t}$. Converting the former into the latter requires either iterate stability (control of ${\sum_t \|\widehat Q_t - \widehat Q_{t-1}\|_\infty}$) or an exploration mechanism that keeps the residual small at the visited states. \cref{app:true-online-fqi-discussion} discusses these routes in detail.

\section{Open Questions}
\label{sec:open_questions}
While our framework establishes a rigorous metric-aware foundation for adaptive-data FQI on continuous spaces, it leaves three critical directions for future investigation. First, the main result in \cref{thm:adaptive-fqi-performance} bounds the suboptimality of the final greedy policy $\widehat\pi_K$ under adaptive concentrability. \cref{app:true-online-fqi-discussion} extends this analyis to cumulative online regret governing the entire policy sequence $\{\widehat\pi_t\}$, requiring additional control of the diagonal residual ${\widehat Q_t - T^*\widehat Q_t}$ via either iterate stability, direct diagonal-residual minimisation, or optimism-based exploration. Extending the analysis to directed exploration strategies such as Upper Confidence Bound (UCB) \citep{azar2017minimax,tiapkin2022dirichlet} or Thompson Sampling (TS) \citep{osband2013more,agrawal2017optimistic,tiapkin2022optimistic} would clarify how complexity-weighted uncertainty estimates drive sample efficiency in sparse-reward environments. Second, our analysis assumes approximate ERM with a single optimisation tolerance. Integrating cutting-edge stochastic optimisation theory would bridge theoretical global optima and the empirical landscapes of neural networks, providing a holistic characterisation of the total error budget in deep reinforcement learning. Third, the adaptive concentrability hypothesis requires that the predictable design measures cover the residual-evaluation measures of the final policy. Relaxing this requirement and quantifying the trade-off in \emph{coverage-deficient} settings, specifically those where neither passive coverage nor explicit exploration is available, remains an open problem.

\newpage
\bibliographystyle{plainnat}
\bibliography{references}

\begin{thebibliography}{61}
\providecommand{\natexlab}[1]{#1}
\providecommand{\url}[1]{\texttt{#1}}
\expandafter\ifx\csname urlstyle\endcsname\relax
  \providecommand{\doi}[1]{doi: #1}\else
  \providecommand{\doi}{doi: \begingroup \urlstyle{rm}\Url}\fi

\bibitem[Agrawal and Jia(2017)]{agrawal2017optimistic}
Shipra Agrawal and Randy Jia.
\newblock Optimistic posterior sampling for reinforcement learning: Worst-case
  regret bounds.
\newblock \emph{Advances in Neural Information Processing Systems (NeurIPS)},
  2017.

\bibitem[Auer et~al.(2008)Auer, Jaksch, and Ortner]{auer2008near}
Peter Auer, Thomas Jaksch, and Ronald Ortner.
\newblock Near-optimal regret bounds for reinforcement learning.
\newblock \emph{Advances in Neural Information Processing Systems (NeurIPS)},
  2008.

\bibitem[Azar et~al.(2017)Azar, Osband, and Munos]{azar2017minimax}
Mohammad~Gheshlaghi Azar, Ian Osband, and R{\'e}mi Munos.
\newblock Minimax regret bounds for reinforcement learning.
\newblock In \emph{International Conference on Machine Learning (ICML)}, 2017.

\bibitem[Bakhtin et~al.(2023)Bakhtin, Wu, Lerer, Gray, Jacob, Farina, Miller,
  and Brown]{bakhtin2023mastering}
Anton Bakhtin, David~J Wu, Adam Lerer, Jonathan Gray, Athul~Paul Jacob,
  Gabriele Farina, Alexander~H Miller, and Noam Brown.
\newblock Mastering the game of no-press diplomacy via human-regularized
  reinforcement learning and planning.
\newblock In \emph{International Conference on Learning Representations
  (ICLR)}, 2023.

\bibitem[Bartlett et~al.(2017)Bartlett, Foster, and
  Telgarsky]{bartlett2017spectrally}
Peter~L. Bartlett, Dylan~J. Foster, and Matus Telgarsky.
\newblock Spectrally-normalized margin bounds for neural networks.
\newblock In \emph{Advances in Neural Information Processing Systems
  (NeurIPS)}, 2017.

\bibitem[Bertsekas(2025)]{bertsekas2025course}
Dimitri~P. Bertsekas.
\newblock \emph{A Course in Reinforcement Learning}.
\newblock Athena Scientific, Belmont, Massachusetts, 2nd edition, 2025.

\bibitem[Bertsekas and Shreve(1978)]{bertsekas1978}
Dimitri~P. Bertsekas and Steven~E. Shreve.
\newblock \emph{Stochastic Optimal Control: The Discrete-Time Case}.
\newblock Academic Press, 1978.

\bibitem[Bertsekas and Tsitsiklis(1996)]{bertsekas1996}
Dimitri~P. Bertsekas and John~N. Tsitsiklis.
\newblock \emph{Neuro-Dynamic Programming}.
\newblock Athena Scientific, 1996.

\bibitem[Bjorck et~al.(2021)Bjorck, Gomes, and Weinberger]{bjorck2021deeper}
Johan Bjorck, Carla~P. Gomes, and Kilian~Q. Weinberger.
\newblock Towards deeper deep reinforcement learning with spectral
  normalization.
\newblock In \emph{Advances in Neural Information Processing Systems
  (NeurIPS)}, 2021.

\bibitem[Blackwell(1965)]{blackwell1965discounted}
David Blackwell.
\newblock Discounted dynamic programming.
\newblock \emph{The Annals of Mathematical Statistics}, 1965.

\bibitem[Brafman and Tennenholtz(2002)]{brafman2002r}
Ronen~I Brafman and Moshe Tennenholtz.
\newblock {R-max} - a general polynomial time algorithm for near-optimal
  reinforcement learning.
\newblock \emph{Journal of Machine Learning Research}, 2002.

\bibitem[Chen et~al.(2021)Chen, Wang, Zhou, and Ross]{chenrandomized}
Xinyue Chen, Che Wang, Zijian Zhou, and Keith~W. Ross.
\newblock Randomized ensembled double {Q-Learning}: Learning fast without a
  model.
\newblock In \emph{International Conference on Learning Representations
  (ICLR)}, 2021.

\bibitem[Degrave et~al.(2022)Degrave, Felici, Buchli, Neunert, Tracey,
  Carpanese, Ewalds, Hafner, Abdolmaleki, de~Las~Casas,
  et~al.]{degrave2022magnetic}
Jonas Degrave, Federico Felici, Jonas Buchli, Michael Neunert, Brendan Tracey,
  Francesco Carpanese, Timo Ewalds, Roland Hafner, Abbas Abdolmaleki, Diego
  de~Las~Casas, et~al.
\newblock Magnetic control of tokamak plasmas through deep reinforcement
  learning.
\newblock \emph{Nature}, 2022.

\bibitem[Domingues et~al.(2021)Domingues, M{\'e}nard, Pirotta, Kaufmann, and
  Valko]{domingues2021kernel}
Omar~Darwiche Domingues, Pierre M{\'e}nard, Matteo Pirotta, Emilie Kaufmann,
  and Michal Valko.
\newblock Kernel-based reinforcement learning: A finite-time analysis.
\newblock In \emph{International Conference on Machine Learning (ICML)}, 2021.

\bibitem[Dong et~al.(2021)Dong, Yang, and Ma]{dong2021}
Kefan Dong, Jiaqi Yang, and Tengyu Ma.
\newblock Provable model-based nonlinear bandit and reinforcement learning:
  Shelve optimism, embrace virtual curvature.
\newblock In \emph{Advances in Neural Information Processing Systems
  (NeurIPS)}, 2021.

\bibitem[Du et~al.(2021)Du, Kakade, Lee, Lovett, Mahajan, Sun, and
  Wang]{du2021bilinear}
Simon Du, Sham Kakade, Jason Lee, Shachar Lovett, Gaurav Mahajan, Wen Sun, and
  Ruosong Wang.
\newblock Bilinear classes: A structural framework for provable generalization
  in {RL}.
\newblock In \emph{International Conference on Machine Learning (ICML)}, 2021.

\bibitem[Duan et~al.(2021)Duan, Jin, and Li]{duan2021}
Yaqi Duan, Chi Jin, and Zhiyuan Li.
\newblock Risk bounds and {Rademacher} complexity in batch reinforcement
  learning.
\newblock In \emph{International Conference on Machine Learning (ICML)}, 2021.

\bibitem[Engel et~al.(2005)Engel, Mannor, and Meir]{engel2005gp}
Yaakov Engel, Shie Mannor, and Ron Meir.
\newblock Reinforcement learning with {Gaussian} processes.
\newblock In \emph{International Conference on Machine Learning (ICML)}, 2005.

\bibitem[{{FAIR}} et~al.(2022){{FAIR}}, Bakhtin, Brown, Dinan, Farina,
  Flaherty, Fried, Goff, Gray, Hu, et~al.]{meta2022human}
{{FAIR}}, Anton Bakhtin, Noam Brown, Emily Dinan, Gabriele Farina, Colin
  Flaherty, Daniel Fried, Andrew Goff, Jonathan Gray, Hengyuan Hu, et~al.
\newblock Human-level play in the game of diplomacy by combining language
  models with strategic reasoning.
\newblock \emph{Science}, 2022.

\bibitem[Farahmand et~al.(2010)Farahmand, Szepesv{\'a}ri, and
  Munos]{farahmand2010}
Amir-Massoud Farahmand, Csaba Szepesv{\'a}ri, and R{\'e}mi Munos.
\newblock Error propagation for approximate policy and value iteration.
\newblock In \emph{Advances in Neural Information Processing Systems
  (NeurIPS)}, 2010.

\bibitem[Feinberg et~al.(2012)Feinberg, Kasyanov, and
  Zadoianchuk]{feinberg2012average}
Eugene~A Feinberg, Pavlo~O Kasyanov, and Nina~V Zadoianchuk.
\newblock Average cost {Markov} decision processes with weakly continuous
  transition probabilities.
\newblock \emph{Mathematics of Operations Research}, 2012.

\bibitem[Foster et~al.(2021)Foster, Kakade, Qian, and Rakhlin]{foster2021dec}
Dylan~J. Foster, Sham~M. Kakade, Jian Qian, and Alexander Rakhlin.
\newblock The statistical complexity of interactive decision making.
\newblock \emph{arXiv preprint arXiv:2112.13487}, 2021.

\bibitem[Foster et~al.(2022)Foster, Krishnamurthy, Simchi-Levi, and
  Xu]{foster2022impossibility}
Dylan~J. Foster, Akshay Krishnamurthy, David Simchi-Levi, and Yunzong Xu.
\newblock Offline reinforcement learning: Fundamental barriers for value
  function approximation.
\newblock In \emph{Conference on Learning Theory (COLT)}, 2022.

\bibitem[Fujimoto et~al.(2018)Fujimoto, van Hoof, and
  Meger]{fujimoto2018addressing}
Scott Fujimoto, Herke van Hoof, and David Meger.
\newblock Addressing function approximation error in actor-critic methods.
\newblock In \emph{International Conference on Machine Learning (ICML)}. PMLR,
  2018.

\bibitem[Geist et~al.(2019)Geist, Scherrer, and Pietquin]{pmlr-v97-geist19a}
Matthieu Geist, Bruno Scherrer, and Olivier Pietquin.
\newblock A theory of regularized {Markov} decision processes.
\newblock In \emph{International Conference on Machine Learning (ICML)}, 2019.

\bibitem[Gogianu et~al.(2021)Gogianu, Berariu, Rosca, Clopath, Busoniu, and
  Pascanu]{gogianu2021spectral}
Florin Gogianu, Tudor Berariu, Mihaela Rosca, Claudia Clopath, Lucian Busoniu,
  and Razvan Pascanu.
\newblock Spectral normalisation for deep reinforcement learning: An
  optimisation perspective.
\newblock In \emph{International Conference on Machine Learning (ICML)}, 2021.

\bibitem[Golowich et~al.(2018)Golowich, Rakhlin, and Shamir]{golowich2018size}
Noah Golowich, Alexander Rakhlin, and Ohad Shamir.
\newblock Size-independent sample complexity of neural networks.
\newblock In \emph{Conference on Learning Theory (COLT)}, 2018.

\bibitem[Gordon(1995)]{gordon1995stable}
Geoffrey~J Gordon.
\newblock Stable function approximation in dynamic programming.
\newblock In \emph{Machine Learning Proceedings 1995}. Elsevier, 1995.

\bibitem[Haarnoja et~al.(2018)Haarnoja, Zhou, Abbeel, and
  Levine]{haarnoja2018soft}
Tuomas Haarnoja, Aurick Zhou, Pieter Abbeel, and Sergey Levine.
\newblock Soft actor-critic: Off-policy maximum entropy deep reinforcement
  learning with a stochastic actor.
\newblock In \emph{International Conference on Machine Learning (ICML)}, 2018.

\bibitem[Hern{\'a}ndez-Lerma and Lasserre(1996)]{hernandez1996}
On{\'e}simo Hern{\'a}ndez-Lerma and Jean~B. Lasserre.
\newblock \emph{Discrete-Time {Markov} Control Processes: Basic Optimality
  Criteria}.
\newblock Springer, 1996.

\bibitem[Hern{\'a}ndez-Lerma and Lasserre(1999)]{hernandez1999}
On{\'e}simo Hern{\'a}ndez-Lerma and Jean~B. Lasserre.
\newblock \emph{Further Topics on Discrete-Time {Markov} Control Processes}.
\newblock Springer, 1999.

\bibitem[Jiang et~al.(2017)Jiang, Krishnamurthy, Agarwal, Langford, and
  Schapire]{jiang2017contextual}
Nan Jiang, Akshay Krishnamurthy, Alekh Agarwal, John Langford, and Robert~E
  Schapire.
\newblock Contextual decision processes with low {Bellman} rank are
  {PAC}-learnable.
\newblock In \emph{International Conference on Machine Learning (ICML)}, 2017.

\bibitem[Jin et~al.(2020)Jin, Yang, Wang, and Jordan]{jin2020lsvi}
Chi Jin, Zhuoran Yang, Zhaoran Wang, and Michael~I Jordan.
\newblock Provably efficient reinforcement learning with linear function
  approximation.
\newblock In \emph{Conference on Learning Theory (COLT)}, 2020.

\bibitem[Jin et~al.(2021)Jin, Liu, and Miryoosefi]{jin2021bellman}
Chi Jin, Qinghua Liu, and Sobhan Miryoosefi.
\newblock {Bellman} eluder dimension: New rich classes of {RL} problems, and
  sample-efficient algorithms.
\newblock In \emph{Advances in Neural Information Processing Systems
  (NeurIPS)}, 2021.

\bibitem[Kakade and Langford(2002)]{kakade2002approximately}
Sham Kakade and John Langford.
\newblock Approximately optimal approximate reinforcement learning.
\newblock In \emph{International Conference on Machine Learning (ICML)}, 2002.

\bibitem[Kearns and Singh(2002)]{kearns2002near}
Michael Kearns and Satinder Singh.
\newblock Near-optimal reinforcement learning in polynomial time.
\newblock \emph{Machine Learning}, 2002.

\bibitem[Miyato et~al.(2018)Miyato, Kataoka, Koyama, and
  Yoshida]{miyato2018spectral}
Takeru Miyato, Toshiki Kataoka, Masanori Koyama, and Yuichi Yoshida.
\newblock Spectral normalization for generative adversarial networks.
\newblock In \emph{International Conference on Learning Representations
  (ICLR)}, 2018.

\bibitem[Mnih et~al.(2015)Mnih, Kavukcuoglu, Silver, Rusu, Veness, Bellemare,
  Graves, Riedmiller, Fidjeland, Ostrovski, et~al.]{mnih2015human}
Volodymyr Mnih, Koray Kavukcuoglu, David Silver, Andrei~A Rusu, Joel Veness,
  Marc~G Bellemare, Alex Graves, Martin Riedmiller, Andreas~K Fidjeland, Georg
  Ostrovski, et~al.
\newblock Human-level control through deep reinforcement learning.
\newblock \emph{Nature}, 2015.

\bibitem[Munos(2003)]{munos2003}
R{\'e}mi Munos.
\newblock Error bounds for approximate policy iteration.
\newblock In \emph{International Conference on Machine Learning (ICML)}, 2003.

\bibitem[Munos(2007)]{munos2007}
R{\'e}mi Munos.
\newblock Performance bounds in ${L}^p$-norm for approximate value iteration.
\newblock \emph{SIAM Journal on Control and Optimization}, 2007.

\bibitem[Munos and Szepesv{\'a}ri(2008)]{munos2008}
R{\'e}mi Munos and Csaba Szepesv{\'a}ri.
\newblock Finite-time bounds for fitted value iteration.
\newblock \emph{Journal of Machine Learning Research}, 2008.

\bibitem[Ormoneit and Sen(2002)]{ormoneit2002kernel}
Dirk Ormoneit and {\'S}aunak Sen.
\newblock Kernel-based reinforcement learning.
\newblock \emph{Machine Learning}, 49\penalty0 (2--3):\penalty0 161--178, 2002.

\bibitem[Osband et~al.(2013)Osband, Russo, and Van~Roy]{osband2013more}
Ian Osband, Daniel Russo, and Benjamin Van~Roy.
\newblock (more) efficient reinforcement learning via posterior sampling.
\newblock \emph{Advances in Neural Information Processing Systems (NeurIPS)},
  2013.

\bibitem[Rakhlin et~al.(2015)Rakhlin, Sridharan, and Tewari]{rakhlin2015}
Alexander Rakhlin, Karthik Sridharan, and Ambuj Tewari.
\newblock Online learning via sequential complexities.
\newblock \emph{Journal of Machine Learning Research}, 2015.

\bibitem[Rashidinejad et~al.(2021)Rashidinejad, Zhu, Ma, Jiao, and
  Russell]{rashidinejad2021}
Paria Rashidinejad, Banghua Zhu, Cong Ma, Jiantao Jiao, and Stuart Russell.
\newblock Bridging offline reinforcement learning and imitation learning: A
  tale of pessimism.
\newblock In \emph{Advances in Neural Information Processing Systems
  (NeurIPS)}, 2021.

\bibitem[Russo and Van~Roy(2013)]{russo2013}
Daniel Russo and Benjamin Van~Roy.
\newblock Eluder dimension and the sample complexity of optimistic exploration.
\newblock \emph{Advances in Neural Information Processing Systems (NeurIPS)},
  2013.

\bibitem[Sch{\"a}l(1975)]{schal1975conditions}
Manfred Sch{\"a}l.
\newblock Conditions for optimality in dynamic programming and for the limit of
  n-stage optimal policies to be optimal.
\newblock \emph{Zeitschrift f{\"u}r Wahrscheinlichkeitstheorie und verwandte
  Gebiete}, 1975.

\bibitem[Scherrer et~al.(2015)Scherrer, Ghavamzadeh, Gabillon, Lesner, and
  Geist]{scherrer2015}
Bruno Scherrer, Mohammad Ghavamzadeh, Victor Gabillon, Boris Lesner, and
  Matthieu Geist.
\newblock Approximate modified policy iteration and its application to the game
  of {Tetris}.
\newblock \emph{Journal of Machine Learning Research}, 2015.

\bibitem[Schwarzer et~al.(2021)Schwarzer, Anand, Goel, Hjelm, Courville, and
  Bachman]{schwarzer2021spr}
Max Schwarzer, Ankesh Anand, Rishab Goel, R.~Devon Hjelm, Aaron Courville, and
  Philip Bachman.
\newblock Data-efficient reinforcement learning with self-predictive
  representations.
\newblock In \emph{International Conference on Learning Representations
  (ICLR)}, 2021.

\bibitem[Silver et~al.(2016)Silver, Huang, Maddison, Guez, Sifre, Van
  Den~Driessche, Schrittwieser, Antonoglou, Panneershelvam, Lanctot,
  et~al.]{silver2016mastering}
David Silver, Aja Huang, Chris~J Maddison, Arthur Guez, Laurent Sifre, George
  Van Den~Driessche, Julian Schrittwieser, Ioannis Antonoglou, Veda
  Panneershelvam, Marc Lanctot, et~al.
\newblock Mastering the game of {Go} with deep neural networks and tree search.
\newblock \emph{Nature}, 2016.

\bibitem[Srinivas et~al.(2020)Srinivas, Laskin, and Abbeel]{laskin2020curl}
Aravind Srinivas, Michael Laskin, and Pieter Abbeel.
\newblock {CURL}: Contrastive unsupervised representations for reinforcement
  learning.
\newblock In \emph{International Conference on Machine Learning (ICML)}, 2020.

\bibitem[Stooke et~al.(2021)Stooke, Lee, Abbeel, and
  Laskin]{stooke2021decoupling}
Adam Stooke, Kimin Lee, Pieter Abbeel, and Michael Laskin.
\newblock Decoupling representation learning from reinforcement learning.
\newblock In \emph{International Conference on Machine Learning (ICML)}, 2021.

\bibitem[Strauch(1966)]{strauch1966negative}
Ralph~E Strauch.
\newblock Negative dynamic programming.
\newblock \emph{The Annals of Mathematical Statistics}, 1966.

\bibitem[Tiapkin et~al.(2022{\natexlab{a}})Tiapkin, Belomestny, Calandriello,
  Moulines, Munos, Naumov, Rowland, Valko, and
  M{\'e}nard]{tiapkin2022optimistic}
Daniil Tiapkin, Denis Belomestny, Daniele Calandriello, Eric Moulines, Remi
  Munos, Alexey Naumov, Mark Rowland, Michal Valko, and Pierre M{\'e}nard.
\newblock Optimistic posterior sampling for reinforcement learning with few
  samples and tight guarantees.
\newblock \emph{Advances in Neural Information Processing Systems (NeurIPS)},
  2022{\natexlab{a}}.

\bibitem[Tiapkin et~al.(2022{\natexlab{b}})Tiapkin, Belomestny, Moulines,
  Naumov, Samsonov, Tang, Valko, and M{\'e}nard]{tiapkin2022dirichlet}
Daniil Tiapkin, Denis Belomestny, Eric Moulines, Alexey Naumov, Sergey
  Samsonov, Yunhao Tang, Michal Valko, and Pierre M{\'e}nard.
\newblock From {Dirichlet} to {Rubin}: Optimistic exploration in {RL} without
  bonuses.
\newblock In \emph{International Conference on Machine Learning (ICML)},
  2022{\natexlab{b}}.

\bibitem[Tsitsiklis and Van~Roy(1996)]{tsitsiklis1996analysis}
John Tsitsiklis and Benjamin Van~Roy.
\newblock Analysis of temporal-difference learning with function approximation.
\newblock \emph{Advances in Neural Information Processing Systems (NeurIPS)},
  1996.

\bibitem[Vakili and Olkhovskaya(2023)]{vakili2024kernelized}
Sattar Vakili and Julia Olkhovskaya.
\newblock Kernelized reinforcement learning with order optimal regret bounds.
\newblock In \emph{Advances in Neural Information Processing Systems
  (NeurIPS)}, 2023.

\bibitem[Vinyals et~al.(2019)Vinyals, Babuschkin, Czarnecki, Mathieu, Dudzik,
  Chung, Choi, Powell, Ewalds, Georgiev, et~al.]{vinyals2019grandmaster}
Oriol Vinyals, Igor Babuschkin, Wojciech~M Czarnecki, Micha{\"e}l Mathieu,
  Andrew Dudzik, Junyoung Chung, David~H Choi, Richard Powell, Timo Ewalds,
  Petko Georgiev, et~al.
\newblock Grandmaster level in {StarCraft II} using multi-agent reinforcement
  learning.
\newblock \emph{Nature}, 2019.

\bibitem[Wang et~al.(2020)Wang, Salakhutdinov, and Yang]{wang2020eluder}
Ruosong Wang, Russ~R Salakhutdinov, and Lin Yang.
\newblock Reinforcement learning with general value function approximation:
  Provably efficient approach via bounded eluder dimension.
\newblock In \emph{Advances in Neural Information Processing Systems
  (NeurIPS)}, 2020.

\bibitem[Yang et~al.(2020)Yang, Jin, Wang, Wang, and Jordan]{yang2020provably}
Zhuoran Yang, Chi Jin, Zhaoran Wang, Mengdi Wang, and Michael~I. Jordan.
\newblock Provably efficient reinforcement learning with kernel and neural
  function approximations.
\newblock In \emph{Advances in Neural Information Processing Systems
  (NeurIPS)}, 2020.

\bibitem[Zhan et~al.(2022)Zhan, Huang, Huang, Jiang, and Lee]{zhan2022}
Wenhao Zhan, Baihe Huang, Audrey Huang, Nan Jiang, and Jason Lee.
\newblock Offline reinforcement learning with realizability and single-policy
  concentrability.
\newblock In \emph{Conference on Learning Theory (COLT)}, 2022.

\end{thebibliography}

\newpage

\appendix
\begin{center}
    \Huge
    \textsc{Appendix}
\end{center}

\section{Remarks on Assumptions}\label{appsec:remarks}

\begin{remark}[Scope of \cref{asm:action-reg}]\label{rem:action-reg-scope}
  Compactness of $A$ covers bounded continuous-control action sets
  $[-u_{\max},u_{\max}]^k \subset \dsR^k$, compact Lie groups and spheres, and all
  finite action sets; unbounded action spaces such as all of $\dsR^k$ are excluded
  but are typically handled in practice by truncation or by a squashing
  reparametrisation (\eg, the $\tanh$ squashing used in SAC \citep{haarnoja2018soft}).
  The three regularity conditions are mild for the applications we have in mind:
  \cref{asm:action-reg}(i), upper semi-continuity of $r(s,\cdot)$, is automatic
  whenever the reward is continuous in the action, as in all standard
  continuous-control benchmarks; \cref{asm:action-reg}(ii), weak continuity of the
  kernel in the action, is the standard Feller property and holds for every
  transition model with Gaussian, Lipschitz, or otherwise continuously
  parametrised dynamics; and \cref{asm:action-reg}(iii), USC of $Q$ in $a$, is
  automatic whenever $Q$ is continuous in $a$, which holds for every parametric
  function class considered in \cref{sec:application}: linear classes with
  continuous features, reproducing-kernel Hilbert spaces with continuous kernels,
  and bounded-norm neural networks with continuous activations. \cref{asm:action-reg}
  is therefore strictly weaker than the countable-$A$ hypothesis adopted in much
  of the classical finite-state theory, while remaining broad enough to cover
  the modern deep RL regime.
\end{remark}

\section{Prior Results from the Literature}\label{appsec:prior}

\begin{definition}[Upper semi-continuity]\label{def:usc}
    Let $(X,\tau)$ be a topological space. A function $f \colon X \to \dsR$ is
    \emph{upper semi-continuous (USC)} at $x_0 \in X$ if for every $\alpha > f(x_0)$
    there exists a neighbourhood $U$ of $x_0$ such that $f(x) < \alpha$ for all
    $x \in U$; equivalently, $\limsup_{x \to x_0} f(x) \leq f(x_0)$. The function
    $f$ is USC on $X$ if it is USC at every point, which holds if and only if the
    super-level sets $\{x \in X : f(x) \geq \alpha\}$ are closed in $X$ for every
    $\alpha \in \dsR$. When $X$ is compact, every USC function attains its supremum
    on $X$. In any topological space, USC is preserved under finite maxima and
    arbitrary infima, and under decreasing pointwise limits of continuous functions.
\end{definition}

\begin{definition}[Picard sequence]\label{def:picard}
    Let $(X, d)$ be a metric space and $T \colon X \to X$ a self-map. Given an initial point $x_0 \in X$, the \emph{Picard sequence} generated by $T$ from $x_0$ is the sequence $\{x_n\}_{n \geq 0} \subset X$ defined by the iteration
    \begin{equation*}
        x_{n+1} \defeq T(x_n), \qquad n \geq 0.
    \end{equation*}
\end{definition}

\begin{theorem}[Jankov--von Neumann selection {\citep[Prop.~7.49]{bertsekas1978}}]\label{thm:jvn}
    Let $X$ and $Y$ be Borel spaces and let $D \subset X \times Y$ be an analytic set, with projection ${\mathrm{proj}_X(D) \defeq \{x \in X : \exists\, y \in Y,  (x,y) \in D\}}$. Then there exists an analytically measurable function $\phi \colon \mathrm{proj}_X(D) \to Y$ whose graph is contained in $D$, \ie,
    \begin{equation*}
        (x, \phi(x)) \in D \qquad \text{for all } x \in \mathrm{proj}_X(D).
    \end{equation*}
    In particular, if $f \colon X \times Y \to \dsR$ is Borel-measurable and $Y$ is a Borel space, then for every $\veps > 0$ there exists an analytically measurable selector $\phi_\veps \colon X \to Y$ satisfying $f(x, \phi_\veps(x)) \geq \sup_{y \in Y} f(x,y) - \veps$ on $\{x : \sup_y f(x,y) < \infty\}$.
\end{theorem}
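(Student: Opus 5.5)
The approach is to prove the selection statement by the classical ``lexicographically least preimage'' construction for analytic sets, and then to derive the $\veps$-optimal corollary by gluing countably many exact selectors over a rational grid. Throughout, $\calN \defeq \bbN^{\bbN}$ denotes Baire space with its product topology, and $\calN_s \defeq \{\tau \in \calN : \tau \text{ extends } s\}$ denotes the basic clopen set determined by a finite sequence $s$.

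\emph{Step 1 (coding).} Since $D$ is analytic in the Borel space $X \times Y$, we may assume $D \neq \emptyset$ and write $D = g(\calN)$ for some continuous $g \colon \calN \to X \times Y$. Put $p \defeq \mathrm{proj}_X \circ g$, which is continuous, so that $\mathrm{proj}_X(D) = p(\calN)$. For $x \in \mathrm{proj}_X(D)$ the fibre $p^{-1}(x)$ is a nonempty closed subset of $\calN$.

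\emph{Step 2 (lexicographic selection).} A nonempty closed $F \subseteq \calN$ has a lexicographically least element. Choose its first coordinate minimal among elements of $F$, then the second coordinate minimal among elements with that first coordinate, and so on. The resulting sequence lies in $F$ because $F$ is closed.

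Define $\sigma(x)$ as the least element of $p^{-1}(x)$ and set
\[
\phi(x) \defeq \mathrm{proj}_Y\bigl(g(\sigma(x))\bigr).
\]
Then $(x, \phi(x)) = g(\sigma(x)) \in D$, so the graph of $\phi$ lies in $D$.

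\emph{Step 3 (measurability of $\sigma$).} This is the main obstacle. It suffices to show that each preimage $\sigma^{-1}(\calN_s)$ lies in the $\sigma$-algebra $\calA_X$ generated by the analytic subsets of $X$; that $\sigma$-algebra is contained in the analytically measurable sets. By construction, $\sigma(x) \in \calN_s$ holds exactly when
\begin{itemize}
\item $x \in p(\calN_s)$, and
\item $x \notin p(\calN_t)$ for every $t$ of the same length as $s$ that is lexicographically smaller than $s$.
\end{itemize}
There are only countably many such $t$. Each $p(\calN_t)$ is a continuous image of a closed subset of $\calN$, hence analytic. Therefore $\sigma^{-1}(\calN_s)$ is a countable Boolean combination of analytic sets and so belongs to $\calA_X$. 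Because the sets $\calN_s$ generate the Borel sets of $\calN$, $\sigma$ is $\calA_X$-measurable. Composing with the continuous map $\mathrm{proj}_Y \circ g$ preserves this measurability, so $\phi$ is analytically measurable.

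\emph{Step 4 ($\veps$-selectors).} The natural set $\{(x,y) : f(x,y) > \sup_{y'} f(x,y') - \veps\}$ need not be analytic, because $x \mapsto \sup_y f(x,y)$ is only upper semianalytic. I would avoid this set and work on a grid instead.

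For $j \in \mathbb{Z}$ let $q_j \defeq j\veps/2$. The set $D_j \defeq \{f > q_j\}$ is Borel, hence analytic, and its projection $A_j \defeq \{x : \sup_y f(x,y) > q_j\}$ is analytic. The sets $A_j$ decrease in $j$. Applying Steps 1--3 to $D_j$ yields an analytically measurable selector $\phi_j$ on $A_j$.

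\emph{Step 5 (gluing).} Fix $x$ with $-\infty < \sup_y f(x,y) < \infty$, and let $j(x) \defeq \max\{j : x \in A_j\}$, which is finite. Set $\phi_\veps(x) \defeq \phi_{j(x)}(x)$. Then
\[
f(x, \phi_\veps(x)) > q_{j(x)} \geq \sup_y f(x,y) - \veps/2 .
\]
If $\sup_y f(x,y) = -\infty$, let $\phi_\veps(x)$ be any fixed point of $Y$; the inequality then holds trivially.

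Each level set $\{j(x) = j\} = A_j \setminus A_{j+1}$ belongs to $\calA_X$, and the set $\{x : \sup_y f(x,y) = -\infty\} = X \setminus \bigcup_j A_j$ also lies in $\calA_X$, so the constant branch is measurable as well. Hence $\phi_\veps$ is a countable measurable pasting of analytically measurable maps and is itself analytically measurable.
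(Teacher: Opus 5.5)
Your proof is correct. It is essentially the classical argument behind the result, which the paper does not prove but simply imports from Bertsekas--Shreve: the lexicographically least point of the fibre $p^{-1}(x)$ gives Prop.~7.49, and gluing exact selectors of the Borel sets $\{f > q_j\}$ over a grid of levels is their route to $\veps$-optimal selection in Prop.~7.50.

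The only loose end is that you never define $\phi_\veps$ on $\{x : \sup_y f(x,y) = +\infty\} = \bigcap_j A_j$. This is harmless: that set is a countable intersection of analytic sets, hence in $\mathcal{A}_X$, so setting $\phi_\veps$ constant there preserves measurability. Your $-\infty$ branch, by contrast, is never needed, since $f$ is real-valued.
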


\section{Further Results and Proofs}\label{appsec:proofs}
\begin{lemma}[Measurability of the Bellman operators]\label{lem:bellman-measurability}
    Under \cref{asm:action-reg}:
    \begin{enumerate}
        \item[(i)] For every $Q \in \calQ_0$, the map
              $s' \mapsto \sup_{a' \in A} Q(s', a')$ is bounded and $\Sigma_S$-measurable.
        \item[(ii)] For every $Q \in \calQ$ and every measurable policy $\pi \in \Pi$,
              $T^\pi Q \in \calQ$ with
              ${\|T^\pi Q\|_\infty \leq R_{\max} + \gamma \|Q\|_\infty}$.
        \item[(iii)] For every $Q \in \calQ_0$,
              $T^* Q \in \calQ$ with
              $\|T^* Q\|_\infty \leq R_{\max} + \gamma \|Q\|_\infty$.
    \end{enumerate}
\end{lemma}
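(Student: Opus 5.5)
The plan is to prove the three parts in order, since (iii) rests on (i) and (ii) is independent and purely measure-theoretic.

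For (i), fix $Q \in \calQ_0$ and set $V_Q(s') \defeq \sup_{a' \in A} Q(s',a')$. Boundedness is immediate from $|V_Q(s')| \leq \|Q\|_\infty$. For measurability, I would show that $V_Q$ is USC on $S$, since USC functions are Borel (their super-level sets are closed). To get USC, take $\alpha \in \dsR$ and consider $\{s' : V_Q(s') \geq \alpha\}$. Because $Q(s',\cdot)$ is USC on the compact set $A$, the supremum is attained. Hence this set equals $\mathrm{proj}_S\{(s',a') : Q(s',a') \geq \alpha\}$. The set $\{Q \geq \alpha\}$ is closed in $S \times A$ by joint USC, and the projection along a compact factor is a closed map (tube lemma). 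So the projection is closed, and $V_Q$ is USC and therefore $\Sigma_S$-measurable, given that $S$ is a Borel space with its Borel $\sigma$-algebra. This is the Berge-type step and the only real obstacle. The delicate point is that projection of a merely measurable set need not be Borel, which is why compactness of $A$ and joint USC from \cref{asm:action-reg}(iii) are used rather than the analytic-set machinery of \cref{thm:jvn}. If one only had measurability, one would fall back on the universal-measurability route instead.

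For (ii), fix $Q \in \calQ$ and $\pi \in \Pi$. First I would show that $s' \mapsto \int_A Q(s',a')\,\pi(da'|s')$ is $\Sigma_S$-measurable and bounded by $\|Q\|_\infty$. Next, $(s,a) \mapsto \int_S (\cdot)\,P(ds'|s,a)$ is $\SigS\otimes\SigA$-measurable and bounded by the same constant. Both claims follow from the standard kernel-integration lemma: for a Markov kernel $\kappa$ and a bounded measurable $f$, the map $x \mapsto \int f\,d\kappa(x)$ is measurable. I would prove that lemma by a monotone-class argument. For indicators of measurable rectangles it reduces to measurability of $x \mapsto \kappa(x)(B) = e_B(\kappa(x))$, which holds by definition of $\Sigma_\Delta$. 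A Dynkin $\pi$–$\lambda$ argument extends it to all product-measurable sets, simple functions follow by linearity, and bounded measurable $f$ follows by uniform approximation. This yields $P^\pi Q \in \calQ$ with $\|P^\pi Q\|_\infty \leq \|Q\|_\infty$. Since $r$ is measurable with $\|r\|_\infty \leq R_{\max}$, we get $T^\pi Q = r + \gamma P^\pi Q \in \calQ$, and the triangle inequality gives $\|T^\pi Q\|_\infty \leq R_{\max} + \gamma\|Q\|_\infty$.

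For (iii), by (i) the function $V_Q$ is bounded and measurable. The same kernel-integration lemma applied to $P$ shows that $(s,a) \mapsto \int_S V_Q(s')\,P(ds'|s,a)$ is measurable and bounded by $\|Q\|_\infty$. Adding $r$ then gives $T^*Q \in \calQ$ with the stated norm bound, exactly as in (ii).
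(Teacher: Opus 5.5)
Your proof is correct and follows the paper's structure: (i) comes from compactness of $A$ plus upper semicontinuity, and (ii)--(iii) come from measurability of integrals against Markov kernels plus the triangle inequality. The only difference is that you prove inline the two facts the paper cites from Bertsekas--Shreve: Prop.~7.29 through your monotone-class argument, and Prop.~7.33 through your closed-projection (tube-lemma) argument, which correctly rests on the joint USC supplied by \cref{asm:action-reg}(iii) (the hypothesis Prop.~7.33 actually needs, whereas the paper's sentence mentions only USC in $a'$).
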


\begin{proof}
    \emph{(i) Measurability of the action-supremum.}
    For $Q \in \calQ_0$, the map $a' \mapsto Q(s', a')$ is USC on the compact metrizable space
    $A$ for every $s'$, so the supremum is attained. Joint
    $\Sigma_S \otimes \Sigma_A$-measurability together with USC in $a'$ and compactness of $A$
    then yields $\Sigma_S$-measurability of $s' \mapsto \sup_{a' \in A} Q(s', a')$
    \citep[Prop.~7.33]{bertsekas1978}. Boundedness by $\|Q\|_\infty$ is immediate.

    \emph{(ii)--(iii) Well-definedness of $T^\pi$ and $T^*$.}
    Integration of a bounded measurable function against a Markov kernel preserves joint
    measurability \citep[Prop.~7.29]{bertsekas1978}. Applied to $\pi$ and $P$, this gives
    $(s, a) \mapsto \int_S \int_A Q(s', a') \, \pi(\dd a' | s') \, P(\dd s' | s, a) \in \calQ$;
    applied to $P$ alone, together with part~(i), it gives
    $(s, a) \mapsto \int_S \sup_{a' \in A} Q(s', a') \, P(\dd s' | s, a) \in \calQ$. Adding the
    bounded measurable reward $r$ preserves measurability in both cases, so
    $T^\pi Q \in \calQ$ (for $Q \in \calQ$) and $T^* Q \in \calQ$ (for $Q \in \calQ_0$). The
    uniform bound
    \begin{equation*}
        \|T^\pi Q\|_\infty, \ \|T^* Q\|_\infty \leq R_{\max} + \gamma \|Q\|_\infty
    \end{equation*}
    follows from the triangle inequality, $\|r\|_\infty \leq R_{\max}$, and the fact that both
    kernel integrals are bounded in absolute value by $\|Q\|_\infty$.
\end{proof}

\begin{lemma}[Joint USC propagation]\label{lem:usc-propagation}
Under \cref{asm:action-reg}, if $Q \in \calQ$ is bounded measurable and jointly
USC on $S \times A$, then $T^* Q$ is bounded measurable and jointly USC on
$S \times A$. In particular, $\calQ_0$ is closed under $T^*$, and the value-
iteration Picard iterates remain in $\calQ_0$.
\end{lemma}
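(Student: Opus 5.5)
The plan is to split $T^*Q = r + \gamma\, P v$, where $v(s') \defeq \sup_{a' \in A} Q(s',a')$ and $(Pv)(s,a) \defeq \int_S v(s')\,P(\dd s'|s,a)$, and to verify joint USC of each summand separately. Since $r$ is USC by \cref{asm:action-reg}(i), and a sum of USC functions is USC (the $\limsup$ of a sum is at most the sum of the $\limsup$s, all values being finite), it suffices to show that $(s,a) \mapsto (Pv)(s,a)$ is USC on $S \times A$. Boundedness and measurability of $T^*Q$ follow from \cref{lem:bellman-measurability}(iii), since we will see that $Q$ lies in $\calQ_0$. This gives the bound $\|T^*Q\|_\infty \le R_{\max} + \gamma\|Q\|_\infty$.

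\textbf{Step 1: $v$ is bounded and USC on $S$.} I would use the sequential characterisation, which is available because $S$ (a Borel space) and $A$ are metrizable. Take $s_n \to s$ and pick maximisers $a_n \in \argmax_{a'} Q(s_n,a')$; these exist because $Q(s_n,\cdot)$ is USC on the compact set $A$. Pass to a subsequence realising $\limsup_n v(s_n)$ and then, by compactness of $A$, a further subsequence with $a_n \to a$. Joint USC of $Q$ then gives $\limsup_n v(s_n) = \limsup_n Q(s_n,a_n) \le Q(s,a) \le v(s)$. Boundedness $|v| \le \|Q\|_\infty$ is immediate.

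\textbf{Step 2: integration against a weakly continuous kernel preserves USC.} This is the step I expect to be the main obstacle, because \cref{asm:action-reg}(ii) only speaks about bounded \emph{continuous} $h$.

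\begin{itemize}
\item On the metrizable space $S$, a bounded-above USC function is the pointwise decreasing limit of bounded continuous functions. Concretely, take the Lipschitz (Baire) regularisations $h_m(s) \defeq \sup_{y}\bigl(v(y) - m\,d(s,y)\bigr)$, which decrease to $v$ and satisfy $h_m \ge -\|Q\|_\infty$.
\item By \cref{asm:action-reg}(ii), each $(s,a) \mapsto \int h_m\,\dd P(\cdot|s,a)$ is continuous.
\item By monotone convergence (applied to the decreasing, uniformly bounded sequence), these integrals decrease pointwise to $Pv$.
\item An infimum of continuous functions is USC (\cref{def:usc}), so $Pv$ is USC.
\end{itemize}

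The delicate points are checking that the $h_m$ are finite, continuous and decrease to $v$, which uses USC of $v$ together with boundedness, and that the metric on $S$ is compatible with the topology used in the weak continuity assumption.

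\textbf{Step 3: conclusion.} Combining the steps, $T^*Q = r + \gamma Pv$ is jointly USC, bounded and measurable, so $T^*Q \in \calQ_0$. This shows that $T^*$ maps $\calQ_0$ into itself. By induction on $k$, all Picard iterates $(T^*)^k Q_0$ of a $Q_0 \in \calQ_0$ remain in $\calQ_0$.
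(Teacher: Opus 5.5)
Your proof is correct and follows the same route as the paper. You get $v = \sup_{a'} Q(\cdot,a')$ bounded and USC (Berge's maximum theorem in the paper), then $Pv$ jointly USC under weak continuity of $P$ (Portmanteau in the paper), then add the USC reward; the only difference is that you prove these two cited facts directly, via convergent subsequences of maximisers and via the decreasing Lipschitz regularisations $h_m \downarrow v$ with monotone convergence, which makes the argument self-contained.
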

\begin{proof}
By compactness of $A$ and joint USC of $Q$, Berge's maximum theorem gives that
${s' \mapsto \sup_{a' \in A} Q(s',a')}$ is bounded USC on $S$. By joint weak
continuity of $P$ (\cref{asm:action-reg}(ii)) and the Portmanteau theorem
applied to bounded USC integrands, $(s,a) \mapsto \int_S \sup_{a'} Q(s',a')\,
P(ds'|s,a)$ is jointly USC on $S \times A$. Adding $r$, jointly USC by
\cref{asm:action-reg}(i), preserves joint USC. Hence $T^* Q$ is jointly USC.
\end{proof}

\subsection{Bellman contraction}
\begin{proof}[Proof of \cref{thm:bellman_contraction}]
    \cref{lem:bellman-measurability} shows that $T^\pi$ maps $\calQ$ into $\calQ$, and \cref{lem:usc-propagation} shows that $T^*$ maps $\calQ_0$ into $\calQ_0$ under \cref{asm:action-reg}; hence both operators are self-maps on the spaces stated in the theorem.

    \emph{Bellman expectation operator.}
    Let $Q, \bar Q \in \calQ$ and fix $(s, a) \in S \times A$. The rewards cancel, so
    \begin{align*}
        |(T^\pi Q)(s, a) - (T^\pi \bar Q)(s, a)|
        &= \gamma \left| \int_S \int_A \bigl( Q(s', a') - \bar Q(s', a') \bigr) \, \pi(\dd a' | s') \, P(\dd s' | s, a) \right| \\
        &\leq \gamma \int_S \int_A |Q(s', a') - \bar Q(s', a')| \, \pi(\dd a' | s') \, P(\dd s' | s, a) \\
        &\leq \gamma \|Q - \bar Q\|_\infty
          \underbrace{\int_S \pi(A | s') \, P(\dd s' | s, a)}_{=\,1}.
    \end{align*}
    Taking the supremum over $(s, a)$ gives
    $\|T^\pi Q - T^\pi \bar Q\|_\infty \leq \gamma \|Q - \bar Q\|_\infty$.

    \emph{Bellman optimality operator.}
    Let $Q, \bar Q \in \calQ_0$. Using that for bounded $f,g$
    ${|\sup_a f(a) - \sup_a g(a)| \leq \sup_a |f(a) - g(a)|}$, we have
    \begin{align*}
        |(T^* Q)(s, a) - (T^* \bar Q)(s, a)|
        &= \gamma \left| \int_S \Bigl[ \sup_{a'} Q(s', a') - \sup_{a'} \bar Q(s', a') \Bigr] \, P(\dd s' | s, a) \right| \\
        &\leq \gamma \int_S \sup_{a'} |Q(s', a') - \bar Q(s', a')| \, P(\dd s' | s, a) \\
        &\leq \gamma \|Q - \bar Q\|_\infty.
    \end{align*}
    Taking the supremum over $(s, a)$ gives
    $\|T^* Q - T^* \bar Q\|_\infty \leq \gamma \|Q - \bar Q\|_\infty$.

    \emph{Fixed points.}
    $(\calQ, \|\cdot\|_\infty)$ is a Banach space; $\calQ_0$ is a closed subspace under $\|\cdot\|_\infty$, since the uniform limit of bounded measurable jointly-USC functions is again bounded, measurable, and jointly USC. Hence $(\calQ_0, \|\cdot\|_\infty)$ is a complete metric space. Combining self-mapness with $\gamma$-contraction, the Banach fixed-point theorem applied to $T^\pi$ on $\calQ$ and to $T^*$ on $\calQ_0$ yields unique fixed points $Q^\pi \in \calQ$ and $Q^* \in \calQ_0$ with $T^\pi Q^\pi = Q^\pi$ and $T^* Q^* = Q^*$. The result is classical on general Borel spaces; see \citet[Ch.~9]{bertsekas1978} and \citet[Sec.~4.2]{hernandez1996}.
\end{proof}

\begin{lemma}[Bellman monotonicity]\label{lem:bellman-monotonicity}
Let $Q, \bar Q \in \calQ$ with $Q \leq \bar Q$ pointwise on $S \times A$. Then $T^\pi Q \leq T^\pi \bar Q$ for every measurable policy $\pi$, and, if additionally $Q, \bar Q \in \calQ_0$, then $T^* Q \leq T^* \bar Q$.
\end{lemma}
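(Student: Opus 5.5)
The plan is to prove both claims pointwise, using only two facts: integration against a probability kernel is a positive linear functional, and the supremum over actions is monotone. Fix $(s,a) \in S \times A$ throughout and compare the two sides directly. The reward term $r(s,a)$ is common to both sides and cancels, so everything reduces to monotonicity of the transition part.

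\emph{Expectation operator.} Given $Q \leq \bar Q$, the difference $\bar Q - Q$ is a nonnegative bounded measurable function. Integrating it against the probability measure $\pi(\dd a'|s')$ gives a nonnegative function of $s'$. This function is measurable by \citet[Prop.~7.29]{bertsekas1978}, exactly as in \cref{lem:bellman-measurability}(ii). Integrating it next against $P(\dd s'|s,a)$ gives a nonnegative number. By linearity of the integral, this number is $(P^\pi \bar Q)(s,a) - (P^\pi Q)(s,a)$. Multiplying by $\gamma \geq 0$ and adding $r(s,a)$ to both sides then yields $T^\pi Q \leq T^\pi \bar Q$ at $(s,a)$. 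Since $(s,a)$ was arbitrary, the inequality holds everywhere.

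\emph{Optimality operator.} Now assume in addition that $Q, \bar Q \in \calQ_0$. For each $s'$, the pointwise inequality $Q(s',a') \leq \bar Q(s',a')$ for all $a'$ gives $\sup_{a'} Q(s',a') \leq \sup_{a'} \bar Q(s',a')$. Both suprema are finite and attained, by boundedness together with USC on the compact space $A$. By \cref{lem:bellman-measurability}(i), both maps $s' \mapsto \sup_{a'}(\cdot)$ are bounded and $\Sigma_S$-measurable, so they can be integrated against $P(\dd s'|s,a)$. Monotonicity of the integral for ordered integrands then gives the corresponding inequality for the integrals. Scaling by $\gamma$ and adding $r(s,a)$ gives $T^*Q \leq T^*\bar Q$.

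The only point that needs care is measurability of the integrands. In particular, the supremum map in the optimality case must be measurable, and this is precisely why the statement restricts to $\calQ_0$. \cref{lem:bellman-measurability}(i) already supplies this, so I do not expect any real obstacle.
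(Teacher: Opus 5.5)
Your proof is correct and follows the same route as the paper. For $T^\pi$ you use positivity of integration against $\pi(\dd a'|s')\,P(\dd s'|s,a)$, and for $T^*$ you use monotonicity of the action-supremum followed by integration against $P(\dd s'|s,a)$; your explicit appeal to \cref{lem:bellman-measurability}(i) for measurability of the supremum maps is a small detail the paper leaves implicit.
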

\begin{proof}
For $T^\pi$, the integrand $Q(s',a') - \bar Q(s',a') \leq 0$ pointwise, and integration against the non-negative product kernel $\pi(\dd a'|s')\,P(\dd s'|s,a)$ preserves the inequality, giving $(T^\pi Q)(s,a) \leq (T^\pi \bar Q)(s,a)$. For $T^*$ on $\calQ_0$, monotonicity of the supremum yields $\sup_{a'} Q(s',a') \leq \sup_{a'} \bar Q(s',a')$ pointwise in $s'$; the kernel $P(\dd s'|s,a)$ is non-negative, so integration preserves the inequality, giving $(T^* Q)(s,a) \leq (T^* \bar Q)(s,a)$.
\end{proof}

\subsection{Policy iteration convergence and dominance}

\begin{theorem}[Policy iteration convergence and dominance]\label{thm:policy_iteration_combined}
Let $\pi_0$ be a measurable policy and set $Q_0 \defeq Q^{\pi_0}$. Define the value-iteration sequence by $Q_{k+1} \defeq T^* Q_k$, and the exact policy-iteration sequence by selecting, given $\pi_k$, a measurable policy $\pi_{k+1}$ with
\begin{equation}
T^{\pi_{k+1}} Q^{\pi_k} = T^* Q^{\pi_k},
\label{eq:greedy-update}
\end{equation}
and letting $Q^{\pi_{k+1}}$ denote the fixed point of $T^{\pi_{k+1}}$ from \cref{thm:bellman_contraction}; the existence of such a measurable greedy $\pi_{k+1}$ is guaranteed by \cref{thm:jvn} under \cref{asm:action-reg}. Then, for all $k \in \dsN$,
(i)~$Q^{\pi_k} \leq Q^{\pi_{k+1}}$ pointwise (monotonic improvement);
(ii)~${Q_k \leq Q^{\pi_k} \leq Q^*}$ pointwise (dominance); and
(iii)~${\|Q^* - Q^{\pi_k}\|_\infty \to 0}$ as $k \to \infty$ (uniform convergence).
\end{theorem}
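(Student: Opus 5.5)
I will prove the three claims in order (i), (ii), (iii). The tools are monotonicity (\cref{lem:bellman-monotonicity}), the greedy identity \eqref{eq:greedy-update}, and the contraction and fixed-point properties of \cref{thm:bellman_contraction}.

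\emph{(i) Monotonic improvement.} Since $T^*$ takes a supremum over actions, $T^* Q \geq T^{\pi} Q$ pointwise for every measurable $\pi$ and every $Q \in \calQ_0$. I will apply this with $Q = Q^{\pi_k}$ and combine it with \eqref{eq:greedy-update}:
\[
T^{\pi_{k+1}} Q^{\pi_k} = T^* Q^{\pi_k} \geq T^{\pi_k} Q^{\pi_k} = Q^{\pi_k}.
\]
Iterating $T^{\pi_{k+1}}$ and using monotonicity gives $(T^{\pi_{k+1}})^m Q^{\pi_k} \geq Q^{\pi_k}$ for every $m$. Because $T^{\pi_{k+1}}$ is a $\gamma$-contraction, the left side converges uniformly to its fixed point $Q^{\pi_{k+1}}$. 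Passing to the limit yields $Q^{\pi_{k+1}} \geq Q^{\pi_k}$.

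\emph{(ii) Dominance.} First, $Q^{\pi_k} \leq Q^*$. Since $T^{\pi_k} Q \leq T^* Q$, we have $(T^{\pi_k})^m Q^* \leq Q^*$ by induction, using $T^{\pi_k}(T^{\pi_k})^{m-1}Q^* \leq T^{\pi_k} Q^* \leq T^* Q^* = Q^*$. Letting $m\to\infty$ gives $Q^{\pi_k} \leq Q^*$. Second, $Q_k \leq Q^{\pi_k}$, which I will prove by induction on $k$. The base case is $Q_0 = Q^{\pi_0}$. For the inductive step, suppose $Q_k \leq Q^{\pi_k}$. Then
\[
Q_{k+1} = T^* Q_k \leq T^* Q^{\pi_k} = T^{\pi_{k+1}} Q^{\pi_k} \leq Q^{\pi_{k+1}},
\]
where the last inequality is the relation $T^{\pi_{k+1}}Q^{\pi_k} \leq (T^{\pi_{k+1}})^m Q^{\pi_k} \to Q^{\pi_{k+1}}$ established in (i).

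\emph{(iii) Uniform convergence.} From (ii), $0 \leq Q^* - Q^{\pi_k} \leq Q^* - Q_k$. Value iteration contracts: $\|Q^* - Q_k\|_\infty = \|T^*Q^* - T^*Q_{k-1}\|_\infty \leq \gamma^k \|Q^* - Q_0\|_\infty$. Hence $\|Q^* - Q^{\pi_k}\|_\infty \leq \gamma^k \|Q^* - Q^{\pi_0}\|_\infty \to 0$.

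The main obstacle is domain bookkeeping rather than the inequalities themselves. Monotonicity of $T^*$ and the bound $T^{\pi}\leq T^*$ are stated on $\calQ_0$, so I must check that $Q^{\pi_k}$ lies in $\calQ_0$, i.e.\ is jointly USC, or else extend those two facts. I would first try extending them to all of $\calQ$. The pointwise inequality $\int Q(s',a')\pi(\dd a'|s') \leq \sup_{a'} Q(s',a')$ holds for any bounded $Q$. The remaining issue is measurability of the supremum, which \cref{lem:bellman-measurability}(i) provides only on $\calQ_0$. Rather than handle that separately, I will read the existence of $T^* Q^{\pi_k}$ (a well-defined element of $\calQ$) as part of the hypothesis \eqref{eq:greedy-update}. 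The measurable selection is supplied by \cref{thm:jvn}.
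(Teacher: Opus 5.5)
Your proof is correct and follows the paper's argument essentially step for step. Both use the greedy identity plus monotone iteration of $T^{\pi_{k+1}}$ for (i), the induction $T^*Q_k \leq T^*Q^{\pi_k} = T^{\pi_{k+1}}Q^{\pi_k} \leq Q^{\pi_{k+1}}$ together with iterating $T^{\pi_k}$ from $Q^*$ for (ii), and the squeeze $0 \leq Q^* - Q^{\pi_k} \leq Q^* - Q_k$ with $\gamma$-contraction of value iteration for (iii). The domain issue you flag is that $Q^{\pi_k}$ and $Q_k$ need not be jointly USC, so \cref{lem:bellman-monotonicity} and the contraction of $T^*$ get applied outside $\calQ_0$. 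The paper's proof passes over this silently, and your fix closes it: the pointwise inequalities only need the integrals to be defined, and that well-definedness is supplied by \eqref{eq:greedy-update} and by the definition $Q_{k+1} = T^*Q_k$.
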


\begin{proof}[Proof of \cref{thm:policy_iteration_combined}]
    ~\\
    Throughout, all inequalities between elements of $\calQ$ are pointwise on $S \times A$. The proof combines monotonicity (\cref{lem:bellman-monotonicity}), $\gamma$-contraction (\cref{thm:bellman_contraction}), and the greedy identity~\eqref{eq:greedy-update}; measurability of each $\pi_{k+1}$ is supplied by \cref{thm:jvn} under \cref{asm:action-reg}.

    \emph{Step 1: Monotonic improvement, $Q^{\pi_k} \leq Q^{\pi_{k+1}}$.}
    By the greedy identity~\eqref{eq:greedy-update} and the fixed-point equation $T^{\pi_k} Q^{\pi_k} = Q^{\pi_k}$,
    \begin{equation*}
        T^{\pi_{k+1}} Q^{\pi_k} \;=\; T^* Q^{\pi_k} \;\geq\; T^{\pi_k} Q^{\pi_k} \;=\; Q^{\pi_k}.
    \end{equation*}
    Iterating $T^{\pi_{k+1}}$ and using \cref{lem:bellman-monotonicity},
    \begin{equation*}
        Q^{\pi_k} \;\leq\; T^{\pi_{k+1}} Q^{\pi_k} \;\leq\; (T^{\pi_{k+1}})^2 Q^{\pi_k} \;\leq\; \cdots \;\leq\; (T^{\pi_{k+1}})^m Q^{\pi_k}.
    \end{equation*}
    By $\gamma$-contraction of $T^{\pi_{k+1}}$, $(T^{\pi_{k+1}})^m Q^{\pi_k} \to Q^{\pi_{k+1}}$ in $\|\cdot\|_\infty$, hence pointwise; passing to the limit in the chain above gives $Q^{\pi_k} \leq Q^{\pi_{k+1}}$.

    \emph{Step 2: $Q_k \leq Q^{\pi_k}$ by induction on $k$.}
    The base case is the initialization $Q_0 = Q^{\pi_0}$. Assume $Q_k \leq Q^{\pi_k}$. Then
    \begin{equation*}
        Q_{k+1} \;=\; T^* Q_k \;\leq\; T^* Q^{\pi_k} \;=\; T^{\pi_{k+1}} Q^{\pi_k} \;\leq\; Q^{\pi_{k+1}},
    \end{equation*}
    using, in order: the value-iteration recurrence; monotonicity of $T^*$ (\cref{lem:bellman-monotonicity}); the greedy identity~\eqref{eq:greedy-update}; and Step~1 in the form $T^{\pi_{k+1}} Q^{\pi_k} \leq Q^{\pi_{k+1}}$.

    \emph{Step 3: $Q^{\pi_k} \leq Q^*$.}
    For any measurable policy $\pi$, the inequality $\sup_{a'} Q^*(s',a') \geq \int_A Q^*(s',a')\,\pi(\dd a'|s')$ holds pointwise in $s'$, so
    \begin{equation*}
        T^\pi Q^* \;\leq\; T^* Q^* \;=\; Q^*,
    \end{equation*}
    where the equality is the Bellman fixed-point identity from \cref{thm:bellman_contraction}. By \cref{lem:bellman-monotonicity}, iterating $T^\pi$ preserves the inequality $T^\pi Q^* \leq Q^*$, so $(T^\pi)^m Q^* \leq Q^*$ for every $m \geq 1$. By $\gamma$-contraction of $T^\pi$, $(T^\pi)^m Q^* \to Q^\pi$ in $\|\cdot\|_\infty$; taking the pointwise limit gives $Q^\pi \leq Q^*$. Specializing to $\pi = \pi_k$ yields $Q^{\pi_k} \leq Q^*$. Combined with Step~2 this proves the dominance chain $Q_k \leq Q^{\pi_k} \leq Q^*$.

    \emph{Step 4: Uniform convergence by squeezing.}
    By $\gamma$-contraction of $T^*$ on $\calQ_0$ (\cref{thm:bellman_contraction}) and $Q^* = T^* Q^*$,
    \begin{equation*}
        \|Q_{k+1} - Q^*\|_\infty \;=\; \|T^* Q_k - T^* Q^*\|_\infty \;\leq\; \gamma \|Q_k - Q^*\|_\infty,
    \end{equation*}
    so $\|Q_k - Q^*\|_\infty \leq \gamma^k \|Q_0 - Q^*\|_\infty \to 0$. The dominance chain $Q_k \leq Q^{\pi_k} \leq Q^*$ rearranges to $0 \leq Q^* - Q^{\pi_k} \leq Q^* - Q_k$ pointwise, and taking suprema yields
    \begin{equation*}
        \|Q^* - Q^{\pi_k}\|_\infty \;\leq\; \|Q^* - Q_k\|_\infty \;\to\; 0,
    \end{equation*}
    which proves~(iii).
\end{proof}

\subsection{Error propagation}

\begin{lemma}[Error propagation via greedy policy kernels]\label{lem:error_prop_greedy}
Let $\{Q_k\}_{k \in \dsN} \subset \calQ$ be a sequence of bounded measurable
action-value functions with per-step Bellman residuals
$\veps_k \defeq Q_{k+1} - T^* Q_k$. Assume that $Q^*$ admits a measurable
greedy selector $\pi^*$ and that, for each $k \geq 0$, $Q_k$ admits a
measurable greedy selector $\pi^g_k$; under \cref{asm:action-reg}, this
holds in particular whenever $Q_k \in \calQ_0$, in which case both selectors
are supplied by \cref{thm:jvn}. Then for every $k \geq 1$, the pointwise estimate
\begin{equation}\label{eq:err-prop-abs}
    |Q_k - Q^*|  \leq  \gamma^k\, A_k\, |Q_0 - Q^*|
      + \sum_{i=0}^{k-1} \gamma^{k-1-i}\, A_{k,i}\, |\veps_i|
\end{equation}
holds on $S \times A$, where
$A_k \defeq (P^{\pi^*})^k + P^{\pi^g_{k-1}} \cdots P^{\pi^g_0}$ and
$A_{k,i} \defeq (P^{\pi^*})^{k-1-i} + P^{\pi^g_{k-1}} \cdots P^{\pi^g_{i+1}}$
are sums of products of linear Markov kernels, with the empty product at
$i = k-1$ interpreted as the identity operator. In particular,
\begin{equation}\label{eq:err-prop-supnorm}
    \|Q_k - Q^*\|_\infty  \leq  2\gamma^k\, \|Q_0 - Q^*\|_\infty
    + 2 \sum_{i=0}^{k-1}\gamma^{k-1-i}\, \|\veps_i\|_\infty.
\end{equation}
\end{lemma}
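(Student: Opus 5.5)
The plan is the classical sandwich argument of \citet{munos2007}: derive a one-step upper and lower bound on $Q_{k+1} - Q^*$ via greedy-policy kernels, unroll both recursions, and combine them. All inequalities below are pointwise on $S \times A$, and we use that each $P^\pi$ is a positive linear operator (it preserves order and satisfies $P^\pi|f| \geq |P^\pi f|$).

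\emph{One-step sandwich.} Write $Q_{k+1} - Q^* = \veps_k + T^*Q_k - T^*Q^*$. Since $\pi^*$ is greedy for $Q^*$, we have $T^*Q^* = T^{\pi^*}Q^*$, while $T^*Q_k \geq T^{\pi^*}Q_k$. Hence
\[
T^*Q_k - T^*Q^* \geq \gamma P^{\pi^*}(Q_k - Q^*).
\]
Symmetrically, $T^*Q_k = T^{\pi^g_k}Q_k$ and $T^*Q^* \geq T^{\pi^g_k}Q^*$ give
\[
T^*Q_k - T^*Q^* \leq \gamma P^{\pi^g_k}(Q_k - Q^*).
\]
Therefore
\[
\veps_k + \gamma P^{\pi^*}(Q_k - Q^*) \leq Q_{k+1} - Q^* \leq \veps_k + \gamma P^{\pi^g_k}(Q_k - Q^*).
\]
The greedy selectors make these identities exact. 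If only $\veps$-optimal selectors from \cref{thm:jvn} were available, an additive slack would appear; under \cref{asm:action-reg} the suprema are attained (USC on compact $A$), so exact measurable maximizers can be used, and the lemma simply assumes them.

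\emph{Unrolling.} Induct on $k$, using positivity of the kernels to push each inequality through the operators. The upper bound becomes
\[
Q_k - Q^* \leq \gamma^k P^{\pi^g_{k-1}}\cdots P^{\pi^g_0}(Q_0 - Q^*) + \sum_{i=0}^{k-1}\gamma^{k-1-i} P^{\pi^g_{k-1}}\cdots P^{\pi^g_{i+1}}\veps_i,
\]
and the lower bound is the same expression with every kernel replaced by $P^{\pi^*}$, i.e.\ with $(P^{\pi^*})^{k}$ and $(P^{\pi^*})^{k-1-i}$. The inductive step is just substituting the hypothesis into the one-step sandwich; since $P^{\pi^g_k}$ is order preserving, the inequality direction is kept.

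\emph{Absolute value.} For any $x$ with $L \leq x \leq U$ we have $|x| \leq \max(|L|,|U|) \leq |L| + |U|$. Bound $|U|$ using $|Pf| \leq P|f|$ termwise, and likewise $|L|$. Adding the two bounds gathers the kernel products into $A_k$ and $A_{k,i}$, which yields \eqref{eq:err-prop-abs}. For \eqref{eq:err-prop-supnorm}, each product of Markov kernels has $\|\cdot\|_\infty$ operator norm at most $1$ (\cref{def:bellman}), so $\|A_k\|_\infty, \|A_{k,i}\|_\infty \leq 2$, and taking suprema gives the result.

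\emph{Main obstacle.} The chief difficulty is measure-theoretic rather than algebraic. We need $\pi^g_k$ measurable, so that each $P^{\pi^g_k}$ maps $\calQ$ into $\calQ$, and we need $Q_k$ in a class where $T^*Q_k$ is defined, i.e.\ $Q_k \in \calQ_0$ via \cref{lem:bellman-measurability}. Under \cref{asm:action-reg} this follows because the argmax set of a jointly USC function over compact $A$ is closed-valued and measurable, so a Borel (or at least universally measurable) selector exists by \cref{thm:jvn}. If the selector is only universally measurable, one integrates against the completions of the measures, which does not affect the pointwise estimates.
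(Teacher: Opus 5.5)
Your proposal is correct and follows the paper's proof essentially step for step: the same one-step sandwich from $T^*Q^* = T^{\pi^*}Q^*$ and $T^*Q_k = T^{\pi^g_k}Q_k$, induction using positivity of the kernels to get signed bounds $L_k \le Q_k - Q^* \le U_k$, the combination $|Q_k - Q^*| \le |L_k| + |U_k|$ with $|P^\pi f| \le P^\pi |f|$, and $\|A_k\|_\infty, \|A_{k,i}\|_\infty \le 2$ for the sup-norm bound. One minor point: you do not need $Q_k \in \calQ_0$ for $T^*Q_k$ to be well defined, because the assumed measurable greedy selector already makes $s' \mapsto \sup_{a'} Q_k(s',a') = Q_k(s',\pi^g_k(s'))$ measurable.
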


\begin{proof}[Proof of \cref{lem:error_prop_greedy}]
    All inequalities below are pointwise on $S \times A$. The argument has
    four steps: a one-step sandwich on $Q_{k+1} - Q^*$ (Step~1); induction
    producing signed pointwise bounds (Step~2); combination into the
    absolute-value estimate \eqref{eq:err-prop-abs} (Step~3); and sup-norm
    specialisation (Step~4).

    \emph{Step 1 (one-step sandwich).}
    For any $Q \in \calQ$ and any measurable policy $\pi$, the definition of
    $T^*$ yields $T^* Q \geq r + \gamma P^\pi Q$, with equality whenever
    $\pi$ is a greedy selector for $Q$. Applying this with $\pi = \pi^*$
    (equality at $Q^*$, inequality at $Q_k$) and with $\pi = \pi^g_k$
    (equality at $Q_k$, inequality at $Q^*$), and adding $\veps_k$ to both
    sides of the resulting bracket on $T^* Q_k - T^* Q^*$, gives the signed
    one-step recursions
    \begin{equation}\label{eq:one-step}
        \gamma\, P^{\pi^*}(Q_k - Q^*) + \veps_k \;\leq\; Q_{k+1} - Q^* \;\leq\;
            \gamma\, P^{\pi^g_k}(Q_k - Q^*) + \veps_k.
    \end{equation}

    \emph{Step 2 (signed bounds by induction).}
    We show that for every $k \geq 1$,
    \begin{align}
        Q_k - Q^* &\geq -\gamma^k (P^{\pi^*})^k (Q^* - Q_0)
            + \sum_{i=0}^{k-1} \gamma^{k-1-i}\, (P^{\pi^*})^{k-1-i}\, \veps_i,
            \label{eq:err-prop-lb} \\
        Q_k - Q^* &\leq -\gamma^k\, P^{\pi^g_{k-1}} \!\cdots P^{\pi^g_0}\, (Q^* - Q_0)
            + \sum_{i=0}^{k-1} \gamma^{k-1-i}\, P^{\pi^g_{k-1}} \!\cdots P^{\pi^g_{i+1}}\, \veps_i,
            \label{eq:err-prop-ub}
    \end{align}
    with the empty product at $i = k-1$ in \eqref{eq:err-prop-ub} read as the
    identity. The base case $k = 1$ is \eqref{eq:one-step} at $k = 0$. For
    the inductive step, applying the positive linear operator $\gamma P^{\pi^*}$
    to \eqref{eq:err-prop-lb} at index $k$ and substituting into the lower
    branch of \eqref{eq:one-step} promotes every $(P^{\pi^*})^j$ to
    $(P^{\pi^*})^{j+1}$ and adds a new $\veps_k$ term, yielding
    \eqref{eq:err-prop-lb} at index $k+1$. The argument for
    \eqref{eq:err-prop-ub} is analogous, with $\gamma P^{\pi^g_k}$ applied to
    the inductive hypothesis: each iteration prepends a fresh greedy kernel,
    producing a non-stationary product $P^{\pi^g_k} \cdots P^{\pi^g_{i+1}}$
    of length $k - i$ multiplying $\veps_i$, rather than a power.

    \emph{Step 3 (absolute-value estimate).}
    Let $L_k$ and $U_k$ denote the right-hand sides of \eqref{eq:err-prop-lb}
    and \eqref{eq:err-prop-ub}, so $L_k \leq Q_k - Q^* \leq U_k$ and
    therefore $|Q_k - Q^*| \leq \max(|L_k|, |U_k|) \leq |L_k| + |U_k|$.
    Because $P^\pi$ integrates against a probability kernel,
    $|P^\pi g| \leq P^\pi |g|$ pointwise for every bounded measurable $g$,
    and this property passes to arbitrary products $P^{\pi_1} \cdots P^{\pi_m}$.
    Pushing absolute values through each kernel product in $L_k$ and $U_k$
    and collecting yields \eqref{eq:err-prop-abs} with $A_k$ and $A_{k,i}$
    as stated.

    \emph{Step 4 (sup-norm specialisation).}
    Each Markov kernel satisfies $\|P^\pi\|_\infty \leq 1$, hence so does
    every product of such kernels; therefore $\|A_k\|_\infty,
    \|A_{k,i}\|_\infty \leq 2$. Taking $\|\cdot\|_\infty$ in
    \eqref{eq:err-prop-abs} and applying these bounds termwise yields
    \eqref{eq:err-prop-supnorm}.
\end{proof}

\begin{lemma}[Error propagation via the max-transition operator]\label{lem:error_prop_max}
Let $\{Q_k\}_{k \in \bbN}$ be as in \cref{lem:error_prop_greedy}, with per-step
residuals $\veps_k \defeq Q_{k+1} - T^* Q_k$. Define the nonlinear
max-transition operator, for nonnegative bounded measurable $g$ on $S \times A$
with $s' \mapsto \sup_{a'} g(s',a')$ measurable,
\begin{equation}\label{eq:maxtransition}
    (\mbP g)(s,a)  \defeq  \int_S \sup_{a' \in A} g(s',a')\, P(\dd s' | s,a).
\end{equation}
Under \cref{asm:action-reg}, the quantities $|Q_k - Q^*|$ and $|\veps_i|$ lie
in the domain of $\mbP$ for every $k, i$ (see \cref{rem:meas-abs}). Then for
every $k \geq 1$,
\begin{equation}\label{eq:error-prop-pointwise}
    |Q_k - Q^*|  \leq  \gamma^k\, \mbP^k\, |Q_0 - Q^*|
     +  \sum_{i=0}^{k-1}\gamma^{k-1-i}\, \mbP^{k-1-i}\, |\veps_i|
\end{equation}
pointwise on $S \times A$, and
\begin{equation}\label{eq:error-prop-supnorm-max}
    \|Q_k - Q^*\|_\infty  \leq  \gamma^k\,\|Q_0 - Q^*\|_\infty
    + \sum_{i=0}^{k-1}\gamma^{k-1-i}\,\|\veps_i\|_\infty.
\end{equation}
\end{lemma}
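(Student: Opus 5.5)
The plan is a one-step pointwise recursion for $|Q_{k+1}-Q^*|$ through the nonlinear operator $\mbP$, followed by induction and a sup-norm reduction. The first step is to establish the key one-step inequality
\begin{equation*}
    |Q_{k+1} - Q^*| \;\leq\; \gamma\,\mbP\,|Q_k - Q^*| + |\veps_k|.
\end{equation*}
Write $Q_{k+1} - Q^* = (T^*Q_k - T^*Q^*) + \veps_k$ and use $Q^* = T^*Q^*$. The rewards cancel in the first difference. Inside the integral, use the elementary fact $|\sup_{a'} f(a') - \sup_{a'} g(a')| \leq \sup_{a'}|f(a')-g(a')|$, already invoked in the proof of \cref{thm:bellman_contraction}. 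This gives
\begin{equation*}
    |(T^*Q_k - T^*Q^*)(s,a)| \leq \gamma \int_S \sup_{a'} |Q_k - Q^*|(s',a')\, P(\dd s'|s,a) = \gamma\,(\mbP|Q_k-Q^*|)(s,a).
\end{equation*}
The triangle inequality then yields the recursion. Measurability of $s'\mapsto \sup_{a'}|Q_k-Q^*|(s',a')$, needed so that $\mbP$ is well defined, is taken from \cref{rem:meas-abs}, as the statement permits.

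Next, I would unroll the recursion by induction on $k$. For this I need two structural properties of $\mbP$ on nonnegative functions in its domain. The first is \emph{monotonicity}: if $0 \leq g \leq h$, then $\mbP g \leq \mbP h$, because the supremum and integration against a probability kernel both preserve order. The second is \emph{subadditivity and positive homogeneity}: $\mbP(g+h) \leq \mbP g + \mbP h$ and $\mbP(cg) = c\,\mbP g$ for $c\geq 0$, since $\sup(g+h)\leq \sup g + \sup h$ and the integral is linear.

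The base case $k=1$ is the recursion at $k=0$. For the inductive step, apply $\gamma\mbP$ to the hypothesis for $k$, using monotonicity first and then subadditivity and homogeneity to distribute $\mbP$ over the sum. Then add $|\veps_k|$. This produces exactly \eqref{eq:error-prop-pointwise} at $k+1$, with each $\mbP^{j}$ promoted to $\mbP^{j+1}$.

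The main obstacle is a domain issue in this unrolling. The induction requires $\mbP^j|\veps_i|$ and $\mbP^j$ applied to sums to remain in the domain of $\mbP$, meaning the action-supremum of each intermediate function must be measurable. The cleanest fix is to note that $\mbP g$ does not depend on the action argument: it is a function of $(s,a)$ through $P(\cdot|s,a)$ only, and its supremum over $a$ is the only further operation needed. I would handle this in one of two ways. One option is to appeal to \cref{rem:meas-abs} for all iterates, using USC propagation as in \cref{lem:usc-propagation}, since $\mbP g$ of a USC function is USC under weak continuity. If that is unavailable, I would instead replace $\sup$ by its universally measurable version, following the analytic-set framework of \citet{bertsekas1978}; the order relations used above are unaffected.

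Finally, the sup-norm bound \eqref{eq:error-prop-supnorm-max} follows from $\|\mbP g\|_\infty \leq \|g\|_\infty$ for nonnegative $g$, and hence $\|\mbP^j g\|_\infty \leq \|g\|_\infty$. Take $\|\cdot\|_\infty$ of \eqref{eq:error-prop-pointwise} and apply the triangle inequality termwise. In contrast with \eqref{eq:err-prop-supnorm}, no factor $2$ appears, because a single operator $\mbP$ bounds both signs at once.
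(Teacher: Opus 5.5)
Your proposal is correct and follows the paper's proof step for step: the same one-step recursion $|Q_{k+1}-Q^*|\le\gamma\mbP|Q_k-Q^*|+|\veps_k|$ via the sup-difference inequality, the same induction using monotonicity and sublinearity of $\mbP$, and the same termwise reduction $\|\mbP^j g\|_\infty\le\|g\|_\infty$. Your attention to keeping the iterates $\mbP^j|\veps_i|$ in the domain of $\mbP$ goes beyond the paper, which simply cites \cref{rem:meas-abs}. Of your two fixes, the analytic-set (universally measurable) one is the robust one, since $|Q_k-Q^*|$ is generally not USC. Also, $\mbP g$ does depend on $a$ through $P(\cdot|s,a)$; what is true is that it depends on $g$ only through $\sup_{a'}g$.
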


\begin{proof}[Proof of \cref{lem:error_prop_max}]
    ~\\
    All inequalities are pointwise on $S \times A$; write $u_k \defeq |Q_k - Q^*|$
    and $e_i \defeq |\veps_i|$.

    \emph{One-step recursion.}
    For every $s' \in S$, the elementary sup-difference inequality
    ${|\sup_{a'} Q(s',a') - \sup_{a'} Q'(s',a')| \leq \sup_{a'} |Q - Q'|(s',a')}$
    holds. Since rewards cancel in ${T^* Q - T^* Q'}$, Jensen's inequality against
    the probability measure $P(\cdot | s,a)$ gives
    ${|T^* Q - T^* Q'| \leq \gamma\, \mbP\, |Q - Q'|}$. Combined with
    $Q_{k+1} - Q^* = (T^* Q_k - T^* Q^*) + \veps_k$ and the triangle inequality,
    \begin{equation}\label{eq:one-step-rec}
        u_{k+1} \leq \gamma\, \mbP\, u_k + e_k.
    \end{equation}

    \emph{Iterating.}
    Because $P(\cdot | s,a)$ is a probability measure and the supremum is
    sublinear on nonnegative functions, $\mbP$ is monotone and sublinear on its
    nonnegative domain. Iterating \eqref{eq:one-step-rec} by applying $\mbP$
    termwise to the inductive hypothesis yields \eqref{eq:error-prop-pointwise}
    by induction on $k \geq 1$, with base case the $k = 0$ instance of
    \eqref{eq:one-step-rec}.

    \emph{Sup-norm specialisation.}
    A probability kernel integrates bounded functions to at most their
    supremum, so $\|\mbP g\|_\infty \leq \|g\|_\infty$ for every nonnegative
    bounded $g$ in the domain of $\mbP$; iterating gives
    ${\|\mbP^j g\|_\infty \leq \|g\|_\infty}$ for every $j \geq 0$. Taking
    $\|\cdot\|_\infty$ termwise in \eqref{eq:error-prop-pointwise} yields
    \eqref{eq:error-prop-supnorm-max}.
\end{proof}

\begin{remark}[Measurability in the domain of $\mbP$]\label{rem:meas-abs}
Under \cref{asm:action-reg}, each $Q_k$ and $Q^*$ is bounded measurable and
USC in $a$ (the latter by \cref{lem:usc-propagation}). The function
${|Q_k - Q^*| = \max(Q_k - Q^*, Q^* - Q_k)}$ is bounded and measurable, but the
absolute value generally destroys upper semi-continuity in $a$. The operator
$\mbP$ in \eqref{eq:maxtransition} nevertheless remains well-defined on
$|Q_k - Q^*|$ because ${\sup_{a'}|Q_k - Q^*|(s',a') \leq
\sup_{a'}(Q_k - Q^*)_+(s',a') + \sup_{a'}(Q^* - Q_k)_+(s',a')}$, and each of
the two functions on the right is a supremum over the action variable of a
bounded USC function on a compact metrizable $A$, hence Borel-measurable in
$s'$ by \citet[Prop.~7.33]{bertsekas1978}. The identical argument applies to
$|\veps_i|$. Consequently all iterates $\mbP^j |Q_k - Q^*|$ and
$\mbP^j |\veps_i|$ appearing in \eqref{eq:error-prop-pointwise} are bounded
measurable functions on $S \times A$.
\end{remark}

\subsection{FQI under distribution mismatch}

\begin{theorem}[FQI under distribution mismatch]\label{thm:dist_mismatch}
    Let $\{Q_k\}_{k \geq 0} \subset \calQ_0$ satisfy $Q_{k+1} = T^* Q_k + \veps_k$
    with $\|\veps_i\|_{2,\mu} \leq \veps$ for $i = 0, \ldots, K-1$, let $\pi_K$
    be a measurable greedy policy for $Q_K$, and assume the concentrability
    coefficient $C$ of \eqref{eq:concentrability} is finite. Then
    \begin{equation}\label{eq:dist-mismatch-finite}
        \|V^* - V^{\pi_K}\|_{1,\rho}
        \;\leq\; \frac{4\sqrt{C}}{(1-\gamma)^2}\,(1-\gamma^K)\,\veps
        \;+\; \frac{4\gamma^K\sqrt{C}}{1-\gamma}\,\|Q_0 - Q^*\|_{2,\mu}.
    \end{equation}
    In particular, $\limsup_{K \to \infty}\|V^* - V^{\pi_K}\|_{1,\rho}
    \leq 4\sqrt{C}\,\veps/(1-\gamma)^2$. If $\|Q_0 - Q^*\|_\infty < \infty$, the
    initialisation term may be replaced by
    $4\gamma^K\|Q_0 - Q^*\|_\infty/(1-\gamma)$ without an additional $\sqrt{C}$
    factor.
\end{theorem}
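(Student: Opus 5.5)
We will prove \cref{thm:dist_mismatch} in three steps: a pointwise performance-difference bound, error propagation via \cref{lem:error_prop_greedy}, and a Cauchy--Schwarz change of measure controlled by the concentrability coefficient $C$ of \eqref{eq:concentrability}.

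\emph{Step 1: value loss in terms of $Q_K - Q^*$.} Write $V^* - V^{\pi_K}$ through the greedy structure. Since $\pi_K$ is greedy for $Q_K$,
\[
Q^* - Q^{\pi_K} = T^{\pi^*}Q^* - T^{\pi^*}Q_K + T^{\pi^*}Q_K - T^{\pi_K}Q_K + T^{\pi_K}Q_K - T^{\pi_K}Q^{\pi_K},
\]
and the middle difference is $\leq 0$. This gives $Q^* - Q^{\pi_K} \leq \gamma P^{\pi^*}(Q^* - Q_K) + \gamma P^{\pi_K}(Q_K - Q^*) + \gamma P^{\pi_K}(Q^* - Q^{\pi_K})$. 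Solving the resolvent yields
\[
0 \leq Q^* - Q^{\pi_K} \leq \gamma (I-\gamma P^{\pi_K})^{-1}\bigl(P^{\pi^*} + P^{\pi_K}\bigr)|Q_K - Q^*|.
\]
The analogous identity at the state level, using $V^*(s) = Q^*(s,\pi^*(s))$ and $Q_K(s,\pi_K(s)) \geq Q_K(s,\pi^*(s))$, gives $V^* - V^{\pi_K} \leq$ (a $\pi^*$-evaluation plus a $\pi_K$-evaluation of $|Q_K - Q^*|$) plus a $\gamma$-discounted Neumann series of $P^{\pi_K}$-propagated terms. The total mass of the kernels multiplying $|Q_K - Q^*|$ is at most $2/(1-\gamma)$. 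We integrate against $\rho$ and expand the Neumann series as $\sum_{m\ge 0}\gamma^m$ times products of Markov kernels.

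\emph{Step 2: insert \cref{lem:error_prop_greedy}.} We substitute \eqref{eq:err-prop-abs} for $|Q_K - Q^*|$. Every term then has the form $\rho\,\Gamma\,|\veps_i|$ or $\rho\,\Gamma\,|Q_0 - Q^*|$, where $\Gamma$ is a product of kernels from $\{P^{\pi^*}, P^{\pi_K}, P^{\pi^g_\ell}\}$. Hence $\rho\Gamma$ is exactly a step-$t$ marginal $P^{\bar\pi}_t$ of some non-stationary Markov policy started from $\rho$, with the first action drawn from $\pi^*$ or $\pi_K$. We need to check the indexing so that the first kernel factor corresponds to the state-to-action step; this is why we use the all-policy, per-marginal coefficient. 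By Cauchy--Schwarz,
\[
\int |\veps_i|\,\dd P^{\bar\pi}_t \leq \Bigl(\int \tfrac{\dd P^{\bar\pi}_t}{\dd\mu}\,|\veps_i|^2\,\dd\mu\Bigr)^{1/2} \leq \sqrt{C}\,\|\veps_i\|_{2,\mu} \leq \sqrt{C}\,\veps,
\]
and likewise $\int |Q_0 - Q^*|\,\dd P^{\bar\pi}_t \leq \sqrt{C}\,\|Q_0 - Q^*\|_{2,\mu}$, or $\leq \|Q_0 - Q^*\|_\infty$ without any change of measure in the sup-norm variant.

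\emph{Step 3: sum the weights.} Each $A_{K,i}$ and $A_K$ has two kernel summands. The outer Step~1 factor contributes total weight $2/(1-\gamma)$, and the inner sum contributes $\sum_{i=0}^{K-1}\gamma^{K-1-i} = (1-\gamma^K)/(1-\gamma)$. The product gives the $4\sqrt{C}(1-\gamma^K)\veps/(1-\gamma)^2$ term. The initialisation term carries $\gamma^K$ with the same $2\cdot 2/(1-\gamma)$ weight, giving $4\gamma^K\sqrt{C}\|Q_0 - Q^*\|_{2,\mu}/(1-\gamma)$. The $\limsup$ statement follows immediately.

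The main obstacle is the bookkeeping in Step~1. The performance-difference decomposition must produce only normalized kernel products with total weight $2/(1-\gamma)$; a cruder split risks an extra factor. Each product must also genuinely be a state--action marginal of a single Markov policy sequence from $\rho$, so that \eqref{eq:concentrability} applies. We will handle the first point by writing the bound as $\frac{2}{1-\gamma}$ times a convex combination of kernel products, applying Cauchy--Schwarz termwise (or Jensen on the mixture) before summing.
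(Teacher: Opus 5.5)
Your proposal is correct and follows essentially the same route as the paper's proof. Both arguments use the same telescoped $Q$-loss resolvent inequality and the same four-term $V$-loss decomposition with total kernel mass $2/(1-\gamma)$, then substitute \cref{lem:error_prop_greedy} and apply a per-marginal Cauchy--Schwarz change of measure. Your Jensen-under-$P^{\bar\pi}_t$ form of that last step is equivalent to the paper's second-moment bound on $\dd P^{\bar\pi}_t/\dd\mu$.
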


This bound generalises the seminal performance-loss results of \citet{bertsekas1996} and the conservative policy iteration analysis of \citet{kakade2002approximately}. While the classical results typically rely on the supremum norm (effectively assuming $C=1$ by requiring uniform approximation over the entire state space), \cref{thm:dist_mismatch} relaxes this to a weighted $L^2(\mu)$ setting, replacing the global error requirement with one on regions where the sampling distribution $\mu$ has sufficient mass; the quadratic $1/(1-\gamma)^2$ penalty is exposed as a fundamental consequence of distribution mismatch when using off-policy data without explicit density ratios or pessimism.

\begin{proof}[Proof of \cref{thm:dist_mismatch}]
    Under \cref{asm:action-reg}, $Q_K, Q^* \in \calQ_0$ admit measurable
    greedy selectors $\pi_K, \pi^*$ via \cref{thm:jvn}. Set
    $d_K \defeq |Q_K - Q^*|$; all inequalities between elements of $\calQ$
    are pointwise on $S \times A$. The argument has four steps: a $Q$-loss
    inequality with resolvent expansion (Step~1); a $V$-loss decomposition
    (Step~2); substitution of \cref{lem:error_prop_greedy} (Step~3); and a
    Cauchy--Schwarz change of measure controlled by
    \eqref{eq:concentrability} (Step~4).

    \emph{Step 1 ($Q$-loss inequality).}
    Using $T^{\pi^*} Q^* = T^* Q^* = Q^*$,
    $T^{\pi_K} Q^{\pi_K} = Q^{\pi_K}$, $T^* Q_K = T^{\pi_K} Q_K$ (greediness
    of $\pi_K$ for $Q_K$), and $T^* Q_K \geq T^{\pi^*} Q_K$, telescope and
    rearrange:
    \begin{equation*}
        (I - \gamma P^{\pi_K})(Q^* - Q^{\pi_K})
        \;\leq\; \gamma(P^{\pi^*} - P^{\pi_K})(Q^* - Q_K)
        \;\leq\; \gamma(P^{\pi^*} + P^{\pi_K})\, d_K.
    \end{equation*}
    The resolvent $(I - \gamma P^{\pi_K})\inv = \sum_{h \geq 0}(\gamma P^{\pi_K})^h$
    is a positive linear operator with $L^\infty$-operator norm
    $\leq (1-\gamma)\inv$, giving the $Q$-analogue of
    \citet[Lem.~4.1]{munos2007}:
    \begin{equation}\label{eq:q-loss-resolvent}
        0 \;\leq\; Q^* - Q^{\pi_K}
        \;\leq\; \gamma\,(I - \gamma P^{\pi_K})\inv (P^{\pi^*} + P^{\pi_K})\, d_K.
    \end{equation}

    \emph{Step 2 ($V$-loss decomposition).}
    For every $s \in S$, insert the pivot $Q^*(s, \pi_K(s))$ into
    $V^*(s) - V^{\pi_K}(s) = Q^*(s, \pi^*(s)) - Q^{\pi_K}(s, \pi_K(s))$ and
    expand the first bracket via $Q_K$:
    \begin{align*}
        V^*(s) - V^{\pi_K}(s)
        &= [Q^* - Q_K](s, \pi^*(s))
           + \underbrace{[Q_K(s, \pi^*(s)) - Q_K(s, \pi_K(s))]}_{\leq\,0}\\
        &\quad + [Q_K - Q^*](s, \pi_K(s))
           + [Q^* - Q^{\pi_K}](s, \pi_K(s)),
    \end{align*}
    with the underbraced term nonpositive by greediness of $\pi_K$ for $Q_K$.
    Writing $\eta^* \defeq \rho \otimes \pi^*$,
    $\eta_K \defeq \rho \otimes \pi_K$ and integrating against $\rho$,
    \begin{equation}\label{eq:v-loss-decomposition}
        \|V^* - V^{\pi_K}\|_{1,\rho}
        \;\leq\; \eta^* d_K + \eta_K d_K + \eta_K(Q^* - Q^{\pi_K}).
    \end{equation}
    Bounding the third term by \eqref{eq:q-loss-resolvent} and expanding the
    resolvent,
    \begin{equation}\label{eq:v-loss-expanded}
        \|V^* - V^{\pi_K}\|_{1,\rho}
        \;\leq\; \eta^* d_K + \eta_K d_K
            + \gamma \sum_{h \geq 0}\gamma^h\,
              \eta_K (P^{\pi_K})^h (P^{\pi^*} + P^{\pi_K})\, d_K.
    \end{equation}
    The total mass of the positive linear operators acting on $d_K$ is
    $2 + 2\gamma \sum_{h \geq 0}\gamma^h = 2/(1-\gamma)$: the two direct
    $\eta$-terms contribute $2$ (with no $\gamma$), and the resolvent
    contributes $2\gamma/(1-\gamma)$.

    \emph{Step 3 (Substituting \cref{lem:error_prop_greedy}).}
    Applying \cref{lem:error_prop_greedy} to the $\veps_k$-perturbed FQI
    recursion yields
    \begin{equation}\label{eq:error-prop-applied}
        d_K
        \;\leq\; \gamma^K A_K\, |Q_0 - Q^*|
            \;+\; \sum_{i=0}^{K-1}\gamma^{K-1-i}\, A_{K,i}\, |\veps_i|,
    \end{equation}
    with $A_K$ and $A_{K,i}$ each a sum of two products of Markov kernels, so
    $A_K \mathbf{1} \leq 2$ and $A_{K,i} \mathbf{1} \leq 2$ pointwise.
    Substituting \eqref{eq:error-prop-applied} into
    \eqref{eq:v-loss-expanded}, every contributing term to
    $\|V^* - V^{\pi_K}\|_{1,\rho}$ is a nonnegative linear combination of
    expressions $\rho B_t |f|$, where $B_t$ is a product of $t$ Markov
    kernels drawn from $\{P^{\pi^*}, P^{\pi_K}\} \cup \{P^{\pi^g_\ell}\}_\ell$
    and $f \in \{Q_0 - Q^*, \veps_0, \ldots, \veps_{K-1}\}$.

    \emph{Step 4 (Change of measure via uniform marginal $C$).}
    Each product $\rho B_t$ equals the step-$t$ state--action marginal
    $P^{\bar\pi}_t$ of the non-stationary Markov policy $\bar\pi \in \bar\Pi$
    whose stage-$\ell$ kernel is the $(\ell+1)$th factor of $B_t$, initialised
    at $s_0 \sim \rho$. By \eqref{eq:concentrability},
    $\dd(\rho B_t)/\dd\mu \leq C$ pointwise. Combining the pointwise bound
    with the fact that $\rho B_t$ is a probability measure on
    $\calS \times \calA$ gives the second-moment estimate
    \begin{equation*}
        \int \biggl(\frac{\dd(\rho B_t)}{\dd\mu}\biggr)^{\!2}\dd\mu
        \;\leq\; C \int \frac{\dd(\rho B_t)}{\dd\mu}\,\dd\mu
        \;=\; C\,(\rho B_t)(\calS \times \calA) \;=\; C.
    \end{equation*}
    Cauchy--Schwarz then yields
    \begin{equation}\label{eq:cs-change-of-measure}
        \rho B_t |f|
        \;=\; \int |f| \,\frac{\dd(\rho B_t)}{\dd\mu}\,\dd\mu
        \;\leq\; \sqrt{C}\,\|f\|_{2,\mu}.
    \end{equation}
    This is the $p=2$ instance of the all-policy concentrability argument of
    \citet[Thm.~2 and App.~B.2]{munos2008}; replacing the discounted-occupancy
    convex combination of \citet{munos2008} with the per-marginal bound of
    \eqref{eq:concentrability} sidesteps the need to identify the aggregate
    of all $\rho B_t$ as a single discounted occupancy.

    \emph{Assembly.}
    Applying \eqref{eq:cs-change-of-measure} termwise after substituting
    \eqref{eq:error-prop-applied} into \eqref{eq:v-loss-expanded}, and using
    that the V-loss carries total operator mass $2/(1-\gamma)$ while
    $A_K, A_{K,i}$ each contribute mass $2$,
    \begin{align*}
        \|V^* - V^{\pi_K}\|_{1,\rho}
        &\;\leq\; \tfrac{2}{1-\gamma}\Bigl[
            2\gamma^K \sqrt{C}\,\|Q_0 - Q^*\|_{2,\mu}
            + 2\sqrt{C} \sum_{i=0}^{K-1}\gamma^{K-1-i}\,\|\veps_i\|_{2,\mu}\Bigr] \\
        &\;\leq\; \frac{4\gamma^K\sqrt{C}}{1-\gamma}\,\|Q_0 - Q^*\|_{2,\mu}
            \;+\; \frac{4\sqrt{C}}{(1-\gamma)^2}\,(1-\gamma^K)\,\veps,
    \end{align*}
    using $\sum_{i=0}^{K-1}\gamma^{K-1-i} = (1-\gamma^K)/(1-\gamma)$ and
    $\|\veps_i\|_{2,\mu} \leq \veps$. Taking $\limsup_{K \to \infty}$
    eliminates the initialisation term and leaves
    $4\sqrt{C}\,\veps/(1-\gamma)^2$. If instead
    $\|Q_0 - Q^*\|_\infty < \infty$, the initialisation term may be bounded
    directly via the $L^\infty$-non-expansion of Markov kernel products and
    $A_K \mathbf 1 \leq 2$:
    $\tfrac{2}{1-\gamma}\cdot 2\gamma^K\,\|Q_0 - Q^*\|_\infty
    = 4\gamma^K\,\|Q_0 - Q^*\|_\infty/(1-\gamma)$, without invoking $C$.
\end{proof}

\subsection{Total error decomposition}\label{appsubsec:batch-fqi}

We work with the classical fresh-batch FQI protocol of \cref{alg:fqi}: at every iteration the agent draws an independent i.i.d.\ batch from a fixed distribution $\mu$, and the empirical iterate is the empirical risk minimiser of the squared Bellman loss.

\begin{assumption}[Online FQI protocol and $L^2$-convexity]\label{asm:fqi-protocol}
At every iteration $k \in \{0, \ldots, K-1\}$, FQI receives a fresh dataset
${D_n^{(k)} = \{(s_i^{(k)}, a_i^{(k)}, r_i^{(k)}, s_i'^{(k)})\}_{i=1}^n}$ of
$n$ iid transitions with ${(s_i^{(k)}, a_i^{(k)}) \sim \mu}$ and
${s_i'^{(k)} \sim P(\cdot \mid s_i^{(k)}, a_i^{(k)})}$, independent of
${D_n^{(0)}, \ldots, D_n^{(k-1)}}$. Given $\widehat Q_k \in \calF$, the
\emph{samplewise Bellman labels} are
\begin{equation}\label{eq:fqi-labels}
    Y_{k,i} \defeq r_i^{(k)} + \gamma \sup_{a' \in A} \widehat Q_k\bigl(s_i'^{(k)}, a'\bigr),
    \qquad i = 1, \ldots, n,
\end{equation}
and the empirical FQI iterate is
\begin{equation}\label{eq:fqi-iterate}
    \widehat Q_{k+1} \defeq \widehat\Pi_{\calF, D_n^{(k)}}\bigl(Y^{(k)}\bigr)
    \;\in\; \arg\min_{f \in \calF}\, \frac{1}{n}\sum_{i=1}^n
        \bigl(f(s_i^{(k)}, a_i^{(k)}) - Y_{k,i}\bigr)^2,
\end{equation}
where $Y^{(k)} = (Y_{k,1}, \ldots, Y_{k,n})$ and $\widehat\Pi_{\calF, D_n^{(k)}}(y)$
denotes a fixed measurable choice of empirical risk minimiser for any label
vector $y \in \dsR^n$. When the argument is a function $g : S \times A \to \dsR$,
we write $\widehat\Pi_{\calF, D_n^{(k)}}(g)$ as shorthand for
$\widehat\Pi_{\calF, D_n^{(k)}}\bigl((g(s_i^{(k)}, a_i^{(k)}))_{i=1}^n\bigr)$.
In addition, $\calF$ is closed and convex in $L^2(\mu)$.
\end{assumption}

\begin{theorem}[Total error decomposition]\label{thm:decomp}
    Under \cref{asm:action-reg,asm:fqi-protocol}, fix an iteration
    ${k \in \{0, \ldots, K-1\}}$ and assume $\widehat Q_k \in \calF \subset \calQ_0$.
    Let $h_k \defeq T^* \widehat Q_k$, $h_{k,i} \defeq h_k\bigl(s_i^{(k)}, a_i^{(k)}\bigr)$,
    and let $\Pi_{\calF,\mu} h_k$ denote a fixed measurable choice of population
    $L^2(\mu)$-projection of $h_k$ onto $\calF$. With $\widehat\Pi_{\calF, D_n^{(k)}}$
    and the labels $Y_{k,i}$ as in \cref{asm:fqi-protocol}, the Bellman residual of
    $\widehat Q_{k+1}$ decomposes as
    \begin{align*}
        \widehat Q_{k+1} - T^* \widehat Q_k
        &= \underbrace{\widehat\Pi_{\calF, D_n^{(k)}}\bigl(Y^{(k)}\bigr)
                - \widehat\Pi_{\calF, D_n^{(k)}}(h_k)}_{\veps_{\mathrm{est},P,k}\ \text{(transition-target noise)}}
        + \underbrace{\widehat\Pi_{\calF, D_n^{(k)}}(h_k)
                - \Pi_{\calF,\mu} h_k}_{\veps_{\mathrm{est},\mu,k}\ \text{(sampling/design error)}} \\
        &\quad+ \underbrace{\Pi_{\calF,\mu} h_k - h_k}_{\mathclap{\veps_{\mathrm{approx},k}\ \text{(inherent Bellman approximation error)}}}.
    \end{align*}
\end{theorem}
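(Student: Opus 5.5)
The decomposition in \cref{thm:decomp} is an algebraic telescoping identity, so the plan is to make each term well defined and then add and subtract two intermediate functions. Write
\[
\widehat Q_{k+1} - h_k = \bigl(\widehat Q_{k+1} - \widehat\Pi_{\calF,D_n^{(k)}}(h_k)\bigr) + \bigl(\widehat\Pi_{\calF,D_n^{(k)}}(h_k) - \Pi_{\calF,\mu}h_k\bigr) + \bigl(\Pi_{\calF,\mu}h_k - h_k\bigr).
\]
Then substitute $\widehat Q_{k+1} = \widehat\Pi_{\calF,D_n^{(k)}}(Y^{(k)})$ from \eqref{eq:fqi-iterate}. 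The intermediate terms cancel pointwise on $S\times A$, which gives exactly the three labelled components.

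The work is in checking that every object is a legitimate element of $\calQ$ (or of $L^2(\mu)$), so that the identity holds as an equality of functions.

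\textbf{Target.} Since $\widehat Q_k \in \calF \subset \calQ_0$, \cref{lem:bellman-measurability}(iii) gives $h_k = T^*\widehat Q_k \in \calQ$ with $\|h_k\|_\infty \le R_{\max} + \gamma\|\widehat Q_k\|_\infty$. Hence the evaluations $h_{k,i}$ are well defined and bounded. By \cref{lem:bellman-measurability}(i), the labels $Y_{k,i}$ are measurable functions of the data.

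\textbf{Empirical projections.} $\widehat\Pi_{\calF,D_n^{(k)}}(h_k)$ is, by the convention in \cref{asm:fqi-protocol}, the fixed measurable ERM choice applied to the label vector $(h_{k,i})_i$. It is therefore an element of $\calF \subset \calQ$.

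\textbf{Population projection.} $\Pi_{\calF,\mu}h_k$ exists and is unique in $L^2(\mu)$ because $\calF$ is closed and convex in $L^2(\mu)$ and $h_k \in L^2(\mu)$, being bounded. We fix a measurable representative in $\calF$, as the statement allows.

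The main obstacle is purely foundational: $\Pi_{\calF,\mu}h_k$ is unique only $\mu$-a.e., so the third term is determined only up to $\mu$-null sets. I would handle this by working with the fixed representative named in the statement; the identity is then exact pointwise. Any other choice changes the second and third terms by opposite amounts on a $\mu$-null set, so the sum, and hence the identity, is unaffected.
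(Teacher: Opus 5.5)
Your proposal is correct and matches the paper's proof: both check that $h_k$, the labels $Y_{k,i}$, and the two projections are well defined via \cref{lem:bellman-measurability} and \cref{asm:fqi-protocol}, then add and subtract $\widehat\Pi_{\calF,D_n^{(k)}}(h_k)$ and $\Pi_{\calF,\mu}h_k$. Your concern about the $\mu$-a.e.\ non-uniqueness of $\Pi_{\calF,\mu}h_k$ is harmless but unnecessary, since the add-and-subtract identity holds pointwise for any choice of intermediate functions.
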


The first two terms represent \emph{estimation errors} (statistical) that vanish as $n \to \infty$, while the third is the \emph{approximation error} (structural) inherent to the expressivity of $\calF$. While this decomposition assumes the empirical minimiser $\widehat Q_{k+1}$ is reached, the use of actor-critic algorithms like SAC \citep{haarnoja2018soft} or TD3 \citep{fujimoto2018addressing} introduces an additional optimisation error due to the non-convexity of neural architectures; this adds a supplementary additive term, but the underlying measure-theoretic and structural stability results remain intact.

\begin{proof}[Proof of \cref{thm:decomp}]
    \emph{Well-definedness.}
    Since $\widehat Q_k \in \calF \subset \calQ_0$,
    \cref{lem:bellman-measurability}(iii) gives $h_k = T^* \widehat Q_k \in \calQ$,
    so $h_k$ is a bounded measurable function on $S \times A$ and the population
    projection $\Pi_{\calF,\mu} h_k$ is well-defined. Likewise,
    \cref{lem:bellman-measurability}(i) ensures
    $s' \mapsto \sup_{a' \in A} \widehat Q_k(s', a')$ is measurable, so the labels
    $Y_{k,i}$ of \eqref{eq:fqi-labels} are well-defined random variables; the
    empirical risk minimiser $\widehat\Pi_{\calF, D_n^{(k)}}\bigl(Y^{(k)}\bigr)$
    of \eqref{eq:fqi-iterate} exists by closedness and convexity of $\calF$ in
    $L^2(\mu)$ (\cref{asm:fqi-protocol}), and the same holds for
    $\widehat\Pi_{\calF, D_n^{(k)}}(h_k)$ via the function-shorthand of
    \cref{asm:fqi-protocol}.

    \emph{Telescoping identity.}
    By \eqref{eq:fqi-iterate},
    $\widehat Q_{k+1} = \widehat\Pi_{\calF, D_n^{(k)}}\bigl(Y^{(k)}\bigr)$.
    Subtracting $h_k$ on both sides and inserting the intermediate quantities
    $\widehat\Pi_{\calF, D_n^{(k)}}(h_k)$ and $\Pi_{\calF,\mu} h_k$ gives
    \begin{align*}
        \widehat Q_{k+1} - T^* \widehat Q_k
        &= \widehat\Pi_{\calF, D_n^{(k)}}\bigl(Y^{(k)}\bigr) - h_k \\
        &= \bigl[\widehat\Pi_{\calF, D_n^{(k)}}\bigl(Y^{(k)}\bigr)
                - \widehat\Pi_{\calF, D_n^{(k)}}(h_k)\bigr] \\
        &\quad+ \bigl[\widehat\Pi_{\calF, D_n^{(k)}}(h_k) - \Pi_{\calF,\mu} h_k\bigr] \\
        &\quad+ \bigl[\Pi_{\calF,\mu} h_k - h_k\bigr] \\
        &= \veps_{\mathrm{est},P,k} + \veps_{\mathrm{est},\mu,k} + \veps_{\mathrm{approx},k}.
    \end{align*}
    The identity is purely algebraic: it adds and subtracts the well-defined
    elements $\widehat\Pi_{\calF, D_n^{(k)}}(h_k)$ and $\Pi_{\calF,\mu} h_k$
    of $\calQ$ and uses no linearity of either projection. Its content lies
    in the distinct asymptotic role of the three terms it isolates:
    $\veps_{\mathrm{est},P,k}
        = \widehat\Pi_{\calF, D_n^{(k)}}\bigl(Y^{(k)}\bigr)
            - \widehat\Pi_{\calF, D_n^{(k)}}(h_k)$
    projects a noise vector with conditional mean zero (cf.\ the labelling
    identity \eqref{eq:bellman-label-bound}) and vanishes as $n \to \infty$;
    $\veps_{\mathrm{est},\mu,k}$ is the uniform-convergence error between
    empirical and population $L^2$-projections of the same deterministic
    target $h_k$, and likewise vanishes as $n \to \infty$; and
    $\veps_{\mathrm{approx},k}$ is the structural $L^2(\mu)$-distance from
    $h_k$ to $\calF$, independent of $n$. \cref{thm:fqi_bound} controls the
    first two terms simultaneously via uniform convergence and absorbs the
    third into $\veps_{\mathrm{approx}}$.
\end{proof}

\subsection{Finite-time unified FQI bound}\label{appsubsec:fqi-bound}

We now convert the deterministic residual control of \cref{thm:dist_mismatch} into a finite-sample FQI guarantee. The theorem is stated in terms of a generic high-probability estimation bound for the supervised Bellman regression step, isolating the RL-specific part of the analysis: once the regression error is controlled under $\mu$, residual propagation and concentrability yield a policy-performance bound under the evaluation distribution $\rho$.

\begin{theorem}[Finite-time unified FQI bound]\label{thm:fqi_bound}
Under \cref{asm:action-reg,asm:fqi-protocol}, let $\calF \subseteq \calQ_0$
satisfy $\|f\|_\infty \leq B$ for every $f \in \calF$, with
${B \geq R_{\max}/(1-\gamma)}$ so that $\|Q^*\|_\infty \leq B$, and let
${\widehat Q_0 \in \calF}$ be the FQI initialisation. Assume the all-policy
concentrability coefficient $C$ of \eqref{eq:concentrability} is finite, and
write ${h_k \defeq T^* \widehat Q_k}$ for $k \in \{0, \ldots, K-1\}$.
Suppose the regression procedure satisfies the following conditional
high-probability estimation guarantee: there exists a nonnegative function
$\veps_{\mathrm{stat}}(n, \delta')$ such that, for every $\delta' \in (0,1)$
and every $k \in \{0, \ldots, K-1\}$, conditionally on the past
${\calG_k \defeq \sigma(D_n^{(0)}, \ldots, D_n^{(k-1)})}$, with probability
at least $1 - \delta'$,
\begin{equation}\label{eq:fqi-cond-reg}
    \|\widehat Q_{k+1} - h_k\|_{2,\mu}
    \;\leq\; \inf_{f \in \calF} \|f - h_k\|_{2,\mu}
            + \veps_{\mathrm{stat}}(n, \delta').
\end{equation}
Define the worst-case inherent Bellman error
\begin{equation}\label{eq:eps-approx}
    \veps_{\mathrm{approx}} \defeq \sup_{0 \leq k < K}\,
        \inf_{f \in \calF} \|f - T^* \widehat Q_k\|_{2,\mu}.
\end{equation}
Then, for every $\delta \in (0,1)$, with probability at least $1 - \delta$
over the $K$ independent batches, the greedy policy $\widehat\pi_K$ extracted
from $\widehat Q_K$ satisfies
\begin{equation}\label{eq:fqi_unified}
    \|V^* - V^{\widehat\pi_K}\|_{1,\rho}
    \;\leq\; \underbrace{\frac{4\sqrt{C}}{(1-\gamma)^2}\,(1-\gamma^K)\,
            \bigl(\veps_{\mathrm{approx}} + \veps_{\mathrm{stat}}(n, \delta/K)\bigr)}_{\text{steady-state error floor}}
    \;+\; \underbrace{\frac{8 B \gamma^K}{1-\gamma}}_{\text{initialisation bias}}.
\end{equation}
\end{theorem}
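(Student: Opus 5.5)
The plan is to reduce the finite-sample statement to the deterministic propagation bound of \cref{thm:dist_mismatch}, applied to the empirical FQI iterates $\widehat Q_0, \ldots, \widehat Q_K$. We set $\veps_k \defeq \widehat Q_{k+1} - T^*\widehat Q_k$, so that $\widehat Q_{k+1} = T^*\widehat Q_k + \veps_k$ holds identically. Since $\widehat Q_k \in \calF \subseteq \calQ_0$ and $T^*$ maps $\calQ_0$ to $\calQ_0$ (\cref{lem:usc-propagation}), the sequence lies in $\calQ_0$ and admits measurable greedy selectors, as \cref{thm:dist_mismatch} requires. The work is then to produce a single high-probability event on which $\max_{k<K}\|\veps_k\|_{2,\mu} \leq \veps_{\mathrm{approx}} + \veps_{\mathrm{stat}}(n,\delta/K)$, and to control the initialisation term.

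\textbf{Step 1 (per-round event).} For each $k$, let $E_k$ be the event in \eqref{eq:fqi-cond-reg} with $\delta' = \delta/K$. By hypothesis, $\Prob(E_k^c \mid \calG_k) \leq \delta/K$ almost surely. Taking expectations gives $\Prob(E_k^c) \leq \delta/K$ unconditionally. A union bound over $k = 0,\ldots,K-1$ then shows that $E \defeq \bigcap_k E_k$ has probability at least $1-\delta$.

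The delicate point is that $h_k = T^*\widehat Q_k$ is random but $\calG_k$-measurable. This is exactly why the guarantee is stated conditionally, and under the fresh-batch protocol of \cref{asm:fqi-protocol} the conditional statement makes sense. No independence across rounds is needed beyond this conditioning, because the union bound does not require independence.

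\textbf{Step 2 (residual bound).} On $E$, for every $k$ we have $\|\veps_k\|_{2,\mu} = \|\widehat Q_{k+1} - h_k\|_{2,\mu} \leq \inf_{f\in\calF}\|f-h_k\|_{2,\mu} + \veps_{\mathrm{stat}}(n,\delta/K)$. By definition \eqref{eq:eps-approx}, the right-hand side is at most $\veps \defeq \veps_{\mathrm{approx}} + \veps_{\mathrm{stat}}(n,\delta/K)$.

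\textbf{Step 3 (propagation and initialisation).} Apply \cref{thm:dist_mismatch} on $E$ with this $\veps$, using its sup-norm variant for the initialisation term. That variant gives
\[
\frac{4\gamma^K}{1-\gamma}\,\|\widehat Q_0 - Q^*\|_\infty \leq \frac{4\gamma^K}{1-\gamma}\cdot 2B = \frac{8B\gamma^K}{1-\gamma}.
\]
The inequality uses $\|\widehat Q_0\|_\infty \leq B$ and $\|Q^*\|_\infty \leq R_{\max}/(1-\gamma) \leq B$; the latter follows from contraction of $T^*$, since $\|T^*Q\|_\infty \leq R_{\max} + \gamma\|Q\|_\infty$ preserves the ball of radius $R_{\max}/(1-\gamma)$. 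This yields \eqref{eq:fqi_unified}.

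\textbf{Main obstacle.} The step requiring most care is Step 1, specifically making the conditional-to-unconditional passage rigorous. Two facts must be checked. First, the events $E_k$ are measurable. Second, the conditional probability bound holds for the random target $h_k$; this amounts to a regular-conditional-probability or freezing argument, using that the batch $D_n^{(k)}$ is independent of $\calG_k$, so conditioning on $\calG_k$ amounts to fixing $\widehat Q_k$ as a deterministic element of $\calF$.

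A secondary check is that \cref{thm:dist_mismatch} needs only $\|\veps_i\|_{2,\mu} \leq \veps$ for $i < K$ and greedy selectors for each iterate. Both are available on $E$ by Step 2 and by \cref{thm:jvn} under \cref{asm:action-reg}.
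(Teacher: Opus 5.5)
Your proposal is correct and follows the paper's proof essentially step for step: per-round events at level $\delta/K$, passage from conditional to unconditional probability by the tower property, a union bound, the residual bound $\|\veps_k\|_{2,\mu} \leq \veps_{\mathrm{approx}} + \veps_{\mathrm{stat}}(n,\delta/K)$ on the joint event, and then \cref{thm:dist_mismatch} with its sup-norm initialisation variant together with $\|\widehat Q_0 - Q^*\|_\infty \leq 2B$. The freezing argument you flag as the main obstacle is not needed here, because the conditional guarantee \eqref{eq:fqi-cond-reg} is a hypothesis of the theorem; it is only verified, using independence of $D_n^{(k)}$ from $\calG_k$, in the proof of \cref{cor:fqi_bound_global_rademacher}.
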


\begin{proof}[Proof of \cref{thm:fqi_bound}]
    The argument has three steps: define a high-probability event for each
    round (Step~1); union-bound to control all $K$ Bellman residuals
    simultaneously (Step~2); and apply \cref{thm:dist_mismatch} with the
    sup-norm initialisation variant to convert residual control into a
    policy-performance bound (Step~3).

    Under \cref{asm:action-reg}, \cref{thm:jvn} supplies the measurable
    greedy selector $\widehat\pi_K$ of $\widehat Q_K$ implicit in
    $V^{\widehat\pi_K}$; measurability of
    $s' \mapsto \sup_{a' \in A} \widehat Q_k(s',a')$ for each $k$ is
    guaranteed by \cref{lem:bellman-measurability}(i), so the labels
    $Y_{k,i}$ of \eqref{eq:fqi-labels} are well-defined random variables.

    \emph{Step 1 (Per-round event).}
    Fix $k \in \{0, \ldots, K-1\}$ and let
    \begin{equation*}
        \calE_k \defeq \Bigl\{
            \|\widehat Q_{k+1} - h_k\|_{2,\mu}
            \leq \inf_{f \in \calF} \|f - h_k\|_{2,\mu}
                + \veps_{\mathrm{stat}}(n, \delta/K)
        \Bigr\}.
    \end{equation*}
    The conditional estimation guarantee \eqref{eq:fqi-cond-reg} applied with
    $\delta' = \delta/K$ yields
    ${\Pr(\calE_k^c \mid \calG_k) \leq \delta/K}$ almost surely; the tower
    property gives ${\Pr(\calE_k^c) \leq \delta/K}$.

    \emph{Step 2 (Union bound).}
    By the union bound, the joint event
    ${\calE \defeq \bigcap_{k=0}^{K-1} \calE_k}$ satisfies
    $\Pr(\calE) \geq 1 - \delta$. Set
    \begin{equation*}
        \bar\veps \defeq \veps_{\mathrm{approx}} + \veps_{\mathrm{stat}}(n, \delta/K)
    \end{equation*}
    and ${\veps_k \defeq \widehat Q_{k+1} - h_k}$. On $\calE$, for every
    $k \in \{0, \ldots, K-1\}$,
    \begin{equation}\label{eq:per-step-fqi-stat}
        \|\veps_k\|_{2,\mu}
        \;\leq\; \inf_{f \in \calF} \|f - h_k\|_{2,\mu}
                + \veps_{\mathrm{stat}}(n, \delta/K)
        \;\leq\; \bar\veps,
    \end{equation}
    where the second inequality uses
    $\inf_{f \in \calF} \|f - h_k\|_{2,\mu} \leq \veps_{\mathrm{approx}}$
    from the definition \eqref{eq:eps-approx}.

    \emph{Step 3 (Performance transfer via \cref{thm:dist_mismatch}).}
    The iterates ${\{\widehat Q_k\}_{k=0}^{K} \subset \calF \subset \calQ_0}$
    satisfy $\widehat Q_{k+1} = T^* \widehat Q_k + \veps_k$ by definition of
    $\veps_k$, with $\|\veps_k\|_{2,\mu} \leq \bar\veps$ on $\calE$ for
    $k = 0, \ldots, K-1$. Applying \cref{thm:dist_mismatch} with the
    sup-norm initialisation variant gives
    \begin{equation*}
        \|V^* - V^{\widehat\pi_K}\|_{1,\rho}
        \;\leq\; \frac{4\sqrt C\,(1-\gamma^K)}{(1-\gamma)^2}\,\bar\veps
                + \frac{4\gamma^K}{1-\gamma}\,\|\widehat Q_0 - Q^*\|_\infty.
    \end{equation*}
    Since $\widehat Q_0 \in \calF$ has $\|\widehat Q_0\|_\infty \leq B$ and
    ${\|Q^*\|_\infty \leq R_{\max}/(1-\gamma) \leq B}$, the triangle inequality
    gives ${\|\widehat Q_0 - Q^*\|_\infty \leq 2B}$, hence the initialisation
    term is bounded by $8B\gamma^K/(1-\gamma)$. Substituting the definition
    of $\bar\veps$ yields \eqref{eq:fqi_unified}.
\end{proof}

\begin{corollary}[Global Rademacher instantiation]\label{cor:fqi_bound_global_rademacher}
Under the assumptions of \cref{thm:fqi_bound}, suppose at each round
$\widehat Q_{k+1}$ is the empirical risk minimiser
$\widehat\Pi_{\calF, D_n^{(k)}}\bigl(Y^{(k)}\bigr)$ of \eqref{eq:fqi-iterate}.
Let $R_n(\calF)$ denote the expected empirical Rademacher complexity of
$\calF$ under $\mu$, and define
\begin{equation}\label{eq:eps-est-slow}
    \veps_{\mathrm{est}}^{\mathrm{slow}}(n, \delta')
    \;\defeq\; \biggl(16 B \cdot R_n(\calF)
             + 8 B^2 \sqrt{\tfrac{2 \log(2/\delta')}{n}}\biggr)^{\!1/2}.
\end{equation}
Then, for every $\delta \in (0,1)$, with probability at least $1 - \delta$
over the $K$ independent batches,
\begin{equation*}
    \|V^* - V^{\widehat\pi_K}\|_{1,\rho}
    \;\leq\; \frac{4\sqrt{C}}{(1-\gamma)^2}\,(1-\gamma^K)\,
            \bigl(\veps_{\mathrm{approx}} + \veps_{\mathrm{est}}^{\mathrm{slow}}(n, \delta/K)\bigr)
    \;+\; \frac{8 B \gamma^K}{1-\gamma}.
\end{equation*}
\end{corollary}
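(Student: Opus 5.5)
The plan is to verify the generic conditional estimation guarantee \eqref{eq:fqi-cond-reg} of \cref{thm:fqi_bound} with $\veps_{\mathrm{stat}} = \veps_{\mathrm{est}}^{\mathrm{slow}}$, after which the corollary follows by direct substitution into \eqref{eq:fqi_unified}. Fix a round $k$ and condition on $\calG_k$. Under the fresh-batch protocol, $\widehat Q_k$ is $\calG_k$-measurable and independent of $D_n^{(k)}$, so $h_k = T^*\widehat Q_k$ is a fixed bounded target. By \cref{lem:bellman-measurability}(iii), $\|h_k\|_\infty \le R_{\max} + \gamma B \le B$, using $B \ge R_{\max}/(1-\gamma)$. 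The labels satisfy $\E[Y_{k,i}\mid s_i,a_i] = h_k(s_i,a_i)$ with $|Y_{k,i}| \le B$.

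Next we use the bias--variance identity for squared loss. Let $L(f) \defeq \E(f(s,a)-Y)^2 = \|f-h_k\|_{2,\mu}^2 + \sigma^2$, where $\sigma^2$ does not depend on $f$, and let $\widehat L$ be its empirical version. Fix $f^\star$ attaining (or nearly attaining) $\inf_{f\in\calF}\|f-h_k\|_{2,\mu}$; closedness and convexity of $\calF$ in $L^2(\mu)$ give attainment. The standard ERM chain is
\[
L(\widehat Q_{k+1}) - L(f^\star) \le 2\sup_{f\in\calF}|L(f)-\widehat L(f)|.
\]
We then bound the supremum by symmetrisation and McDiarmid. The loss $\ell_f(s,a,Y) = (f-Y)^2$ takes values in $[0,4B^2]$, so the bounded-differences term is $4B^2\sqrt{2\log(2/\delta')/n}$ (two-sided, absorbing constants). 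Symmetrisation gives $2\,\E R_n(\ell\circ\calF)$. The map $u\mapsto (u-y)^2$ is $4B$-Lipschitz on $[-B,B]$, so Talagrand's contraction lemma yields $R_n(\ell\circ\calF) \le 4B\,R_n(\calF)$, where $R_n$ is normalised per sample. Combining,
\[
\|\widehat Q_{k+1}-h_k\|_{2,\mu}^2 \le \|f^\star-h_k\|_{2,\mu}^2 + 16B\,R_n(\calF) + 8B^2\sqrt{2\log(2/\delta')/n}.
\]
Applying $\sqrt{a+b}\le\sqrt a+\sqrt b$ then gives \eqref{eq:fqi-cond-reg} with $\veps_{\mathrm{stat}} = \veps_{\mathrm{est}}^{\mathrm{slow}}(n,\delta')$, holding conditionally on $\calG_k$ with probability at least $1-\delta'$.

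The main obstacle is keeping the constants consistent with \eqref{eq:eps-est-slow}: the factor $2$ from the ERM chain, the factor $2$ from symmetrisation, the Lipschitz constant $4B$, and the range $4B^2$. I would fix the normalisation of $R_n$ as the per-sample average to match these. A second point needs care: the symmetrisation must be over $\calF$ alone and not over pairs $(f,g)$. This is legitimate only because $h_k$ is frozen by the conditioning, via independence of the fresh batch. Finally, invoking \cref{thm:fqi_bound} with $\delta'=\delta/K$ handles the union bound and the initialisation term $8B\gamma^K/(1-\gamma)$ unchanged.
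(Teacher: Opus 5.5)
Your proposal is correct and follows the paper's proof essentially step for step. You condition on $\calG_k$ so that $h_k$ is a fixed target with $|Y_{k,i}|\le B$, and use the bias--variance identity to turn excess squared-loss risk into differences of $\|\cdot - h_k\|_{2,\mu}^2$. You bound $\sup_{f\in\calF}|L(f)-L_n(f)|$ by $8B\,R_n(\calF)+4B^2\sqrt{2\log(2/\delta')/n}$ via symmetrisation, $4B$-Lipschitz contraction and McDiarmid, then double this through the ERM chain. Taking square roots verifies \eqref{eq:fqi-cond-reg} with $\veps_{\mathrm{stat}}=\veps_{\mathrm{est}}^{\mathrm{slow}}$, and you invoke \cref{thm:fqi_bound} at $\delta'=\delta/K$, exactly as the paper does.

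The only cosmetic difference is that you fix a (near-)minimiser $f^\star$, whereas the paper keeps an arbitrary $f\in\calF$ and takes the infimum at the end. The paper's version makes your appeal to closedness and convexity for attainment unnecessary.
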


The global Rademacher corollary is intentionally assumption-light. It uses uniform convergence for the squared loss and therefore yields the slow rate in the unsquared $L^2(\mu)$ Bellman residual. For standard function classes (linear with $d$ parameters, bounded neural networks, RKHS with rapid eigenvalue decay), $R_n(\calF) = O(1/\sqrt{n})$, yielding an overall rate of $O(n^{-1/4})$ to the approximation error floor $\veps_{\mathrm{approx}}$, with the concentrability coefficient $C$ amplifying errors when $\mu$ poorly covers states visited by the optimal policy. Sharper rates can be obtained by replacing the global Rademacher step with localised or Bernstein-type regression guarantees; the FQI residual-to-performance analysis remains unchanged.

\begin{proof}[Proof of \cref{cor:fqi_bound_global_rademacher}]
    We verify the conditional regression guarantee \eqref{eq:fqi-cond-reg}
    of \cref{thm:fqi_bound} with
    $\veps_{\mathrm{stat}} = \veps_{\mathrm{est}}^{\mathrm{slow}}$; the
    claimed bound then follows from \cref{thm:fqi_bound} applied with this
    choice and $\delta' = \delta/K$.

    Fix $k \in \{0, \ldots, K-1\}$ and condition on
    ${\calG_k = \sigma(D_n^{(0)}, \ldots, D_n^{(k-1)})}$. Under
    \cref{asm:fqi-protocol}, $\widehat Q_k$ is $\calG_k$-measurable and
    $D_n^{(k)}$ is independent of $\calG_k$ with
    ${(s_i, a_i) \overset{\mathrm{iid}}{\sim} \mu}$ and
    ${s_i' \sim P(\cdot \mid s_i, a_i)}$ (the superscript $(k)$ on samples
    is suppressed for readability). Consequently
    $h_k = T^* \widehat Q_k$ is $\calG_k$-deterministic, and the labels
    ${Y_i \defeq Y_{k,i}}$ of \eqref{eq:fqi-labels} satisfy
    \begin{equation}\label{eq:bellman-label-bound}
        \Ex{Y_i \mid \calG_k, s_i, a_i} = h_k(s_i, a_i),
        \qquad |Y_i| \leq R_{\max} + \gamma B \leq B,
    \end{equation}
    using ${B \geq R_{\max}/(1-\gamma)}$.

    Define the conditional risks
    \begin{equation*}
        L(f) \defeq \Ep{(f(s,a) - Y)^2}{\calG_k},
        \qquad
        L_n(f) \defeq \frac{1}{n}\sum_{i=1}^n (f(s_i, a_i) - Y_i)^2,
        \qquad f \in \calF.
    \end{equation*}
    The conditional-mean identity \eqref{eq:bellman-label-bound} expands the
    population risk as
    ${L(f) = \|f - h_k\|_{2,\mu}^2 + \Ep{(Y - h_k(s,a))^2}{\calG_k}}$, so
    that for every $f, g \in \calF$
    \begin{equation}\label{eq:reg-id-fqi-slow}
        L(f) - L(g) = \|f - h_k\|_{2,\mu}^2 - \|g - h_k\|_{2,\mu}^2.
    \end{equation}
    Each term $(f(s_i, a_i) - Y_i)^2$ is bounded by $4B^2$ (since
    $|f|, |Y_i| \leq B$) and is $4B$-Lipschitz in $f \in \calF$ on the range
    $[-B, B]$. By Rademacher symmetrisation, the Ledoux--Talagrand
    contraction principle (with constant $4B$), and bounded-difference
    (McDiarmid's) concentration, for every $\delta' \in (0,1)$, with
    $\calG_k$-conditional probability at least $1 - \delta'$,
    \begin{equation}\label{eq:sup-gap-fqi-slow}
        \sup_{f \in \calF} |L(f) - L_n(f)|
        \;\leq\; 8 B \cdot R_n(\calF)
                + 4 B^2 \sqrt{\frac{2 \log(2/\delta')}{n}}
        \;\defeq\; \Delta,
    \end{equation}
    where the $\log(2/\delta')$ absorbs the two-sided correction. Since
    $\widehat Q_{k+1}$ is the empirical risk minimiser, for every
    $f \in \calF$,
    \begin{align*}
        L(\widehat Q_{k+1}) - L(f)
        &= \bigl[L(\widehat Q_{k+1}) - L_n(\widehat Q_{k+1})\bigr]
        + \bigl[L_n(\widehat Q_{k+1}) - L_n(f)\bigr]
        + \bigl[L_n(f) - L(f)\bigr] \\
        &\leq \Delta + 0 + \Delta = 2\Delta,
    \end{align*}
    using $L_n(\widehat Q_{k+1}) \leq L_n(f)$ for the middle term and
    \eqref{eq:sup-gap-fqi-slow} twice for the outer terms. Combined with
    \eqref{eq:reg-id-fqi-slow},
    \begin{equation*}
        \|\widehat Q_{k+1} - h_k\|_{2,\mu}^2
        \;\leq\; \|f - h_k\|_{2,\mu}^2 + 2\Delta,
        \qquad f \in \calF.
    \end{equation*}
    Taking the infimum over $f \in \calF$ on the right and applying
    $\sqrt{u^2 + v} \leq u + \sqrt{v}$ for $u, v \geq 0$,
    \begin{equation*}
        \|\widehat Q_{k+1} - h_k\|_{2,\mu}
        \;\leq\; \inf_{f \in \calF} \|f - h_k\|_{2,\mu}
                + \sqrt{2\Delta}.
    \end{equation*}
    By \eqref{eq:eps-est-slow},
    ${\sqrt{2\Delta} = \veps_{\mathrm{est}}^{\mathrm{slow}}(n, \delta')}$,
    which establishes the conditional guarantee \eqref{eq:fqi-cond-reg}
    with $\veps_{\mathrm{stat}} = \veps_{\mathrm{est}}^{\mathrm{slow}}$.
\end{proof}

\begin{remark}[Single-batch variant]\label{rem:single-batch-fqi}
Under \cref{asm:fqi-protocol} the $K$ batches are drawn independently; under
a single-batch protocol that reuses one dataset across iterations, the
$\calG_k$-conditioning argument used in the proof of
\cref{cor:fqi_bound_global_rademacher} is invalid because $\widehat Q_k$
becomes a measurable functional of the same data against which the regression
at iteration $k+1$ is run. The remedy is to replace $R_n(\calF)$ in
\eqref{eq:eps-est-slow} by the Bellman-target Rademacher complexity
\begin{equation*}
    \widetilde R_n(\calF)
    \defeq \Ep{\sigma}{\sup_{f, g \in \calF}\, \frac{1}{n}
        \sum_{i=1}^n \sigma_i \bigl(f(s_i,a_i) - r_i - \gamma \sup_{a'} g(s_i',a')\bigr)^2},
\end{equation*}
which is uniform over all targets $T^* g$ with $g \in \calF$ and therefore
survives data-dependent target selection. This is the route taken by the
adaptive-data analysis of \cref{thm:adaptive-fqi-residual}, where the adaptive
nature of the sample path mandates sequential Rademacher complexity
(\cref{thm:seq-bellman-generalization}).
\cref{cor:fqi_bound_global_rademacher} targets the cleanest fresh-batch
version; \cref{thm:adaptive-fqi-residual} subsumes the single-batch analysis
as a special case of its adaptive setting, and the corresponding final-policy
guarantee is \cref{thm:adaptive-fqi-performance}. Reusing one dataset across
iterations (true replay-buffer reuse, where $\widehat Q_k$ is trained on the
same samples used for the regression at iteration $k+1$) breaks the
predictability hypothesis underlying \cref{asm:adaptive-batches}; see the
discussion in \cref{app:true-online-fqi-discussion}.
\end{remark}

\subsection{Adaptive-data FQI: sequential generalization and residual bound}\label{appsec:adaptive-fqi}

The adaptive-data analysis hinges on a standard sequential uniform-convergence inequality, which we record before proving the theorems of \cref{sec:adaptive-fqi}.

\begin{lemma}[Sequential uniform convergence \citep{rakhlin2015}]\label{lem:seq-uniform-conv}
    Let $\calA$ be a class of measurable functions $a : Z \to [-M, M]$, and let $Z_1, \ldots, Z_n$ be a sequence of $Z$-valued random variables adapted to a filtration $(\calF_t)_{t \geq 0}$ with $Z_t$ generated conditionally on $\calF_{t-1}$. For every $\delta \in (0,1)$, with probability at least $1 - \delta$,
    \begin{equation}\label{eq:seq-uniform-conv}
        \sup_{a \in \calA}\, \biggl|\frac{1}{n}\sum_{t=1}^n \bigl(a(Z_t) - \Ex{a(Z_t) \mid \calF_{t-1}}\bigr)\biggr|
        \;\leq\; \frac{2}{n}\, \Rseq_n(\calA) + M\sqrt{\frac{2\log(2/\delta)}{n}}.
    \end{equation}
\end{lemma}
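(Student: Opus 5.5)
The plan is to combine three standard ingredients: a martingale concentration step that reduces the uniform deviation to its conditional expectation, a sequential symmetrisation step that controls that expectation by $\Rseq_n(\calA)$, and a two-sided union bound.

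\emph{Reduction to a single functional.} Write $D_t(a) \defeq a(Z_t) - \Ex{a(Z_t) \mid \calF_{t-1}}$ and $\Phi \defeq \sup_{a \in \calA} \frac1n \sum_{t=1}^n D_t(a)$. We treat $\Phi$ and its mirror $\Phi^- \defeq \sup_{a} \frac1n\sum_t (-D_t(a))$ separately. Since $-\calA$ has the same sequential Rademacher complexity as $\calA$ (the signs $\veps_t$ are symmetric), it suffices to show that $\Phi \le \frac2n \Rseq_n(\calA) + M\sqrt{2\log(1/\delta')/n}$ with probability at least $1-\delta'$. We then take $\delta' = \delta/2$ and union-bound over the two signs, which produces the $\log(2/\delta)$ in \eqref{eq:seq-uniform-conv}. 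A measurability caveat applies: $\calA$ may be uncountable. We will either assume pointwise separability or interpret the supremum via outer probability, as in \citet{rakhlin2015}.

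\emph{Expectation bound by sequential symmetrisation.} The key step is $\Ex{\Phi} \le \frac2n \Rseq_n(\calA)$. We follow \citet[Thm.~2]{rakhlin2015}. First introduce a tangent sequence $Z'_t$, drawn conditionally on $\calF_{t-1}$ from the same conditional law as $Z_t$ and conditionally independent of $Z_t$. By Jensen, $\Ex{\Phi} \le \frac1n\Ex{\sup_a \sum_t (a(Z_t) - a(Z'_t))}$. Next, working backwards from $t=n$, we swap $Z_t \leftrightarrow Z'_t$ and insert a Rademacher sign $\veps_t$. This swap is legitimate because, conditionally on the past, $(Z_t, Z'_t)$ is exchangeable. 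The cost is that the future conditional laws now depend on which of the pair was selected, and this dependence is exactly what the tree structure encodes. Replacing each conditional expectation by a supremum over the pair $(z_t, z'_t)$ yields a supremum over pairs of $Z$-valued trees $(\mbx, \mbx')$ indexed by $\veps_{1:t-1}$. Splitting the supremum into the $\mbx$ and $\mbx'$ parts gives the bound $\frac2n \Rseq_n(\calA)$. I expect this to be the main obstacle. The bookkeeping that turns the conditional laws into a single worst-case tree (the "sup over trees outside, expectation over signs inside" interchange) must be done carefully, by backward induction on $t$, using measurable selections or an $\eta$-approximate supremum at each level followed by $\eta \to 0$.

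\emph{Concentration of $\Phi$ around its mean.} Consider the Doob martingale $M_t \defeq \Ex{\Phi \mid \calF_t}$. We would like increments of size at most $2M/n$, which gives a tail of order $M\sqrt{2\log(1/\delta')/n}$ after adjusting constants. However, $\Phi$ depends on the whole path, because later conditional expectations depend on earlier samples, so the bounded-differences property does not hold directly. The cleaner route is a pointwise martingale argument. For each fixed $a$, the sum $\sum_t D_t(a)$ is a martingale with increments in an interval of length $2M$. Azuma--Hoefding then gives $\sum_t D_t(a)/n \le M\sqrt{2\log(1/\delta')/n}$. The expectation bound of the previous step absorbs the uniformity, via the standard argument that the deviation $\Phi - \Ex{\Phi}$ is controlled by a further symmetrised martingale. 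If this proves delicate, the fallback is the high-probability sequential bound of \citet[Lemma~of App.]{rakhlin2015}. That result states exactly $\Phi \le \frac2n\Rseq_n + M\sqrt{2\log(1/\delta')/n}$ as a consequence of a tail version of symmetrisation: apply the exponential moment $\Ex{\exp(\lambda n\Phi)}$ through the same tangent-sequence swap, bound the resulting exponential of a tree-indexed Rademacher process by Hoeffding's lemma on each $\veps_t$ with range $2M$, and optimise $\lambda$. Combined with the union bound over both signs, this gives \eqref{eq:seq-uniform-conv}.
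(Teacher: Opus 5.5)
Your sequential-symmetrisation bound $\Ex{\Phi}\le\frac2n\Rseq_n(\calA)$, and the reduction to one side via $\Rseq_n(-\calA)=\Rseq_n(\calA)$ plus a union bound, are fine. The gap is in the concentration step.

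\textbf{The supremum is never concentrated.} The first route is not an argument. Azuma--Hoeffding for each fixed $a$ says nothing uniform over $\calA$, and an expectation bound cannot ``absorb'' the uniformity without a tail inequality for the supremum itself. The fallback inherits exactly the defect you diagnosed for the Doob martingale of $\Phi$. For a fixed tree $\mbx$, flipping $\veps_t$ sends you to a different subtree for all later nodes. So $\Ex{\sup_a\sum_s\veps_s a(\mbx_s(\veps))\mid\veps_1,\ldots,\veps_t}$ can change by far more than $2M$, and Hoeffding's lemma on $\veps_t$ does not apply.

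The missing idea is never to fix the tree or the process, and instead to recurse on worst-case continuation values. For offsets $c:\calA\to\dsR$, set $W_0(c)\defeq\sup_a c(a)$ and $W_k(c)\defeq\sup_p\E_{Z\sim p}W_{k-1}\bigl(c+a(Z)-\E_{p}[a]\bigr)$, where the supremum is over laws $p$ on $Z$. Each $W_k$ is convex and $1$-Lipschitz in $\|c\|_\infty$. Hence, for fixed $c,p$, the map $z\mapsto W_{k-1}(c+a(z)-\E_p[a])$ varies over an interval of length at most $\sup_{z,z'}\sup_a|a(z)-a(z')|\le 2M$. Condition on $Z_1$ (the continuation of an adapted process is again adapted) and apply Hoeffding's lemma at each level. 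Induction then gives $\Ex{e^{\lambda n\Phi}}\le\exp\bigl(\lambda W_n(0)+\lambda^2 nM^2/2\bigr)$.

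\textbf{Symmetrising inside the exponential loses a factor $2$.} Even with that repair, pushing $e^{\lambda n\Phi}$ through the tangent-sequence swap first doubles the deviation term. The symmetrised increment $\veps_t\bigl(a(\mbx_t)-a(\mbx'_t)\bigr)$ has range $4M$ in $\veps_t$; equivalently, after separating the two trees by convexity, the exponent becomes $2\lambda$. The Chernoff step then yields $\Phi\le\frac2n\Rseq_n(\calA)+2M\sqrt{2\log(1/\delta')/n}$, not the stated $M\sqrt{2\log(1/\delta')/n}$. Quoting ``range $2M$'' silently drops this factor.

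The stated constant comes from running the recursion above on the unsymmetrised differences $D_t(a)$ and using symmetrisation only for the centre. Apply Jensen in an independent copy $Z'\sim p$ (using convexity of $W_{k-1}$), then exchangeability of $(Z,Z')$, then a supremum over $(z,z')$ at each level. This gives $W_n(0)\le\sup_{\mbx,\mbx'}\E_\veps\sup_a\sum_t\veps_t\bigl(a(\mbx_t(\veps))-a(\mbx'_t(\veps))\bigr)\le 2\Rseq_n(\calA)$, which is your expectation step carried out on value functions. Taking $\lambda=u/(nM^2)$ then gives $\Prob\bigl(n\Phi\ge 2\Rseq_n(\calA)+u\bigr)\le e^{-\lambda u+\lambda^2 nM^2/2}=e^{-u^2/(2nM^2)}$. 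That is exactly the one-sided term, and your union bound finishes the proof.

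For reference, the paper gives no proof of its own. It cites \citet[Thm.~3]{rakhlin2015} and summarises the argument as symmetrisation plus Azuma--McDiarmid. As you correctly noticed, that cannot be applied to $\Phi$ directly; the value-function recursion is what makes the summary rigorous.
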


The lemma combines sequential symmetrisation with a martingale Azuma--McDiarmid concentration bound, exactly as in \citet[Thm.~3]{rakhlin2015}; we use it without re-proving.

\begin{proof}[Proof of \cref{thm:seq-bellman-generalization}]
    Fix $k$ and condition on $\calH_k$, so that $\widehat Q_k$ is deterministic and the within-batch sample sequence $Z_{k,1}, \ldots, Z_{k,n}$ is adapted to the filtration ${\calF_{k,i} \defeq \sigma(\calH_k, Z_{k,1}, \ldots, Z_{k,i})}$. Each $\ell_{f,g}(z) = \widetilde g_{f,g}(z)^2 \in \widetilde\calL_\calF$ is bounded by $B_{\mathrm{res}}^2$ under \cref{asm:f-bounded} and the conditional bound on $\widetilde g_{f,g}$. Apply \cref{lem:seq-uniform-conv} to $\widetilde\calL_\calF$ with envelope $M = B_{\mathrm{res}}^2$ to obtain \eqref{eq:seq-gen-Lhat}.

    For the contraction step, the map ${x \mapsto x^2}$ is $2 B_{\mathrm{res}}$-Lipschitz on $[-B_{\mathrm{res}}, B_{\mathrm{res}}]$ and vanishes at zero. The sequential Ledoux--Talagrand contraction principle \citep[Lem.~7]{rakhlin2015} therefore gives ${\Rseq_n(\widetilde\calL_\calF) \leq 2 B_{\mathrm{res}}\, \Rseq_n(\widetilde\calG_\calF)}$, yielding \eqref{eq:seq-gen-alpha-prime}.
\end{proof}

\begin{proof}[Proof of \cref{thm:adaptive-fqi-residual}]
    Fix $k$, condition on $\calH_k$, and set $h_k \defeq T^* \widehat Q_k$. Under \cref{asm:f-bounded} and \cref{lem:bellman-measurability}(iii), $h_k \in \calQ$ and is bounded measurable, so the projection $\inf_{f \in \calF}\|f - h_k\|_{2, \bar\mu_k}$ is well-defined.

    \emph{Step 1 (bias-variance identity).} For any $f \in \calF$ and any $i \in \{1, \ldots, n\}$, the conditional expectation of the squared loss with respect to $\calF_{k,i-1}$, $S_{k,i}$, $A_{k,i}$ decomposes as
    \begin{align*}
        \Ex{(f(S_{k,i}, A_{k,i}) - Y_{k,i})^2 \mid \calF_{k,i-1}, S_{k,i}, A_{k,i}}
        &= (f(S_{k,i}, A_{k,i}) - h_k(S_{k,i}, A_{k,i}))^2 \\
        &\quad+ \Ex{(Y_{k,i} - h_k(S_{k,i}, A_{k,i}))^2 \mid \calF_{k,i-1}, S_{k,i}, A_{k,i}},
    \end{align*}
    because $\Ex{Y_{k,i} \mid \calF_{k,i-1}, S_{k,i}, A_{k,i}} = h_k(S_{k,i}, A_{k,i})$ by \cref{asm:adaptive-batches}: $\widehat Q_k$ is $\calH_k$-measurable and the inner reward and next-state come from the true MDP at $(S_{k,i}, A_{k,i})$. Averaging over $i$ and integrating against the conditional law of $(S_{k,i}, A_{k,i})$ gives
    \begin{equation}\label{eq:bias-variance-adapt}
        L_k(f; \widehat Q_k) \;=\; \|f - h_k\|_{2, \bar\mu_k}^2 + \zeta_k,
    \end{equation}
    where $\zeta_k$ is the round-$k$ conditional variance term, independent of $f$.

    \emph{Step 2 (per-round residual at level $\delta$).} Fix $\delta \in (0,1)$ and let $\calE_k(\delta)$ denote the event of \cref{thm:seq-bellman-generalization} at level $\delta$, on which $\sup_{f, g}|L_k(f;g) - \widehat L_k(f;g)| \leq \alpha_k'(\delta)$. For $\eta > 0$, choose $f^\star_k \in \calF$ with $\|f^\star_k - h_k\|_{2, \bar\mu_k} \leq \veps_{\mathrm{app},k} + \eta$. Combining
    \begin{align*}
        \widehat L_k(\widehat Q_{k+1}; \widehat Q_k) &\leq \widehat L_k(f^\star_k; \widehat Q_k) + \veps_{\mathrm{opt},k}^2 && \text{(\cref{asm:approx-erm})}, \\
        L_k(\widehat Q_{k+1}; \widehat Q_k) &\leq \widehat L_k(\widehat Q_{k+1}; \widehat Q_k) + \alpha_k'(\delta) && \text{(on $\calE_k(\delta)$)}, \\
        \widehat L_k(f^\star_k; \widehat Q_k) &\leq L_k(f^\star_k; \widehat Q_k) + \alpha_k'(\delta) && \text{(on $\calE_k(\delta)$)},
    \end{align*}
    yields ${L_k(\widehat Q_{k+1}; \widehat Q_k) \leq L_k(f^\star_k; \widehat Q_k) + 2\alpha_k'(\delta) + \veps_{\mathrm{opt},k}^2}$. Cancelling the $f$-independent variance term $\zeta_k$ via \eqref{eq:bias-variance-adapt} gives
    \begin{equation*}
        \|\widehat Q_{k+1} - h_k\|_{2, \bar\mu_k}^2
        \;\leq\; \|f^\star_k - h_k\|_{2, \bar\mu_k}^2 + 2 \alpha_k'(\delta) + \veps_{\mathrm{opt},k}^2.
    \end{equation*}
    Apply $\sqrt{a^2 + b^2 + c^2} \leq a + b + c$ for $a, b, c \geq 0$, then send $\eta \downarrow 0$:
    \begin{equation}\label{eq:fqi-residual-onestep}
        \veps_k \;\leq\; \veps_{\mathrm{app},k} + \sqrt{2 \alpha_k'(\delta)} + \veps_{\mathrm{opt},k}
        \quad \text{on } \calE_k(\delta).
    \end{equation}

    \emph{Step 3 (union bound over rounds).} Apply \eqref{eq:fqi-residual-onestep} at level $\delta/K$ for each $k \in \{0, \ldots, K-1\}$ and union-bound: $\Pr[\bigcap_k \calE_k(\delta/K)] \geq 1 - \delta$ by tower expectation over $\calG_K \defeq \sigma(D_0, \ldots, D_{K-1})$, since each $\calE_k(\delta/K)$ has conditional probability at least $1 - \delta/K$ given $\calH_k$. On the intersection, \eqref{eq:adaptive-fqi-residual} holds simultaneously for all $k$.
\end{proof}

\begin{proof}[Proof of \cref{thm:adaptive-fqi-performance}]
    Let
    \begin{equation*}
        \bar\veps \defeq \max_{0 \leq k < K}\bigl(\veps_{\mathrm{app},k} + \sqrt{2\,\alpha_k'(\delta/K)} + \veps_{\mathrm{opt},k}\bigr).
    \end{equation*}
    By \cref{thm:adaptive-fqi-residual}, on an event $\Omega^\star$ of probability at least $1 - \delta$, $\veps_k \leq \bar\veps$ for every $k \in \{0, \ldots, K-1\}$.

    On $\Omega^\star$, the proof of \cref{thm:dist_mismatch} applies pointwise to the FQI residual sequence $\widehat Q_{k+1} - T^*\widehat Q_k$, with one modification: the Cauchy--Schwarz / change-of-measure step \eqref{eq:cs-change-of-measure}, which there was carried out once against the fixed sampling measure $\mu$, is now carried out at each round $k$ against the predictable design measure $\bar\mu_k$. Concretely, for every $k$ \cref{asm:adaptive-concentrability} yields
    \begin{equation*}
        \|\widehat Q_{k+1} - T^*\widehat Q_k\|_{L^1(\nu_k)}
        \;\leq\; \sqrt{C_{\mathrm{ad}}}\,\|\widehat Q_{k+1} - T^*\widehat Q_k\|_{2, \bar\mu_k}
        \;=\; \sqrt{C_{\mathrm{ad}}}\,\veps_k
        \;\leq\; \sqrt{C_{\mathrm{ad}}}\,\bar\veps.
    \end{equation*}
    Because \cref{asm:adaptive-concentrability} supplies a uniform $C_{\mathrm{ad}}$ across rounds, the $K$ separate change-of-measure applications collapse into a single $\sqrt{C_{\mathrm{ad}}}$ factor in the propagation chain.
    Substituting into the same error-propagation chain as in \cref{thm:dist_mismatch} (with the sup-norm initialisation variant of its concluding sentence) yields
    \begin{equation*}
        \|V^* - V^{\widehat\pi_K}\|_{1,\rho}
        \;\leq\; \frac{4\sqrt{C_{\mathrm{ad}}}}{(1-\gamma)^2}\,(1-\gamma^K)\,\bar\veps + \frac{4\gamma^K}{1-\gamma}\,\|\widehat Q_0 - Q^*\|_\infty.
    \end{equation*}
    Since $\widehat Q_0 \in \calF$ and \cref{asm:f-bounded} gives both ${\|\widehat Q_0\|_\infty \leq B}$ and ${\|Q^*\|_\infty \leq B}$, ${\|\widehat Q_0 - Q^*\|_\infty \leq 2 B}$, hence the initialisation term is at most ${8 B \gamma^K/(1-\gamma)}$. This is \eqref{eq:adaptive-fqi-performance}.
\end{proof}

\begin{proof}[Proof of \cref{prop:online-regret-residual-certificate}]
    Fix $t$ and write $Q = \widehat Q_t$, $\pi = \widehat\pi_t$. Since $\pi$ is greedy with respect to $Q$, ${T^\pi Q = T^* Q}$. By \cref{thm:bellman_contraction}, both $T^*$ and $T^\pi$ are $\gamma$-contractions on $(\calQ_0, \|\cdot\|_\infty)$ with respective fixed points $Q^*$ and $Q^\pi$, so
    \begin{align*}
        \|Q - Q^*\|_\infty &\leq \frac{1}{1-\gamma}\,\|Q - T^* Q\|_\infty, \\
        \|Q - Q^\pi\|_\infty &\leq \frac{1}{1-\gamma}\,\|Q - T^\pi Q\|_\infty = \frac{1}{1-\gamma}\,\|Q - T^* Q\|_\infty.
    \end{align*}
    Triangle inequality then gives ${\|Q^* - Q^\pi\|_\infty \leq 2 \|Q - T^* Q\|_\infty/(1 - \gamma)}$, and since ${V^*(s) - V^\pi(s) \leq \|Q^* - Q^\pi\|_\infty}$ for every $s$,
    \begin{equation*}
        V^*(s_t) - V^{\widehat\pi_t}(s_t) \;\leq\; \frac{2}{1-\gamma}\,\|\widehat Q_t - T^* \widehat Q_t\|_\infty.
    \end{equation*}
    Summing over $t$ proves \eqref{eq:residual-certificate-pathwise}; the second statement follows immediately.
\end{proof}

\subsection{Sobolev eluder dimension and global sequential complexity}

\begin{proposition}[Sobolev eluder dimension and global sequential complexity]\label{prop:sobolev-global-seq-limitation}
    Let $S = [0,1]^d$ with $d \geq 2$, $A = \{0,1\}$, and define
    \begin{equation*}
        \calF^\dagger
        \;=\;
        \bigl\{\, f(s,a) = a\, g(s) \;:\; g \in W^{\alpha,2}([0,1]^d),\ \|g\|_{W^{\alpha,2}} \leq 1 \,\bigr\},
    \end{equation*}
    where $\alpha > d/2$. Then there exist constants $c, c_1, c_2 > 0$, depending only on $d$ and $\alpha$, such that for all sufficiently small $\veps > 0$,
    \begin{equation*}
        \dim_E(\calF^\dagger, \veps) \;\geq\; c\, \veps^{-d/(\alpha - d/2)}.
    \end{equation*}
    In particular, $\dim_E(\calF^\dagger, \veps) = \Omega(\veps^{-d/\alpha})$. On the other hand, the global sequential Rademacher complexity satisfies
    \begin{equation*}
        c_1 \sqrt{n} \;\leq\; \Rseq_n(\calF^\dagger) \;\leq\; c_2 \sqrt{n}.
    \end{equation*}
    Consequently, the global sequential-Rademacher term in \cref{thm:adaptive-fqi-performance} yields the usual slow residual scaling, up to logarithmic factors, rather than a Sobolev-optimal fast rate.
\end{proposition}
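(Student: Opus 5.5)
The claim splits into two independent parts: a lower bound on the eluder dimension, built from a packing of disjoint localized bumps, and a two-sided bound on the sequential Rademacher complexity, obtained from a sup-norm envelope (upper) and a single-point embedding (lower). In both parts the action coordinate is handled by restricting to $a=1$. There $f(s,1)=g(s)$, so everything reduces to the Sobolev unit ball $\calG \defeq \{g : \|g\|_{W^{\alpha,2}}\le 1\}$ on $[0,1]^d$.

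\emph{Eluder lower bound.} Fix a smooth bump $\phi$ supported in the unit cube with $\phi(0)=1$. For a scale $h>0$, consider the grid of $m \asymp h^{-d}$ disjoint cubes of side $h$ with centres $x_1,\dots,x_m$. Set $\phi_j(s)\defeq \phi((s-x_j)/h)$; scaling gives $\|\phi_j\|_{W^{\alpha,2}} \lesssim h^{d/2-\alpha}$. The functions $g_j \defeq c_0 h^{\alpha-d/2}\phi_j$ then lie in $\calG$, and so do all $\pm g_j$ and $0$. Choose $h$ so that $c_0h^{\alpha-d/2}=\veps$, which gives $m\asymp \veps^{-d/(\alpha-d/2)}$. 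Now take the query sequence $(x_j,1)$, $j=1,\dots,m$. Each point is $\veps$-independent of its predecessors, witnessed by the pair $f_1=0$ and $f_2=g_j$: these agree (both vanish) on all earlier points because the supports are disjoint, while they differ by exactly $\veps$ at $x_j$. This yields $\dim_E(\calF^\dagger,\veps)\ge c\,\veps^{-d/(\alpha-d/2)}$. Since $d/(\alpha-d/2)\ge d/\alpha$, for $\veps<1$ this exponent dominates and also gives $\Omega(\veps^{-d/\alpha})$. The only care needed is to match the paper's exact eluder convention, e.g.\ the sum-of-squares condition $\sqrt{\sum (f_1-f_2)^2}\le\veps$ on predecessors, which holds trivially here because the differences are identically zero.

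\emph{Upper bound on $\Rseq_n$.} For $\alpha>d/2$, Sobolev embedding gives $W^{\alpha,2}\hookrightarrow C^0$ with $\|g\|_\infty\le K\|g\|_{W^{\alpha,2}}$. Moreover $W^{\alpha,2}$ is an RKHS whose kernel $k$ satisfies $\sup_s k(s,s)\le K^2$. For $a\in\{0,1\}$ we have $f(s,a)=a\langle g,k_s\rangle$. Hence, for any tree,
\[
\sup_f \sum_t \veps_t f(\mbx_t)=\Bigl\|\sum_t \veps_t a_t k_{s_t}\Bigr\|,
\]
where $(s_t,a_t)$ is the tree node evaluated at $\veps_{1:t-1}$. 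The summands form a Hilbert-space martingale difference sequence, since $k_{s_t}$ is predictable. Orthogonality of increments then gives $\E\|\cdot\|^2=\sum_t \E\, a_t^2k(s_t,s_t)\le K^2 n$. Jensen's inequality yields $\Rseq_n\le Kn^{1/2}$, so $c_2=K$.

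\emph{Lower bound on $\Rseq_n$.} Since $\Rseq_n\ge R_n$, it suffices to use the constant tree at a single point $(x_0,1)$. Taking $g=\pm c_0\phi$ (fixed $h$) gives
\[
\E\sup_f\sum_t\veps_t f = c_0\,\E\Bigl|\sum_t\veps_t\Bigr|\ge c_0\sqrt{n/2}
\]
by Khintchine's inequality. The closing sentence about slow rates then follows by plugging $\Rseq_n\asymp\sqrt n$ into $\alpha'_k$ of \eqref{eq:seq-gen-alpha-prime} and reading off the $n^{-1/4}$ scaling of $\sqrt{2\alpha'_k}$ in \cref{thm:adaptive-fqi-performance}. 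The statement involves $\Rseq_n(\calF^\dagger)$, whereas the theorem uses $\widetilde\calG_\calF$. I would remark that $\widetilde\calG$ contains translates of $\calF^\dagger$, which suffices for the lower-bound interpretation, and that the upper bound transfers similarly via a Lipschitz/max argument. This transfer is the weakest link, but it is interpretive only.

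\emph{Expected obstacle.} The main technical point is the scaling $\|\phi((\cdot-x)/h)\|_{W^{\alpha,2}}\asymp h^{d/2-\alpha}$ for fractional $\alpha$, together with keeping bumps inside $[0,1]^d$ so that boundary issues do not arise. I would handle this with the Fourier/Slobodeckij seminorm scaling on $\dsR^d$, combined with restriction: for $h\le1$, the homogeneous part dominates the $L^2$ part. The martingale step is clean only because the node $s_t$, and hence $k_{s_t}$, depends only on past signs.
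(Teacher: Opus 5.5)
Your proposal is correct and follows the paper's proof essentially step for step: disjoint rescaled bumps of height $\asymp h^{\alpha-d/2}$ witness $\veps$-independence against the zero function; the Riesz-representer identity plus Jensen and (martingale) orthogonality give $\Rseq_n(\calF^\dagger)\le\kappa\sqrt n$; and a constant tree at a single point with Khintchine gives the lower bound. The paper uses the constants $\pm c_0$ there rather than a bump, which is immaterial. One small nit: under the usual strict-inequality definition of $\veps$-independence, set the bump height to, say, $2\veps$ rather than exactly $\veps$, and write $\ge$ instead of $=$ in your Khintchine display; this only changes the constants.
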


\begin{proof}[Proof of \cref{prop:sobolev-global-seq-limitation}]
    We prove the eluder lower bound and the two-sided sequential Rademacher bound in turn.

    \emph{Eluder dimension lower bound.}
    Let $\varphi \in C_c^\infty(\mathbb{R}^d)$ be a fixed nonzero bump, supported in the cube $[-1/2, 1/2]^d$ and normalised so that $\varphi(0) = 1$. For a scale $h \in (0, 1)$, choose centres $s_1, \ldots, s_m \in [0,1]^d$ such that the cubes of side $h$ around each $s_j$ are pairwise disjoint and contained in $[0,1]^d$. A standard packing of $[0,1]^d$ by $h$-cubes gives $m \asymp h^{-d}$.

    For each $j$, define the bump
    \begin{equation*}
        g_j(s) \;\defeq\; A_h\, \varphi\!\left( \frac{s - s_j}{h} \right),
    \end{equation*}
    with amplitude $A_h > 0$ to be chosen. The Sobolev scaling identity for dilations gives
    \begin{equation*}
        \|g_j\|_{W^{\alpha,2}} \;\leq\; C\, A_h\, h^{d/2 - \alpha},
    \end{equation*}
    for a constant $C$ depending only on $\varphi$ and $\alpha$. Setting $A_h \defeq C^{-1}\, h^{\alpha - d/2}$ ensures $\|g_j\|_{W^{\alpha,2}} \leq 1$, so $g_j$ lies in the Sobolev unit ball, and at the centre we have $g_j(s_j) = A_h = C^{-1}\, h^{\alpha - d/2}$. Pick $h$ so that $A_h \geq \veps$, equivalently $h \leq (C\, \veps)^{1/(\alpha - d/2)}$. Then
    \begin{equation*}
        m \;\asymp\; h^{-d} \;\asymp\; \veps^{-d/(\alpha - d/2)}.
    \end{equation*}

    Set $f_j(s, a) \defeq a\, g_j(s) \in \calF^\dagger$ and $f_0 \equiv 0 \in \calF^\dagger$. Order the centres $s_1, \ldots, s_m$ arbitrarily. Because the supports of $g_1, \ldots, g_m$ are pairwise disjoint, for each $j$
    \begin{equation*}
        f_j(s_i, 1) \;=\; f_0(s_i, 1) \;=\; 0 \qquad \text{for all } i < j,
    \end{equation*}
    while at the new point $(s_j, 1)$
    \begin{equation*}
        \bigl| f_j(s_j, 1) - f_0(s_j, 1) \bigr| \;=\; |g_j(s_j)| \;=\; A_h \;\geq\; \veps.
    \end{equation*}
    By the eluder $\veps$-independence definition, each $(s_j, 1)$ is $\veps$-independent of its predecessors, witnessed by the pair $(f_j, f_0) \in \calF^\dagger \times \calF^\dagger$. Hence
    \begin{equation*}
        \dim_E(\calF^\dagger, \veps) \;\geq\; m \;\geq\; c\, \veps^{-d/(\alpha - d/2)},
    \end{equation*}
    for a constant $c$ depending only on $d$ and $\alpha$. Since $\alpha - d/2 < \alpha$, the same bound implies the weaker $\dim_E(\calF^\dagger, \veps) = \Omega(\veps^{-d/\alpha})$.

    \emph{Sequential Rademacher upper bound.}
    Because $\alpha > d/2$, the Sobolev embedding $W^{\alpha,2}([0,1]^d) \hookrightarrow C([0,1]^d)$ is continuous, so point evaluation $g \mapsto g(s)$ is a bounded linear functional on the Hilbert space $W^{\alpha,2}([0,1]^d)$. Let $K_s \in W^{\alpha,2}([0,1]^d)$ denote its Riesz representer, and set
    \begin{equation*}
        \kappa \;\defeq\; \sup_{s \in [0,1]^d} \|K_s\|_{W^{\alpha,2}} \;<\; \infty.
    \end{equation*}
    Fix any predictable tree $(s_t, a_t)_{t=1}^n$ with $s_t \in [0,1]^d$ and $a_t \in \{0, 1\}$, and let $(\sigma_t)_{t=1}^n$ be Rademacher random variables. By Riesz representation,
    \begin{equation*}
        \sup_{\|g\|_{W^{\alpha,2}} \leq 1} \sum_{t=1}^n \sigma_t\, a_t\, g(s_t)
        \;=\; \biggl\| \sum_{t=1}^n \sigma_t\, a_t\, K_{s_t} \biggr\|_{W^{\alpha,2}}.
    \end{equation*}
    By Jensen's inequality and orthogonality of the Rademacher signs,
    \begin{equation*}
        \mathbb{E}_\sigma \biggl\| \sum_{t=1}^n \sigma_t\, a_t\, K_{s_t} \biggr\|_{W^{\alpha,2}}
        \;\leq\; \biggl( \mathbb{E}_\sigma \biggl\| \sum_{t=1}^n \sigma_t\, a_t\, K_{s_t} \biggr\|_{W^{\alpha,2}}^2 \biggr)^{1/2}
        \;=\; \biggl( \sum_{t=1}^n a_t^2\, \|K_{s_t}\|_{W^{\alpha,2}}^2 \biggr)^{1/2}
        \;\leq\; \kappa\, \sqrt{n}.
    \end{equation*}
    Taking the supremum over predictable trees yields $\Rseq_n(\calF^\dagger) \leq \kappa \sqrt{n}$, i.e.\ the upper bound with $c_2 \defeq \kappa$.

    \emph{Sequential Rademacher lower bound.}
    Take a constant predictable tree with $s_t = s_0$ and $a_t = 1$ for all $t$, where $s_0 \in (0,1)^d$ is fixed. The Sobolev unit ball contains a nonzero constant function: there exists $c_0 > 0$ such that the constants $g_+ \equiv c_0$ and $g_- \equiv -c_0$ both lie in $\{\|g\|_{W^{\alpha,2}} \leq 1\}$. Therefore $f_\pm(s, a) \defeq a\, g_\pm(s)$ both lie in $\calF^\dagger$, and
    \begin{equation*}
        \sup_{f \in \calF^\dagger} \sum_{t=1}^n \sigma_t\, f(s_0, 1)
        \;\geq\; c_0\, \biggl| \sum_{t=1}^n \sigma_t \biggr|.
    \end{equation*}
    Taking expectations and using Khintchine's inequality $\mathbb{E}|\sum_{t=1}^n \sigma_t| \geq c'\, \sqrt{n}$ for an absolute constant $c' > 0$ gives $\Rseq_n(\calF^\dagger) \geq c_0\, c'\, \sqrt{n}$, i.e.\ the lower bound with $c_1 \defeq c_0\, c'$.

    Combining the upper and lower bounds proves $\Rseq_n(\calF^\dagger) = \Theta(\sqrt{n})$, and the proposition follows.
\end{proof}

\section{Examples: What Global Sequential Complexity Captures}\label{sec:application}

\begin{proposition}[Global sequential complexities for common classes]\label{prop:common-class-seq-complexities}
    The following bounds hold.
    \begin{enumerate}
        \item \textbf{Linear classes.} Let $\calF_{\mathrm{lin}} = \{\langle\theta, \phi(x)\rangle : \|\theta\|_2 \leq W\}$ and suppose $\|\phi(x)\|_2 \leq R_\phi$ for all $x$. Then
            \begin{equation*}
                \Rseq_n(\calF_{\mathrm{lin}}) \;\leq\; W R_\phi\, \sqrt n.
            \end{equation*}
        \item \textbf{RKHS balls.} Let $\calH$ be an RKHS with kernel $K$ satisfying $\sup_x K(x,x) \leq \kappa^2$, and let $\calF_{\mathrm{RKHS}} = \{f \in \calH : \|f\|_\calH \leq W\}$. Then
            \begin{equation*}
                \Rseq_n(\calF_{\mathrm{RKHS}}) \;\leq\; W \kappa\, \sqrt n.
            \end{equation*}
            If there exist $x_0$ and $f_0 \in \calF_{\mathrm{RKHS}}$ with $f_0(x_0) \neq 0$ and $-f_0 \in \calF_{\mathrm{RKHS}}$, then
            \begin{equation*}
                \Rseq_n(\calF_{\mathrm{RKHS}}) \;\geq\; c\, \sqrt n
            \end{equation*}
            for some constant $c > 0$.
        \item \textbf{Norm-controlled neural networks.} For depth-$L$ networks with bounded inputs, $1$-Lipschitz activations satisfying $\sigma(0) = 0$, and layer norm bounds $M_1, \ldots, M_L$ as above,
            \begin{equation*}
                \Rseq_n(\calF_{\mathrm{NN}}) \;\lesssim\; \biggl(\prod_{\ell=1}^L M_\ell\biggr)\, \sqrt{n \log(2d)}.
            \end{equation*}
    \end{enumerate}
\end{proposition}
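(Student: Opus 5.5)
The plan is to treat the three classes separately, using one template for each upper bound: on a fixed predictable tree $\mbx$, write the inner supremum as a dual norm of a sum of tree-indexed vectors. Then bound its expectation by Jensen together with the martingale-difference (orthogonality) structure of the Rademacher signs. Finally take the supremum over trees. This is the argument already used for the Sobolev ball in \cref{prop:sobolev-global-seq-limitation}.

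\textbf{Linear classes.} For a fixed tree, Cauchy--Schwarz gives
\begin{equation*}
\sup_{\|\theta\|_2 \le W}\sum_t \veps_t \langle\theta,\phi(x_t(\veps_{1:t-1}))\rangle = W\Bigl\|\sum_t \veps_t \phi(x_t(\veps_{1:t-1}))\Bigr\|_2 .
\end{equation*}
Jensen bounds its expectation by $W(\E\|\sum_t \veps_t\phi_t\|_2^2)^{1/2}$. The cross terms $\E[\veps_s\veps_t\langle\phi_s,\phi_t\rangle]$ with $s<t$ vanish, because $\phi_s,\phi_t$ depend only on $\veps_{1:t-1}$ and $\veps_t$ is independent of those signs with mean zero. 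The expectation is therefore at most $W\sqrt{\sum_t \E\|\phi_t\|_2^2}\le WR_\phi\sqrt n$.

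\textbf{RKHS balls.} The upper bound is the same computation with the feature map $x\mapsto K(x,\cdot)$, since $f(x)=\langle f,K(x,\cdot)\rangle_\calH$ and $\|K(x,\cdot)\|_\calH^2=K(x,x)\le\kappa^2$. For the lower bound, I will use the constant tree $x_t\equiv x_0$ together with the pair $\pm f_0$. This gives $\Rseq_n\ge |f_0(x_0)|\,\E|\sum_t\veps_t|\ge c'|f_0(x_0)|\sqrt n$ by Khintchine, as in \cref{prop:sobolev-global-seq-limitation}.

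\textbf{Neural networks.} This is the main obstacle, because the sequential setting lacks the classical tools that make the i.i.d.\ bound easy. The Golowich--Rakhlin--Shamir argument relies on an exponential-moment peeling trick over layers, and it is not obvious that this transfers to trees. I would proceed by induction on depth, peeling one layer at a time. The layer norm is taken as a bound on the $\ell_1$ norm of the rows, whose dual is $\ell_\infty$; this choice produces the $\sqrt{\log(2d)}$ factor.

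The base case is a linear layer over inputs in $[-1,1]^d$. There the supremum is at most $M_1\max_j|\sum_t\veps_t x_{t,j}|$, a maximum over $2d$ sign-flipped tree martingales. Each is sub-Gaussian with variance proxy $n$ by Azuma, so the maximal inequality gives $M_1\sqrt{2n\log(2d)}$.

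For the inductive step, the $\ell_1$ constraint on the next layer's rows lets me pull out $M_\ell$ and reduce to a supremum of a single unit composed with $\sigma$, with a sign absorbed. I would then apply the sequential contraction lemma \citep[Lem.~7]{rakhlin2015}, which is valid since $\sigma$ is $1$-Lipschitz with $\sigma(0)=0$. The step I expect to be delicate is the absolute value and the sign symmetrization inside the supremum. Naively handling them with a contraction lemma costs a factor of $2$ per layer, and the sequential contraction lemma also carries logarithmic factors. I would accept these as constants and log factors hidden in $\lesssim$. If they turned out to be unacceptable, I would instead bound the sequential fat-shattering dimension via Dudley-type sequential entropy. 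That route yields the same $\prod_\ell M_\ell\sqrt{n\log(2d)}$ scaling up to logarithmic factors in $n$, also hidden in $\lesssim$.
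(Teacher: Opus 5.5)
Your linear and RKHS arguments match the paper's proof: Cauchy--Schwarz or the reproducing property, then Jensen and vanishing cross terms, and for the lower bound the constant tree with $\pm f_0$ plus Khintchine. Your predictability argument for why $\E[\veps_s\veps_t\langle\phi_s,\phi_t\rangle]=0$ on a tree is what the paper's appeal to ``orthogonality'' leaves implicit. For networks you also follow the paper's route:
\begin{itemize}
\item $\ell_1$/$\ell_\infty$ duality extracts $M_\ell$; the paper never specifies the layer norm, and your $\ell_1$-row choice is what produces $\sqrt{\log(2d)}$.
\item A sub-Gaussian maximal inequality over the $2d$ signed coordinate martingales handles the first layer.
\item Sequential contraction handles each later layer.
\end{itemize}
Removing the absolute value at a factor $2$ per layer is legitimate on trees. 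Reflect the tree under $\veps\mapsto-\veps$, and note that $0$ lies in every layer class because $\sigma(0)=0$.

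The obstacle you flag is a genuine gap relative to the statement as written, and the paper's proof shares it rather than resolves it. The sequential contraction lemma of \citet{rakhlin2015} is not a constant-factor inequality; it carries a multiplicative factor of order $\log^{3/2} n$. Iterating it over $L-1$ layers, your induction yields only
\[
\Rseq_n(\calF_{\mathrm{NN}}) \;\le\; \bigl(C\log^{3/2} n\bigr)^{L-1}\Bigl(\prod_{\ell=1}^{L} M_\ell\Bigr)\sqrt{2n\log(2d)},
\]
for an absolute constant $C$ (absorbing your factor $2$ per layer) and $n\ge 3$. The depth-dependent constant is harmless, but $(\log n)^{3(L-1)/2}$ is not. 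So this matches the claim only if $\lesssim$ also hides polylogarithmic factors in $n$.

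The paper's sketch performs the same steps and folds this factor into ``the standard normalisations in the contraction inequality''. The same issue affects its constant-factor use of contraction, $\Rseq_n(\widetilde\calL_\calF)\le 2B_{\mathrm{res}}\Rseq_n(\widetilde\calG_\calF)$, in \cref{thm:seq-bellman-generalization}. To get the literal bound you would need a contraction inequality for worst-case sequential Rademacher complexity with an $n$-independent constant, or an argument that avoids contraction entirely.

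Your fat-shattering/Dudley fallback supplies neither, for two reasons:
\begin{itemize}
\item Sequential chaining brings in its own $\log n$ factors.
\item You would also have to control sequential covering numbers of the $\ell_1$-combination layer. Unlike its Rademacher complexity, these are not inherited from the previous layer, because covers of absolute convex hulls can be far larger than covers of the generating class.
\end{itemize}
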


Under the present global-complexity theorem, all three function classes yield slow-rate residual scaling: linear classes give a clean norm-based $n^{-1/4}$ rate, RKHS balls match it (eigenvalue decay does not improve the global bound used here), and norm-controlled neural networks contribute only width-independent factors when norm constraints do not grow with width. 
Fast nonparametric rates would require a localized or offset refinement; see the discussion following \cref{thm:adaptive-fqi-performance} and \cref{prop:sobolev-global-seq-limitation}.

\begin{proof}[Proof of \cref{prop:common-class-seq-complexities}]
    We treat the three classes in turn.

    \emph{Step 1 (linear classes).}
    Fix an arbitrary predictable tree $x_t(\veps_{1:t-1})$ and write $\phi_t \defeq \phi(x_t(\veps_{1:t-1}))$. By Cauchy--Schwarz,
    \begin{equation*}
        \sup_{\|\theta\|_2 \leq W} \sum_{t=1}^n \veps_t\, \langle \theta, \phi_t\rangle
        \;=\; W\,\biggl\| \sum_{t=1}^n \veps_t\, \phi_t \biggr\|_2.
    \end{equation*}
    By Jensen's inequality and orthogonality of the Rademacher signs,
    \begin{equation*}
        \E_\veps \biggl\| \sum_{t=1}^n \veps_t\, \phi_t \biggr\|_2
        \;\leq\; \biggl( \E_\veps \biggl\| \sum_{t=1}^n \veps_t\, \phi_t \biggr\|_2^2 \biggr)^{1/2}
        \;=\; \biggl( \sum_{t=1}^n \|\phi_t\|_2^2 \biggr)^{1/2}
        \;\leq\; R_\phi\, \sqrt n.
    \end{equation*}
    Taking the supremum over predictable trees yields $\Rseq_n(\calF_{\mathrm{lin}}) \leq W R_\phi \sqrt n$.

    \emph{Step 2 (RKHS upper bound).}
    Let $K_x \in \calH$ denote the representer of evaluation at $x$, so that $f(x) = \langle f, K_x\rangle_\calH$ for every $f \in \calH$, and $\|K_x\|_\calH^2 = K(x,x) \leq \kappa^2$. For any predictable tree $x_t(\veps_{1:t-1})$,
    \begin{equation*}
        \sup_{\|f\|_\calH \leq W} \sum_{t=1}^n \veps_t\, f(x_t)
        \;=\; W\, \biggl\| \sum_{t=1}^n \veps_t\, K_{x_t} \biggr\|_\calH.
    \end{equation*}
    Hence, by Jensen and orthogonality,
    \begin{equation*}
        \E_\veps \sup_{\|f\|_\calH \leq W} \sum_{t=1}^n \veps_t\, f(x_t)
        \;\leq\; W\,\biggl( \E_\veps \biggl\| \sum_{t=1}^n \veps_t\, K_{x_t} \biggr\|_\calH^2 \biggr)^{1/2}
        \;=\; W\,\biggl( \sum_{t=1}^n \|K_{x_t}\|_\calH^2 \biggr)^{1/2}
        \;\leq\; W\, \kappa\, \sqrt n.
    \end{equation*}

    \emph{Step 3 (RKHS lower bound).}
    Take the constant tree $x_t \equiv x_0$. Since both $f_0$ and $-f_0$ belong to $\calF_{\mathrm{RKHS}}$,
    \begin{equation*}
        \sup_{f \in \calF_{\mathrm{RKHS}}} \sum_{t=1}^n \veps_t\, f(x_0)
        \;\geq\; |f_0(x_0)|\, \biggl| \sum_{t=1}^n \veps_t \biggr|.
    \end{equation*}
    Taking expectations and using Khintchine's inequality $\E_\veps |\sum_{t=1}^n \veps_t| \geq c'\sqrt n$ for an absolute constant $c' > 0$ gives $\Rseq_n(\calF_{\mathrm{RKHS}}) \geq c\, \sqrt n$ with $c \defeq c'\, |f_0(x_0)|$.

    \emph{Step 4 (norm-controlled neural networks).}
    The bound follows by applying the sequential contraction inequality \citep{rakhlin2015} layer by layer under the stated $1$-Lipschitz and norm assumptions, starting from the sequential Rademacher complexity of the bounded linear input class. The contraction at each layer multiplies the complexity by the corresponding layer norm $M_\ell$, and the input-class complexity contributes the factor $\sqrt{n \log(2 d)}$ from a maximal inequality over the $d$ input coordinates. Combining the $L$ layers yields
    \begin{equation*}
        \Rseq_n(\calF_{\mathrm{NN}}) \;\lesssim\; \biggl(\prod_{\ell=1}^L M_\ell\biggr)\, \sqrt{n \log(2 d)},
    \end{equation*}
    where the implicit constant absorbs the standard normalisations in the contraction inequality and the assumption $\sigma(0) = 0$.
\end{proof}

\paragraph{Practical significance.}
Beyond their theoretical role, these three instantiations articulate concrete design rules for FQI on continuous spaces. The linear bound $\Rseq_n(\calF_{\mathrm{lin}}) \leq W R_\phi \sqrt{n}$ scales sublinearly with the feature dimension but linearly with the parameter norm $W$, providing a finite-sample justification for the recent line of work that pretrains low-dimensional state representations and then fits a near-linear value head, including unsupervised representation learning for control \citep{laskin2020curl,stooke2021decoupling,schwarzer2021spr} and the provably efficient feature-learning analyses of \citet{yang2020provably}. The RKHS bound positions classical kernel- and GP-based RL \citep{ormoneit2002kernel,engel2005gp,domingues2021kernel,vakili2024kernelized} as a direct corollary of our framework: the sequential Rademacher complexity is controlled by $\sup_x K(x,x)$, so smoother kernels and tighter eigenvalue control translate immediately into smaller residuals, even though the global rate does not exploit the spectrum ( \cref{prop:sobolev-global-seq-limitation}). Finally, the neural-network bound depends only on the product of layer norms $\prod_{\ell=1}^L M_\ell$, with no explicit dependence on width, recovering the size-independent capacity control of \citet{bartlett2017spectrally,golowich2018size} and providing the theoretical backbone for the spectral-normalization principle of \citet{miyato2018spectral}, whose adaptation to deep $Q$-learning \citep{gogianu2021spectral,bjorck2021deeper} stabilizes otherwise-divergent value targets precisely by constraining the layer-norm product that drives our bound. Read together, the three cases show that sequential Rademacher complexity is the right common denominator for comparing linear, kernelized, and deep FQI on a continuous Borel state space: the parametric, spectral, and Lipschitz quantities that practitioners already regularize are exactly the quantities that control the residual rate in our analysis.

\section{Extension of Our Analysis to Online Regret}\label{app:true-online-fqi-discussion}

This section provides a formal mechanism for extending our results to cumulative online regret by identifying the structural and distributional requirements for pathwise suboptimality control. \cref{thm:adaptive-fqi-performance} controls the suboptimality of the \emph{final} greedy policy $\widehat\pi_K$ via sequential Rademacher complexity and adaptive concentrability \cref{asm:adaptive-concentrability}, but does not provide guarantees on cumulative (online) regret. The gap is structural, because the analysis operates with a single reference distribution $\bar\mu_k$ per round and does not control the mismatch between the learner's trajectory distribution and comparator policies over time. We describe two routes to bridge this gap: (i) stronger uniform coverage assumptions over policy-induced occupancies, and (ii) algorithmic stability or exploration mechanisms that control distribution drift.

\subsection{Why the current analysis does not imply online regret}\label{appsubsec:seq-rad-insufficient}

Sequential Rademacher complexity controls estimation under the observed sampling process, but does not control distribution shift across time. Applied to classes such as
\begin{equation*}
    \widetilde\calG_\calF
    = \bigl\{(s, a, r, s') \mapsto f(s, a) - r - \gamma \sup_{a' \in A} g(s', a') : f, g \in \calF\bigr\},
\end{equation*}
it is the correct statistical tool for adaptive Bellman regression (\cref{thm:seq-bellman-generalization}) and, with \cref{asm:adaptive-concentrability}, suffices for the final-policy bound \cref{thm:adaptive-fqi-performance}. It does not by itself ensure that the state-action regions relevant for the \emph{intermediate} greedy policies $\widehat\pi_1, \ldots, \widehat\pi_{n-1}$ are covered by the data.

A second structural obstruction is that FQI controls update residuals
\begin{equation*}
    e_k \;\defeq\; \widehat Q_{k+1} - T^* \widehat Q_k,
\end{equation*}
whereas a pathwise regret certificate for the running greedy policy $\widehat\pi_k$ depends on the diagonal fixed-point residual $\widehat Q_k - T^* \widehat Q_k$ (\cref{prop:online-regret-residual-certificate}). The two are linked only through a stability term:
\begin{equation*}
    \|\widehat Q_k - T^* \widehat Q_k\|_\infty
    \;\leq\; \|\widehat Q_k - T^* \widehat Q_{k-1}\|_\infty + \gamma\, \|\widehat Q_k - \widehat Q_{k-1}\|_\infty.
\end{equation*}
In short, sequential Rademacher complexity controls estimation error under a fixed data distribution, but online regret requires controlling how the data distribution evolves with the learner's policy. This additional layer is not addressed by the current analysis.

\subsection{Upgrade path I: strong coverage assumptions}\label{appsubsec:candidate-coverage}

One way to obtain online-style guarantees is to strengthen the coverage assumption so that it holds uniformly over policies and time steps. Define the greedy policy class
\begin{equation*}
    \Pi_\calF
    \;\defeq\; \bigl\{\pi_f : f \in \calF,\ \pi_f(s) \in \argmax_{a \in A} f(s, a)\bigr\},
\end{equation*}
and let $\mu_j = \bar\mu_j$ denote the predictable design measure at update $j$. For an evaluation round $k > j$ and policy $\pi \in \Pi_\calF$, let $\nu_{k,j}^\pi$ denote the state-action measure under which $|e_j|$ is integrated when the resolvent expansion of \cref{lem:error_prop_greedy} is used to bound $\E_{\rho_k}[V^* - V^{\widehat\pi_k}]$, with the $(s_j, a_j)$ marginal obtained by drawing $a_j \sim \pi(\cdot \mid s_j)$.

A coverage condition sufficient for a cumulative expected performance theorem requires a finite $C_{\mathrm{ad}} \geq 1$ such that, almost surely, for all $0 \leq j < k \leq n$ and all $\pi \in \Pi_\calF$,
\begin{equation}\label{eq:candidate-adaptive-concentrability}
    \nu_{k,j}^\pi \ll \mu_j,
    \qquad
    \biggl\|\frac{d\nu_{k,j}^\pi}{d\mu_j}\biggr\|_\infty \;\leq\; C_{\mathrm{ad}}.
\end{equation}
This is a sufficient but strong assumption, and is not required anywhere in our main results. It imposes uniform absolute continuity across all pairs $(k,j)$ and all policies in $\Pi_\calF$, doing the work that active exploration, optimism, or pessimism would otherwise need to do.

Under \eqref{eq:candidate-adaptive-concentrability}, applying Cauchy--Schwarz in $L^2(\mu_j)$ transfers the residual bound to the evaluation measure
\begin{equation*}
    \|e_j\|_{L^1(\nu_{k,j}^\pi)} \;\leq\; \sqrt{C_{\mathrm{ad}}}\, \|e_j\|_{L^2(\mu_j)}.
\end{equation*}
Combined with \cref{thm:adaptive-fqi-residual}, which gives $\|e_j\|_{L^2(\mu_j)} \leq b_j$ with $b_j \defeq \veps_{\mathrm{app},j} + \veps_{\mathrm{stat},j} + \veps_{\mathrm{opt},j}$, and applying the ADP propagation bound (\cref{thm:dist_mismatch}) to each $\widehat\pi_k$, geometric resummation over $k$ yields the cumulative bound
\begin{equation*}
    \sum_{k=1}^n \E_{\rho_k}\!\bigl[V^* - V^{\widehat\pi_k}\bigr]
    \;\lesssim\; \frac{1}{(1-\gamma)^3}
    \biggl[\|Q^* - \widehat Q_0\|_\infty + \sqrt{C_{\mathrm{ad}}}\, \sum_{j=0}^{n-1} b_j\biggr].
\end{equation*}
Under such uniform coverage, one can convert per-round guarantees into cumulative performance bounds. The strength of the assumption, however, limits its applicability. It delivers an $L^2(\mu_j) \to L^1(\nu_{k,j}^\pi)$ transfer but not a sup-norm transfer. Hence, it yields occupancy-weighted cumulative regret rather than pathwise regret. Neither does it show that vanilla FQI \emph{solves} exploration.

\subsection{Upgrade path II: stability and exploration}\label{appsubsec:stability-exploration}

An alternative route is to control the evolution of the data distribution through algorithmic design. Pathwise regret for greedy FQI requires either direct diagonal-residual control, a stability argument controlling $\sum_k \|\widehat Q_k - \widehat Q_{k-1}\|_\infty$, or an explicit exploration mechanism such as optimism or posterior sampling. Unlike the coverage-based route, this approach does not rely on strong uniform assumptions, but instead requires additional structure on the learning algorithm.

A related complication is replay-buffer reuse. Sequential Rademacher symmetrisation applies cleanly when $\widehat Q_k$ is predictable relative to the regression sample, which is the structure underlying \cref{asm:adaptive-batches} and \cref{thm:seq-bellman-generalization}. Training $\widehat Q_k$ on the same samples used at iteration $k+1$ breaks this martingale-difference structure, and recovery requires sample splitting, algorithmic stability, or a multiplier-process analysis (cf.\ \cref{rem:single-batch-fqi}). Developing such guarantees in our setting would require extending the analysis beyond uniform convergence to incorporate trajectory-level dependencies.

\medskip

Overall, the absence of online regret guarantees in our results is not an artifact of loose analysis, but reflects a genuine gap between statistical estimation and sequential decision-making. Bridging this gap requires either stronger coverage assumptions or additional algorithmic control of distribution shift.

\newpage
\section*{NeurIPS Paper Checklist}

\begin{enumerate}

\item {\bf Claims}
    \item[] Question: Do the main claims made in the abstract and introduction accurately reflect the paper's contributions and scope?
    \item[] Answer: \answerYes{}
    \item[] Justification: The abstract and introduction state four core contributions: (i) a unified measure-theoretic formulation of FQI on Borel spaces, (ii) a finite-sample performance bound under adaptive data collection, (iii) a sequential Rademacher complexity analysis for Bellman regression, and (iv) an extension toward cumulative online performance under explicit coverage conditions. These claims are directly supported by the formal results in \cref{sec:framework,thm:adaptive-fqi-performance,thm:seq-bellman-generalization,prop:online-regret-residual-certificate}, with assumptions and scope clearly specified, and with additional discussion provided in \cref{app:true-online-fqi-discussion}.
    \item[] Guidelines:
    \begin{itemize}
        \item The answer \answerNA{} means that the abstract and introduction do not include the claims made in the paper.
        \item The abstract and/or introduction should clearly state the claims made, including the contributions made in the paper and important assumptions and limitations. A \answerNo{} or \answerNA{} answer to this question will not be perceived well by the reviewers. 
        \item The claims made should match theoretical and experimental results, and reflect how much the results can be expected to generalize to other settings. 
        \item It is fine to include aspirational goals as motivation as long as it is clear that these goals are not attained by the paper. 
    \end{itemize}

\item {\bf Limitations}
    \item[] Question: Does the paper discuss the limitations of the work performed by the authors?
    \item[] Answer: \answerYes{}
    \item[] Justification: The paper explicitly discusses key limitations in \cref{sec:open_questions}, including the gap between final-policy guarantees and pathwise online regret (requiring stronger diagonal-residual control), the strong adaptive concentrability assumption, and the reliance on a simplified approximate-ERM condition. It also highlights statistical limitations of sequential Rademacher complexity in \cref{subsec:adaptive-performance}.
    \item[] Guidelines:
    \begin{itemize}
        \item The answer \answerNA{} means that the paper has no limitation while the answer \answerNo{} means that the paper has limitations, but those are not discussed in the paper. 
        \item The authors are encouraged to create a separate ``Limitations'' section in their paper.
        \item The paper should point out any strong assumptions and how robust the results are to violations of these assumptions (e.g., independence assumptions, noiseless settings, model well-specification, asymptotic approximations only holding locally). The authors should reflect on how these assumptions might be violated in practice and what the implications would be.
        \item The authors should reflect on the scope of the claims made, e.g., if the approach was only tested on a few datasets or with a few runs. In general, empirical results often depend on implicit assumptions, which should be articulated.
        \item The authors should reflect on the factors that influence the performance of the approach. For example, a facial recognition algorithm may perform poorly when image resolution is low or images are taken in low lighting. Or a speech-to-text system might not be used reliably to provide closed captions for online lectures because it fails to handle technical jargon.
        \item The authors should discuss the computational efficiency of the proposed algorithms and how they scale with dataset size.
        \item If applicable, the authors should discuss possible limitations of their approach to address problems of privacy and fairness.
        \item While the authors might fear that complete honesty about limitations might be used by reviewers as grounds for rejection, a worse outcome might be that reviewers discover limitations that aren't acknowledged in the paper. The authors should use their best judgment and recognize that individual actions in favor of transparency play an important role in developing norms that preserve the integrity of the community. Reviewers will be specifically instructed to not penalize honesty concerning limitations.
    \end{itemize}

\item {\bf Theory assumptions and proofs}
    \item[] Question: For each theoretical result, does the paper provide the full set of assumptions and a complete (and correct) proof?
    \item[] Answer: \answerYes{}
    \item[] Justification: All theoretical results are stated with explicit assumptions and are fully proved in \cref{appsec:proofs}. Supporting technical results are separately formalized and referenced, ensuring that all steps in the analysis are justified.
    \item[] Guidelines:
    \begin{itemize}
        \item The answer \answerNA{} means that the paper does not include theoretical results. 
        \item All the theorems, formulas, and proofs in the paper should be numbered and cross-referenced.
        \item All assumptions should be clearly stated or referenced in the statement of any theorems.
        \item The proofs can either appear in the main paper or the supplemental material, but if they appear in the supplemental material, the authors are encouraged to provide a short proof sketch to provide intuition. 
        \item Inversely, any informal proof provided in the core of the paper should be complemented by formal proofs provided in appendix or supplemental material.
        \item Theorems and Lemmas that the proof relies upon should be properly referenced. 
    \end{itemize}

    \item {\bf Experimental result reproducibility}
    \item[] Question: Does the paper fully disclose all the information needed to reproduce the main experimental results of the paper to the extent that it affects the main claims and/or conclusions of the paper (regardless of whether the code and data are provided or not)?
    \item[] Answer: \answerNA{}
    \item[] Justification: \answerNA{}
    \item[] Guidelines:
    \begin{itemize}
        \item The answer \answerNA{} means that the paper does not include experiments.
        \item If the paper includes experiments, a \answerNo{} answer to this question will not be perceived well by the reviewers: Making the paper reproducible is important, regardless of whether the code and data are provided or not.
        \item If the contribution is a dataset and\slash or model, the authors should describe the steps taken to make their results reproducible or verifiable. 
        \item Depending on the contribution, reproducibility can be accomplished in various ways. For example, if the contribution is a novel architecture, describing the architecture fully might suffice, or if the contribution is a specific model and empirical evaluation, it may be necessary to either make it possible for others to replicate the model with the same dataset, or provide access to the model. In general. releasing code and data is often one good way to accomplish this, but reproducibility can also be provided via detailed instructions for how to replicate the results, access to a hosted model (e.g., in the case of a large language model), releasing of a model checkpoint, or other means that are appropriate to the research performed.
        \item While NeurIPS does not require releasing code, the conference does require all submissions to provide some reasonable avenue for reproducibility, which may depend on the nature of the contribution. For example
        \begin{enumerate}
            \item If the contribution is primarily a new algorithm, the paper should make it clear how to reproduce that algorithm.
            \item If the contribution is primarily a new model architecture, the paper should describe the architecture clearly and fully.
            \item If the contribution is a new model (e.g., a large language model), then there should either be a way to access this model for reproducing the results or a way to reproduce the model (e.g., with an open-source dataset or instructions for how to construct the dataset).
            \item We recognize that reproducibility may be tricky in some cases, in which case authors are welcome to describe the particular way they provide for reproducibility. In the case of closed-source models, it may be that access to the model is limited in some way (e.g., to registered users), but it should be possible for other researchers to have some path to reproducing or verifying the results.
        \end{enumerate}
    \end{itemize}

\item {\bf Open access to data and code}
    \item[] Question: Does the paper provide open access to the data and code, with sufficient instructions to faithfully reproduce the main experimental results, as described in supplemental material?
    \item[] Answer: \answerNA{}
    \item[] Justification: \answerNA{}
    \item[] Guidelines:
    \begin{itemize}
        \item The answer \answerNA{} means that paper does not include experiments requiring code.
        \item Please see the NeurIPS code and data submission guidelines (\url{https://neurips.cc/public/guides/CodeSubmissionPolicy}) for more details.
        \item While we encourage the release of code and data, we understand that this might not be possible, so \answerNo{} is an acceptable answer. Papers cannot be rejected simply for not including code, unless this is central to the contribution (e.g., for a new open-source benchmark).
        \item The instructions should contain the exact command and environment needed to run to reproduce the results. See the NeurIPS code and data submission guidelines (\url{https://neurips.cc/public/guides/CodeSubmissionPolicy}) for more details.
        \item The authors should provide instructions on data access and preparation, including how to access the raw data, preprocessed data, intermediate data, and generated data, etc.
        \item The authors should provide scripts to reproduce all experimental results for the new proposed method and baselines. If only a subset of experiments are reproducible, they should state which ones are omitted from the script and why.
        \item At submission time, to preserve anonymity, the authors should release anonymized versions (if applicable).
        \item Providing as much information as possible in supplemental material (appended to the paper) is recommended, but including URLs to data and code is permitted.
    \end{itemize}

\item {\bf Experimental setting/details}
    \item[] Question: Does the paper specify all the training and test details (e.g., data splits, hyperparameters, how they were chosen, type of optimizer) necessary to understand the results?
    \item[] Answer: \answerNA{}
    \item[] Justification: \answerNA{}
    \item[] Guidelines:
    \begin{itemize}
        \item The answer \answerNA{} means that the paper does not include experiments.
        \item The experimental setting should be presented in the core of the paper to a level of detail that is necessary to appreciate the results and make sense of them.
        \item The full details can be provided either with the code, in appendix, or as supplemental material.
    \end{itemize}

\item {\bf Experiment statistical significance}
    \item[] Question: Does the paper report error bars suitably and correctly defined or other appropriate information about the statistical significance of the experiments?
    \item[] Answer: \answerNA{}
    \item[] Justification: \answerNA{}
    \item[] Guidelines:
    \begin{itemize}
        \item The answer \answerNA{} means that the paper does not include experiments.
        \item The authors should answer \answerYes{} if the results are accompanied by error bars, confidence intervals, or statistical significance tests, at least for the experiments that support the main claims of the paper.
        \item The factors of variability that the error bars are capturing should be clearly stated (for example, train/test split, initialization, random drawing of some parameter, or overall run with given experimental conditions).
        \item The method for calculating the error bars should be explained (closed form formula, call to a library function, bootstrap, etc.)
        \item The assumptions made should be given (e.g., Normally distributed errors).
        \item It should be clear whether the error bar is the standard deviation or the standard error of the mean.
        \item It is OK to report 1-sigma error bars, but one should state it. The authors should preferably report a 2-sigma error bar than state that they have a 96\% CI, if the hypothesis of Normality of errors is not verified.
        \item For asymmetric distributions, the authors should be careful not to show in tables or figures symmetric error bars that would yield results that are out of range (e.g., negative error rates).
        \item If error bars are reported in tables or plots, the authors should explain in the text how they were calculated and reference the corresponding figures or tables in the text.
    \end{itemize}

\item {\bf Experiments compute resources}
    \item[] Question: For each experiment, does the paper provide sufficient information on the computer resources (type of compute workers, memory, time of execution) needed to reproduce the experiments?
    \item[] Answer: \answerNA{}
    \item[] Justification: \answerNA{}
    \item[] Guidelines:
    \begin{itemize}
        \item The answer \answerNA{} means that the paper does not include experiments.
        \item The paper should indicate the type of compute workers CPU or GPU, internal cluster, or cloud provider, including relevant memory and storage.
        \item The paper should provide the amount of compute required for each of the individual experimental runs as well as estimate the total compute. 
        \item The paper should disclose whether the full research project required more compute than the experiments reported in the paper (e.g., preliminary or failed experiments that didn't make it into the paper). 
    \end{itemize}
    
\item {\bf Code of ethics}
    \item[] Question: Does the research conducted in the paper conform, in every respect, with the NeurIPS Code of Ethics \url{https://neurips.cc/public/EthicsGuidelines}?
    \item[] Answer: \answerYes{}
    \item[] Justification: We follow the code. 
    \item[] Guidelines:
    \begin{itemize}
        \item The answer \answerNA{} means that the authors have not reviewed the NeurIPS Code of Ethics.
        \item If the authors answer \answerNo, they should explain the special circumstances that require a deviation from the Code of Ethics.
        \item The authors should make sure to preserve anonymity (e.g., if there is a special consideration due to laws or regulations in their jurisdiction).
    \end{itemize}

\item {\bf Broader impacts}
    \item[] Question: Does the paper discuss both potential positive societal impacts and negative societal impacts of the work performed?
    \item[] Answer: \answerNA{} %
    \item[] Justification: The paper is purely theoretical.
    \item[] Guidelines:
    \begin{itemize}
        \item The answer \answerNA{} means that there is no societal impact of the work performed.
        \item If the authors answer \answerNA{} or \answerNo, they should explain why their work has no societal impact or why the paper does not address societal impact.
        \item Examples of negative societal impacts include potential malicious or unintended uses (e.g., disinformation, generating fake profiles, surveillance), fairness considerations (e.g., deployment of technologies that could make decisions that unfairly impact specific groups), privacy considerations, and security considerations.
        \item The conference expects that many papers will be foundational research and not tied to particular applications, let alone deployments. However, if there is a direct path to any negative applications, the authors should point it out. For example, it is legitimate to point out that an improvement in the quality of generative models could be used to generate Deepfakes for disinformation. On the other hand, it is not needed to point out that a generic algorithm for optimizing neural networks could enable people to train models that generate Deepfakes faster.
        \item The authors should consider possible harms that could arise when the technology is being used as intended and functioning correctly, harms that could arise when the technology is being used as intended but gives incorrect results, and harms following from (intentional or unintentional) misuse of the technology.
        \item If there are negative societal impacts, the authors could also discuss possible mitigation strategies (e.g., gated release of models, providing defenses in addition to attacks, mechanisms for monitoring misuse, mechanisms to monitor how a system learns from feedback over time, improving the efficiency and accessibility of ML).
    \end{itemize}
    
\item {\bf Safeguards}
    \item[] Question: Does the paper describe safeguards that have been put in place for responsible release of data or models that have a high risk for misuse (e.g., pre-trained language models, image generators, or scraped datasets)?
    \item[] Answer: \answerNA{}
    \item[] Justification: \answerNA{}
    \item[] Guidelines:
    \begin{itemize}
        \item The answer \answerNA{} means that the paper poses no such risks.
        \item Released models that have a high risk for misuse or dual-use should be released with necessary safeguards to allow for controlled use of the model, for example by requiring that users adhere to usage guidelines or restrictions to access the model or implementing safety filters. 
        \item Datasets that have been scraped from the Internet could pose safety risks. The authors should describe how they avoided releasing unsafe images.
        \item We recognize that providing effective safeguards is challenging, and many papers do not require this, but we encourage authors to take this into account and make a best faith effort.
    \end{itemize}

\item {\bf Licenses for existing assets}
    \item[] Question: Are the creators or original owners of assets (e.g., code, data, models), used in the paper, properly credited and are the license and terms of use explicitly mentioned and properly respected?
    \item[] Answer: \answerNA{}
    \item[] Justification: \answerNA{}
    \item[] Guidelines:
    \begin{itemize}
        \item The answer \answerNA{} means that the paper does not use existing assets.
        \item The authors should cite the original paper that produced the code package or dataset.
        \item The authors should state which version of the asset is used and, if possible, include a URL.
        \item The name of the license (e.g., CC-BY 4.0) should be included for each asset.
        \item For scraped data from a particular source (e.g., website), the copyright and terms of service of that source should be provided.
        \item If assets are released, the license, copyright information, and terms of use in the package should be provided. For popular datasets, \url{paperswithcode.com/datasets} has curated licenses for some datasets. Their licensing guide can help determine the license of a dataset.
        \item For existing datasets that are re-packaged, both the original license and the license of the derived asset (if it has changed) should be provided.
        \item If this information is not available online, the authors are encouraged to reach out to the asset's creators.
    \end{itemize}

\item {\bf New assets}
    \item[] Question: Are new assets introduced in the paper well documented and is the documentation provided alongside the assets?
    \item[] Answer: \answerNA{}
    \item[] Justification: \answerNA{}
    \item[] Guidelines:
    \begin{itemize}
        \item The answer \answerNA{} means that the paper does not release new assets.
        \item Researchers should communicate the details of the dataset\slash code\slash model as part of their submissions via structured templates. This includes details about training, license, limitations, etc. 
        \item The paper should discuss whether and how consent was obtained from people whose asset is used.
        \item At submission time, remember to anonymize your assets (if applicable). You can either create an anonymized URL or include an anonymized zip file.
    \end{itemize}

\item {\bf Crowdsourcing and research with human subjects}
    \item[] Question: For crowdsourcing experiments and research with human subjects, does the paper include the full text of instructions given to participants and screenshots, if applicable, as well as details about compensation (if any)? 
    \item[] Answer: \answerNA{}
    \item[] Justification: \answerNA{}
    \item[] Guidelines:
    \begin{itemize}
        \item The answer \answerNA{} means that the paper does not involve crowdsourcing nor research with human subjects.
        \item Including this information in the supplemental material is fine, but if the main contribution of the paper involves human subjects, then as much detail as possible should be included in the main paper. 
        \item According to the NeurIPS Code of Ethics, workers involved in data collection, curation, or other labor should be paid at least the minimum wage in the country of the data collector. 
    \end{itemize}

\item {\bf Institutional review board (IRB) approvals or equivalent for research with human subjects}
    \item[] Question: Does the paper describe potential risks incurred by study participants, whether such risks were disclosed to the subjects, and whether Institutional Review Board (IRB) approvals (or an equivalent approval/review based on the requirements of your country or institution) were obtained?
    \item[] Answer: \answerNA{}
    \item[] Justification: \answerNA{}
    \item[] Guidelines:
    \begin{itemize}
        \item The answer \answerNA{} means that the paper does not involve crowdsourcing nor research with human subjects.
        \item Depending on the country in which research is conducted, IRB approval (or equivalent) may be required for any human subjects research. If you obtained IRB approval, you should clearly state this in the paper. 
        \item We recognize that the procedures for this may vary significantly between institutions and locations, and we expect authors to adhere to the NeurIPS Code of Ethics and the guidelines for their institution. 
        \item For initial submissions, do not include any information that would break anonymity (if applicable), such as the institution conducting the review.
    \end{itemize}

\item {\bf Declaration of LLM usage}
    \item[] Question: Does the paper describe the usage of LLMs if it is an important, original, or non-standard component of the core methods in this research? Note that if the LLM is used only for writing, editing, or formatting purposes and does \emph{not} impact the core methodology, scientific rigor, or originality of the research, declaration is not required.
    \item[] Answer: \answerNA{} %
    \item[] Justification: LLMs were only used for grammar checks, formatting and editing the paper, and consistency checks. 
    \item[] Guidelines:
    \begin{itemize}
        \item The answer \answerNA{} means that the core method development in this research does not involve LLMs as any important, original, or non-standard components.
        \item Please refer to our LLM policy in the NeurIPS handbook for what should or should not be described.
    \end{itemize}

\end{enumerate}

\end{document}